\documentclass{article} 
\usepackage{iclr2026_conference,times}

\usepackage{hyperref}
\usepackage{url}
\usepackage{dsfont}
\usepackage{amsmath,amssymb,amsfonts,amsthm}
\usepackage{graphicx}
\usepackage{todonotes}
\usepackage{marginnote}
\usepackage{booktabs}
\usepackage{multirow}
\usepackage{wrapfig}
\usepackage{xspace}
\usepackage{adjustbox}
\usepackage{xcolor}
\usepackage{tabularx}
\usepackage{colortbl}
\usepackage[most]{tcolorbox}
\usepackage{amsthm}
\usepackage[hang,flushmargin]{footmisc}
\setlength{\footnotemargin}{0.5em}

\newtheorem*{proposition*}{Proposition}
\newtheorem{theorem}{Theorem}
\newtheorem*{theorem*}{Theorem}
\newcommand{\multirowcell}[1]{\begin{tabular}[c]{@{}c@{}}#1\end{tabular}}
\newcommand{\HC}{\mathcal{H}}

\newcommand{\DC}{\mathcal{D}}

\DeclareMathOperator{\Var}{Var}
\newcommand{\EE}{\mathbb{E}}


\title{Don't Throw Away Your Beams: \\ Improving Consistency-based Uncertainties \\ in LLMs via Beam Search}


\author{Ekaterina Fadeeva\textsuperscript{1}\\
\And
Maiya Goloburda\textsuperscript{2}\\
\And
Aleksandr Rubashevskii\textsuperscript{2}\\
\And
Roman Vashurin\textsuperscript{2}\\
\And
Artem Shelmanov\textsuperscript{2}\\
\And
Preslav Nakov\textsuperscript{2}\\
\And
Mrinmaya Sachan\textsuperscript{1}\\
\And
Maxim Panov\textsuperscript{2}\\
}

%

\newcommand{\llama}{Llama 3.1 8B\xspace}
\newcommand{\gemma}{Gemma 3 4B\xspace}
\newcommand{\qwen}{Qwen 3 8B\xspace}

\newcommand{\llamab}{Llama 3.1 8B base\xspace}
\newcommand{\gemmab}{Gemma 3 4B base\xspace}
\newcommand{\qwenb}{Qwen 3 8B base\xspace}

\newcommand{\llamait}{Llama 3.1 8B instruct\xspace}
\newcommand{\gemmait}{Gemma 3 4B instruct\xspace}
\newcommand{\qwenit}{Qwen 3 8B instruct\xspace}

\def\BC{\mathcal{B}}
\def\bv{\mathbf{b}}
\def\xv{\mathbf{x}}
\def\yv{\mathbf{y}}
\def\tv{\mathbf{t}}
\def\uv{\mathbf{u}}
\def\vv{\mathbf{v}}
\def\EE{\mathbb{E}}

\iclrfinalcopy 
\begin{document}
\definecolor{TodoColor}{rgb}{1,0.7,0.6}
\definecolor{aquamarine}{rgb}{0.5, 1.0, 0.83}
\definecolor{some_blue}{rgb}{0.4, 0.4, 1.0}
\newcommand{\todonote}[3][]{\todo[color=#2,size=\scriptsize,fancyline,caption={},#1]{#3}}
\newcommand{\todox}[2][]{\todonote[#1]{TodoColor}{\textbf{TODO:} #2}}
\newcommand{\mrinmaya}[2][]{\todonote[#1]{pink}{\textbf{Mrinmaya:} #2}}
\newcommand{\rediska}[2][]{\todonote[#1]{aquamarine}{\textbf{rediska:} #2}}
\newcommand{\maxim}[2][]{\todonote[#1]{some_blue}{\textbf{Maxim:} #2}}
\newcommand{\Mrinmaya}[2][]{\mrinmaya[inline,#1]{#2}}

\newcommand{\TODO}[2][]{\todox[inline,#1]{#2}}
\newcommand{\TODOMARK}{\textcolor{black}{\sethlcolor{TodoColor} \small \hl{\textbf{TODO}}}\xspace}
\newcommand{\CITEME}{\textcolor{black}{\small \hl{\textbf{CITEME}}}\xspace}

\newcommand\comet[2][]{Comet$^{#2}_\textrm{#1}$\xspace}
\newcommand{\hrefEmail}[2]{\href{mailto:#1}{\color{black}{#2}}}
\newcommand{\whitezero}{\textcolor{white}{0}}
\newcommand{\offsetminus}{\hspace{-1.2mm}-}

\newcommand{\hlc}[2][yellow]{{%
    \colorlet{foo}{#1}%
    \sethlcolor{foo}\hl{#2}}%
}
\definecolor{FindingsColor}{gray}{0.85}
\newcommand{\hlfinding}[1]{\hlc[FindingsColor]{#1}}

\makeatletter\def\Hy@Warning#1{}\makeatother
\let\svthefootnote\thefootnote
\newcommand\blankfootnote[1]{%
  \let\thefootnote\relax\footnotetext{#1}%
  \let\thefootnote\svthefootnote%
}


\newcommand{\legend}[3]{
\null\hspace{3mm}
\makebox[22mm][l]{
    \textcolor[HTML]{#1}{
    \rule[3pt]{10pt}{1.5pt}
    \hspace{-13pt}
    \raisebox{0.5pt}{\scalebox{1.5}{$\bullet$}}}
    #2
}
\makebox[8mm][l]{#3}
}

\newcommand{\legendShort}[2]{
\null\hspace{1mm}
\makebox[21mm][l]{
    \textcolor[HTML]{#1}{
    \rule[3pt]{10pt}{1.5pt}
    \hspace{-13pt}
    \raisebox{0.5pt}{\scalebox{1.5}{$\bullet$}}}
    #2
}
}

\newcommand\anonymized{\texttt{\bf[anonymized]}\xspace}

\newtcolorbox{todobox}[1][]{colback=orange!10,
  colframe=orange!70!black,
  title=TODO,
  fonttitle=\bfseries,
  coltitle=black,
  #1}
  
\newcommand\freefootnote[1]{%
  \let\thefootnote\relax%
  \footnotetext{#1}%
  \let\thefootnote\svthefootnote%
}

\maketitle

\vspace{-1.5em}
\begin{center}
    \hfill \textsuperscript{1} ETH Zurich \hfill \textsuperscript{2} MBZUAI \hfill\null
\end{center}
\vspace{0.5em}

\begin{abstract}
  Consistency-based methods have emerged as an effective approach to uncertainty quantification (UQ) in large language models. These methods typically rely on several generations obtained via multinomial sampling, measuring their agreement level. However, in short-form QA, multinomial sampling is prone to producing duplicates due to peaked distributions, and its stochasticity introduces considerable variance in uncertainty estimates across runs. We introduce a new family of methods that employ beam search to generate candidates for consistency-based UQ, yielding improved performance and reduced variance compared to multinomial sampling. We also provide a theoretical lower bound on the beam set probability mass under which beam search achieves a smaller error than multinomial sampling. We empirically evaluate our approach on six QA datasets and find that its consistent improvements over multinomial sampling lead to state-of-the-art UQ performance. 
\end{abstract}

\freefootnote{Correspondence to: \texttt{efadeeva@ethz.ch}, \texttt{maxim.panov@mbzuai.ac.ae}}
\freefootnote{Code available at: \url{https://github.com/IINemo/lm-polygraph/tree/beam-uncertainty}}


\section{Introduction}
\label{introduction}

  \begin{wrapfigure}{r}{0.5\textwidth}
    \centering
    \includegraphics[width=\linewidth, trim=0 2em 0 4em]{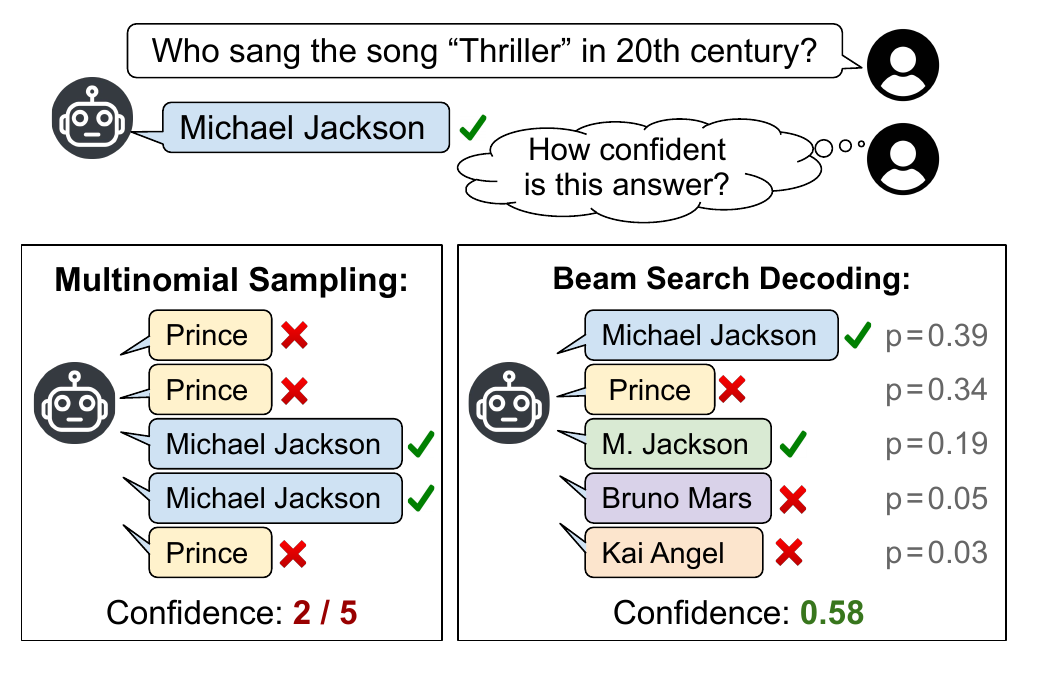}
    \caption{Beam Search vs Multinomial Sampling. Sampling produces multiple identical generations resulting in noisy confidence estimate, while beam search covers top answers from LLM distribution resulting in a better confidence score.}
    \label{fig:sampling_vs_beamsearch}
  \end{wrapfigure}
  
  Today, large language models (LLMs) are increasingly being adapted in various safety-critical domains, including medicine~\citep{medicine}, education~\citep{education}, and law~\citep{law}. This rapid adoption has led to a growing body of work focused on the assessment of the quality and reliability of LLM outputs. An important research direction in this field is Uncertainty Quantification (UQ; \citealp{Xiao2019,Baan2023UncertaintyIN, xia-etal-2025-survey}), which measures the LLM's confidence in their responses.

  UQ methods can be categorized into several distinct groups. These include information-based methods that rely on token likelihoods produced by the LLM~\citep{fomicheva-etal-2020-unsupervised}; verbalization approaches that prompt models to provide a confidence score~\citep{tian-etal-2023-just}; density-based methods that utilize embeddings~\citep{yoo-etal-2022-detection}; and last but not least, consistency-based measures that evaluate agreement between sampled outputs~\citep{lin2023generating}.

  Consistency-based UQ methods are of particular interest due to not only their strong performance but also their applicability to black-box settings~\citep{vashurin-etal-2025-benchmarking}. 
  
  Moreover, in white-box settings too, it was shown that combining information-based and consistency-based methods yields state-of-the-art performance for a variety of tasks~\citep{kuhn2023semantic,duan-etal-2024-shifting}. A key component of these methods is sampling, which serves as a practical means of approximating the full probability space of all potential model outputs.

  Most existing UQ approaches rely on multinomial sampling from the model's output distribution. However, in short-form QA, multinomial sampling is prone to producing similar or even identical generations, due to its bias towards higher-probability tokens during decoding; see Figure~\ref{fig:sampling_vs_beamsearch}. Furthermore, since each run produces a different set of candidate outputs, sample-based uncertainty estimates exhibit high variance, undermining their robustness. This limits their effectiveness as a representation of the full output space, especially since, for computational efficiency, studies typically rely on a small number of samples. 

  To address this problem, we propose computing output consistency based on samples generated using beam search. Beam search facilitates the exploration of alternative decoding paths, which in turn allows one to generate distinct candidate outputs that better capture the model's output space in short-form QA.
  Our approach includes weighting beam search outputs 
  by their probabilities rather than uniformly, thereby preventing the overrepresentation of low-probability outputs.
  Particularly, when beam search is employed for decoding, uncertainty estimates are obtained at essentially no additional cost.
  We show that replacing multinomial sampled outputs with those generated via beam search improves the robustness and accuracy of existing consistency-based methods, as well as hybrid methods relying on both output consistency and token likelihoods.

  Our main \textbf{contributions} are as follows.
  \begin{itemize}
    \item We identify key limitations of existing consistency-based uncertainty quantification methods based on multinomial sampling; see Section~\ref{sec:background}.

    \item We propose a new family of UQ methods that employ an importance-weighted estimator of consistency-based uncertainty with beam search output candidates; see Section~\ref{sec:methodology}.

    \item We provide a distribution-free sufficient condition ensuring that the beam-weighted estimator achieves a lower error than the expected error of the multinomial sampler; see Section~\ref{subsec:theor_analysis}.

    \item We show that applying a beam search-based estimator to existing consistency-based UQ approaches improves their performance on short-form QA tasks, achieving state-of-the-art results; see Section~\ref{sec:experiments}.
  \end{itemize}


\section{Background and Motivation}
\label{sec:background}

\subsection{Language Model Decoding}
  Autoregressive LLMs produce text sequentially, generating one token at a time. At each step $i$, the model samples a token $y_i \sim p(\cdot \mid \yv_{<i}, \xv)$, where $\yv_{<i}$ denotes the sequence of previously generated tokens. The probability of generating an output sequence $\yv$ is:
  \begin{equation}
    p(\yv \mid \xv) = \prod_{i=1}^{|\yv|} p(y_i \mid \yv_{<i}, \xv).
  \label{eq:autoreg}
  \end{equation}
  At each step, the model outputs a probability distribution over the entire vocabulary $\mathcal{V}$ conditioned on the prompt $\xv$ and the partial sequence $\yv_{<i}$.

\paragraph{Decoding strategies.}
  Since the model defines a probability distribution, a concrete output must be obtained at inference time by applying a decoding strategy. Common decoding strategies include: (i) greedy decoding that selects maximum probability tokens at each step; (ii) multinomial sampling where tokens are drawn according to $p(y_i \mid \yv_{<i}, \xv)$; and (iii) beam search, which maintains the top-$k$ most likely partial sequences at each step. Several other variants of decoding approaches have been proposed, such as top-$p$ nucleus sampling or temperature scaling~\citep{Holtzman2020The, Vijayakumar2018}. Each decoding strategy offers different trade-offs between output quality and diversity.

\subsection{Uncertainty Quantification for LLMs}

  The objective of uncertainty quantification is to measure the level of uncertainty introduced by LLM when generating output sequence $\yv_*$ conditioned on input sequence $\xv$, denoted by $U(\yv_* \mid \xv)$. 
  Existing approaches to UQ can be broadly categorized into three main groups.

  \begin{wrapfigure}[21]{r}{0.45\textwidth} 
    \centering
    \includegraphics[width=\linewidth, trim=0 1em 0 1em]{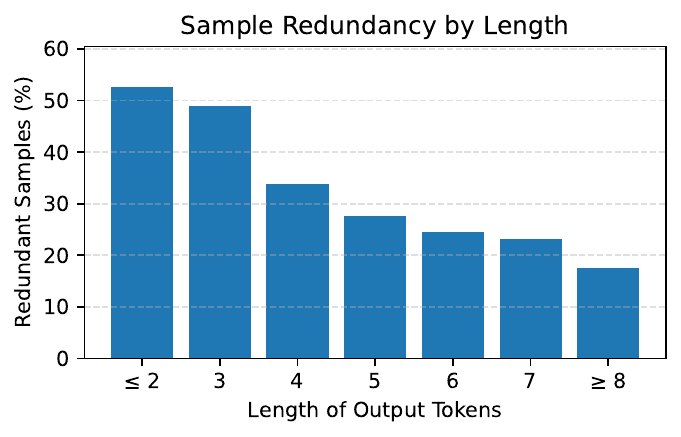}
    \caption{Mean percentage of redundant samples (i.e., outputs already seen among earlier generations) as a function of greedy output length. Results were obtained from 2,000 questions from the TriviaQA dataset using the \gemmab model and 10 candidate generations. Redundancy is especially high for short answers, leading to wasted computation.}
    \label{fig:redundancy}
  \end{wrapfigure}
  
  \textit{Information-based} methods rely on a single forward pass of the model and compute statistics over the token-level probability distributions to quantify uncertainty. 
  Examples include Sequence Probability, Mean Token Entropy, Perplexity~\citep{fomicheva-etal-2020-unsupervised}, and CCP~\citep{fadeeva-etal-2024-fact}. 

  \textit{Reflexive} methods query the model directly about its confidence in a generated answer using specially designed prompts. 
  A representative example is \textit{P(True)}~\citep{kadavath2022language}, which measures the probability that the model outputs ``True'' when asked whether its generated answer $\yv_*$ is correct.

  \textit{Sampling-based} methods draw multiple samples from the model's output distribution and evaluate their semantic or lexical similarity to assess uncertainty. 
  Lexical Similarity~\citep{fomicheva-etal-2020-unsupervised} computes mean pairwise similarity between generated texts; other examples include Semantic Entropy~\citep{kuhn2023semantic}, SAR~\citep{duan-etal-2024-shifting}, and black-box uncertainty measures from~\citep{lin2023generating}.

\paragraph{Consistency-based UQ methods.}
  A notable subset of sampling-based methods is \textit{consistency-based UQ}~\citep{vashurin2025uncertainty}. 
  These methods estimate uncertainty \emph{with respect to a particular generated output} $\yv_* \sim p(\cdot \mid \xv)$, rather than the overall uncertainty of the model's predictive distribution for the input $\xv$. 
  This distinction makes consistency-based UQ particularly suited for evaluating confidence in a specific prediction rather than overall model uncertainty, and \citet{vashurin2025uncertainty} empirically demonstrate that such methods outperform other sampling-based approaches in practice.

  Let us consider the most straightforward consistency–based method for predictive uncertainty quantification: measuring how semantically different alternative generations are from the produced answer $\yv_*$. We refer to this score as \emph{Dissimilarity} and formalize it as the expected semantic dissimilarity between the produced answer $\yv_*$ and \emph{all} potential alternatives drawn from the model:
  \begin{equation}
    U_D(\yv_* \mid \xv) = \EE_{\yv \sim p(\cdot \mid \xv)}\bigl[1 - s(\yv, \yv_*)\bigr].
  \label{eq:uncertainty_def}
  \end{equation}
  Here, $s(\yv', \yv'') \in [0, 1]$ is a function that measures semantic similarity between two generations \(\yv'\) and \(\yv''\). A higher value of $U_D(\yv_* \mid \xv)$ indicates lower consistency between the chosen answer and alternative candidate outputs, and thus reflects greater predictive uncertainty. 

  The corresponding Monte Carlo estimator introduced by~\citep{lin2023generating} draws $M$ i.i.d.\ samples $\yv^{(1)}, \dots, \yv^{(M)} \sim p(\cdot \mid \xv)$ and computes uncertainty in the following way:
  \begin{equation}
    \widehat{U}_{D}^{MC}(\yv_* \mid \xv) = \frac{1}{M}\sum_{i=1}^M \bigl(1 - s(\yv^{(i)}, \yv_*)\bigr).
  \label{eq:mc_estimator}
  \end{equation}

\paragraph{Challenges of consistency-based UQ methods.}
  
  A natural intuition is that, for consistency-based methods, samples should be generated in a distinct, high-probability, and stable manner. Most existing methods use multinomial sampling, which, especially for shorter generations and small sample sizes, does not satisfy these criteria.  

  Figure~\ref{fig:redundancy} shows the effect of multinomial sampling on the percentage of duplicates depending on the length of generations.
  The resulting samples contain many duplicates, with the issue being particularly pronounced for shorter generations, where 30–50\% of the outputs are duplicates. 
  
  This not only contributes to wasted computation, but also leads to high variance estimates. Moreover, drawing $M$ full generations solely for uncertainty estimation can be costly. 


  Thus, while multinomial sampling is widely used, it does not best serve the goals of consistency-based uncertainty estimation.


\section{Uncertainty Quantification based on Consistency of Beam Search Candidates}
\label{sec:methodology}
  To address the problems outlined above, we propose to utilize an alternative decoding strategy for generating candidate outputs: beam search. Beam search (i) guarantees distinct candidate outputs, (ii) reduces variance (see Section~\ref{subsec:theor_analysis}) and (iii) provides uncertainty estimates essentially ``for free'' as the beam already provides a distribution over candidate outputs.

\subsection{Replacing Multinomial Sampling}
  A simple way to approximate dissimilarity from beam-generated candidates would be to reuse equation~\eqref{eq:mc_estimator}, treating the beam outputs as if they were drawn uniformly. While this offers a plausible alternative, treating the candidates produced by beam search in a uniform manner would overemphasize lower-probability outputs. To better reflect the model distribution while avoiding repeated multinomial draws, we form a probability-weighted estimator over the beam set.

  For this purpose, we use beam search with width $M$ to obtain distinct candidates $\BC_M(\xv)=\{\bv^{(1)}, \dots, \bv^{(M)}\}$ and their sequence probabilities $\{p(\bv^{(i)} \mid \xv)\}_{i=1}^M$. To perform an estimation of $U_D(\yv_* \mid \xv)$ in equation~\eqref{eq:uncertainty_def} with the help of samples $b^{(i)}$, one needs to perform importance weighting. Thus, we define the restricted (top-$M$) normalized masses $w_i$ as:
  \begin{equation}
    w_i = \frac{p(\bv^{(i)} \mid \xv)}{\sum_{j=1}^{M} p(\bv^{(j)} \mid \xv)}, \qquad i = 1, \dots, M.
  \label{eq:restricted-mass}
  \end{equation}

  The resulting importance-weighted estimator of equation~\eqref{eq:uncertainty_def} is
  \begin{equation}
    \widehat{U}_{D}^{b}(\yv_* \mid \xv) = \sum_{i=1}^M w_i \bigl(1 - s(\bv^{(i)}, \yv_*)\bigr).
  \label{eq:beam-estimator}
  \end{equation}
  This top-$M$ truncation introduces a small bias relative to full multinomial sampling but typically reduces variance and duplication on peaked distributions, yielding more stable estimates per unit budget. In the next section we are going to explore the benefits of beam search-based estimator $\widehat{U}_{D}^{b}(\yv_* \mid \xv)$ from a theoretical perspective.

\subsection{Theoretical Analysis}
\label{subsec:theor_analysis}

  We compare the multinomial Monte Carlo estimator $\widehat{U}_D^{MC}$~\eqref{eq:mc_estimator} with the beam-weighted estimator $\widehat{U}_D^{b}$~\eqref{eq:beam-estimator} for the dissimilarity $U_D$ defined in equation~\eqref{eq:uncertainty_def}.

  \medskip
  \begin{theorem}[Comparison condition for beam-weighted and Monte Carlo estimators]
  \,\\
  \label{theorem:comp_condition}
    Let $\BC_M(\xv)=\{\bv^{(1)},\dots,\bv^{(M)}\}$ be the beam set, $m_{\BC} = \sum_{i=1}^M p(\bv^{(i)}\mid\xv)$ be its total probability mass, and define $\mu_{\BC}$ and $\mu_{\overline{\BC}}$ as dissimilarity inside and outside the beam set $\BC_M$ correspondingly:
    \begin{equation*}
      \mu_{\BC} = \EE_{\yv \sim p(\cdot \mid \xv)} \left[1 - s(\yv, \yv_*) \mid \yv \in \BC_M(\xv)\right],
      \qquad
      \mu_{\overline{\BC}} = \EE_{\yv \sim p(\cdot \mid \xv)} \left[ 1 - s(\yv, \yv_*) \mid \yv \notin \BC_M(\xv)\right].
    \end{equation*}

    Then the beam-weighted estimator $\widehat{U}_D^{b}$ achieves smaller mean-squared error than the Monte Carlo estimator $\widehat{U}_D^{MC}$ whenever
    \begin{equation}
      (1 - m_{\BC}) \bigl|\mu_{\BC} - \mu_{\overline{\BC}}\bigr| < \sigma / \sqrt{M},
      \label{eq:general_condition}
    \end{equation}
    where $\sigma^2 = \Var_{\yv \sim p(\cdot \mid \xv)} (1 - s(\yv, \yv_*))$.
    The corresponding distribution-free sufficient condition is
    \begin{equation}
      m_{\BC} > 1 - \tfrac{1}{2\sqrt{M}}.
      \label{eq:distribution_free_condition}
    \end{equation}
    \end{theorem}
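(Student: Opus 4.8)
The plan is to put both estimators' mean-squared errors in closed form and then compare them directly. The first observation is that beam search is deterministic, so $\widehat{U}_D^b(\yv_*\mid\xv)$ is a fixed number given $\xv$ and $\yv_*$; its MSE is therefore exactly its squared bias. To compute that bias I would note that the normalization in~\eqref{eq:restricted-mass} makes the weighted average in~\eqref{eq:beam-estimator} coincide with the conditional expectation of $1-s(\yv,\yv_*)$ given $\yv\in\BC_M(\xv)$, i.e.\ $\widehat{U}_D^b=\mu_{\BC}$. Applying the law of total expectation to the event $\{\yv\in\BC_M(\xv)\}$, whose probability is $m_{\BC}$, gives $U_D(\yv_*\mid\xv)=m_{\BC}\mu_{\BC}+(1-m_{\BC})\mu_{\overline{\BC}}$, so the bias equals $\widehat{U}_D^b-U_D=(1-m_{\BC})(\mu_{\BC}-\mu_{\overline{\BC}})$ and hence $\mathrm{MSE}(\widehat{U}_D^b)=(1-m_{\BC})^2(\mu_{\BC}-\mu_{\overline{\BC}})^2$.

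Next I would handle the Monte Carlo estimator. Since~\eqref{eq:mc_estimator} averages $M$ i.i.d.\ copies of $1-s(\yv,\yv_*)$, it is unbiased for $U_D$ with variance $\sigma^2/M$, so $\mathrm{MSE}(\widehat{U}_D^{MC})=\sigma^2/M$. Comparing the two closed forms, $\mathrm{MSE}(\widehat{U}_D^b)<\mathrm{MSE}(\widehat{U}_D^{MC})$ is equivalent to $(1-m_{\BC})^2(\mu_{\BC}-\mu_{\overline{\BC}})^2<\sigma^2/M$, and taking square roots of both (nonnegative) sides yields exactly the comparison condition~\eqref{eq:general_condition}.

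For the distribution-free corollary I would eliminate the three unknowns $\mu_{\BC},\mu_{\overline{\BC}},\sigma$ from~\eqref{eq:general_condition} using only $s\in[0,1]$. Because $1-s(\yv,\yv_*)\in[0,1]$, its conditional means also lie in $[0,1]$, so $|\mu_{\BC}-\mu_{\overline{\BC}}|\le 1$ and the left-hand side of~\eqref{eq:general_condition} is at most $1-m_{\BC}$. It then remains to bound $\sigma$ from below in a distribution-free way and to substitute the two bounds into~\eqref{eq:general_condition}; after simplification the requirement collapses to a single threshold on $m_{\BC}$, namely~\eqref{eq:distribution_free_condition}. A natural tool for the step on $\sigma$ is the law of total variance, which gives $\sigma^2\ge m_{\BC}(1-m_{\BC})(\mu_{\BC}-\mu_{\overline{\BC}})^2$, tying the global spread to the gap between the conditional means.

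The bias/variance bookkeeping in the first two steps is routine. The main obstacle is the last step: turning~\eqref{eq:general_condition} into a clean, constant-only sufficient condition requires controlling $|\mu_{\BC}-\mu_{\overline{\BC}}|$ and $\sigma$ simultaneously with no assumption on $p(\cdot\mid\xv)$ or on $s$, and one must pick the bounding inequalities so that the algebra lands precisely on the threshold in~\eqref{eq:distribution_free_condition}. One also has to treat the degenerate case $\mu_{\BC}=\mu_{\overline{\BC}}$ (equivalently $\sigma=0$), where both estimators attain zero MSE and the strict inequality does not hold, separately from the generic case.
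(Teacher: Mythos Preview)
Your derivation of~\eqref{eq:general_condition} matches the paper's exactly: $\widehat{U}_D^b=\mu_{\BC}$ by the normalization in~\eqref{eq:restricted-mass}, the law of total expectation gives $U_D=m_{\BC}\mu_{\BC}+(1-m_{\BC})\mu_{\overline{\BC}}$, so the beam MSE is $(1-m_{\BC})^2(\mu_{\BC}-\mu_{\overline{\BC}})^2$, while the (unbiased) MC estimator has MSE $\sigma^2/M$.

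The gap is in the distribution-free step. You propose to \emph{lower}-bound $\sigma$ via total variance and expect the algebra to land on $m_{\BC}>1-\tfrac{1}{2\sqrt{M}}$. It will not: substituting $\sigma^2\ge m_{\BC}(1-m_{\BC})(\mu_{\BC}-\mu_{\overline{\BC}})^2$ into the MSE comparison and cancelling the common factor yields $(1-m_{\BC})<m_{\BC}/M$, i.e.\ $m_{\BC}>M/(M+1)$, a different and strictly stronger threshold. The paper does not lower-bound $\sigma$ at all; it uses Popoviciu's inequality $\sigma^2\le 1/4$ together with $|\mu_{\BC}-\mu_{\overline{\BC}}|\le 1$, so that the worst-case beam MSE is at most $(1-m_{\BC})^2$ and the worst-case MC MSE is at most $1/(4M)$, and~\eqref{eq:distribution_free_condition} is precisely $(1-m_{\BC})^2<1/(4M)$. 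In other words, the paper's ``distribution-free sufficient condition'' is a comparison of worst-case error bounds rather than a guarantee that~\eqref{eq:general_condition} holds for every fixed distribution; your instinct that a lower bound on $\sigma$ would be needed for the latter is sound, but following it does not reproduce the stated threshold.

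A minor slip: $\mu_{\BC}=\mu_{\overline{\BC}}$ is not equivalent to $\sigma=0$; the implication runs only from $\sigma=0$ to equal conditional means. When $\mu_{\BC}=\mu_{\overline{\BC}}$ but $\sigma>0$, the beam estimator has zero error and strictly beats MC, so no separate handling is needed there; the genuinely degenerate case is $\sigma=0$, where both MSEs vanish.
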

    \begin{proof}
    The Monte Carlo estimator averages $M$ i.i.d.\ samples $\yv^{(i)} \sim p(\cdot \mid \xv)$, so it is unbiased with
    $\EE[\widehat{U}_D^{MC}] = U_D(\yv_* \mid \xv)$ and
    $\mathrm{MSE}(\widehat{U}_D^{MC}) = \Var(\widehat{U}_D^{MC}) = \sigma^2/M$.
    By Popoviciu's inequality, any random variable supported on $[0,1]$ has variance at most $1/4$, hence $\sigma^2 \le 1/4$.

    By the law of total expectation, the true dissimilarity $U_D$ decomposes as:
    \begin{equation*}
      U_D(\yv_* \mid \xv) = m_{\BC}\mu_{\BC} + (1-m_{\BC})\mu_{\overline{\BC}},
      \qquad
      \widehat{U}_D^{b} = \mu_{\BC},
    \end{equation*}
    so squared error of the beam-weighted estimator $\widehat{U}_D^{b}$ is deterministic:
    \begin{equation*}
      \mathrm{SE}(\widehat{U}_D^{b}) = \bigl(\widehat{U}_D^{b} - U_D\bigr)^2 = (1 - m_{\BC})^2 \bigl(\mu_{\BC} - \mu_{\overline{\BC}}\bigr)^2.
    \end{equation*}
    Beam-weighted estimation is therefore more accurate whenever
    \begin{equation*}
      (1 - m_{\BC})^2 \bigl(\mu_{\BC} - \mu_{\overline{\BC}}\bigr)^2 < \sigma^2 / M,
    \end{equation*}
    which yields the stated condition~\eqref{eq:general_condition}.
    A distribution-free bound~\eqref{eq:distribution_free_condition} follows from $\bigl|\mu_{\BC} - \mu_{\overline{\BC}}\bigr| \le 1$ and $\sigma^2 \le 1/4$.
  \end{proof}

  \begin{wrapfigure}[14]{r}{0.5\textwidth}
    \centering
    \includegraphics[width=\linewidth, trim=0 2.5em 0 3.5em]{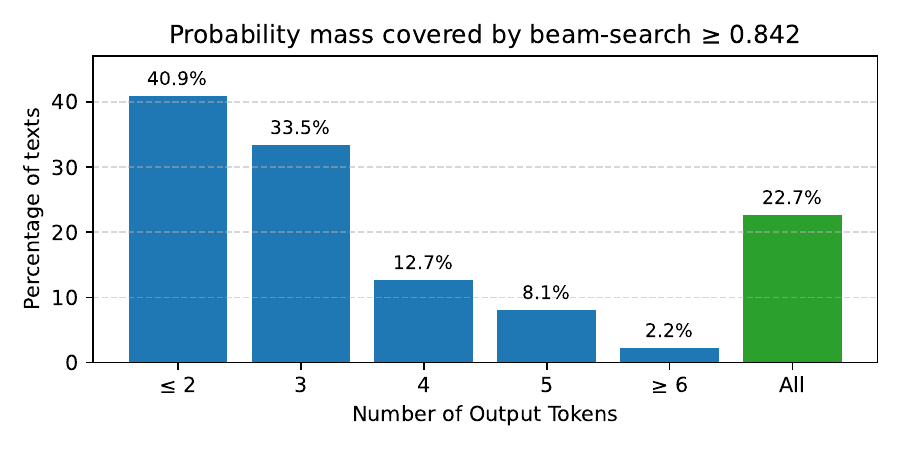}
    \caption{Percentage of texts meeting the sufficient condition (Theorem~\ref{theorem:comp_condition}). Results are based on 2,000 TriviaQA questions, \gemmab and $M=10$. The green ``All'' bar shows the overall percentage across all lengths.}
    \label{fig:beam_mass_threshold}
  \end{wrapfigure}

  From Theorem~\ref{theorem:comp_condition}, beam-weighted estimator is more accurate than Monte Carlo estimator whenever total beam probability mass $m_{\BC}$ exceeds $1-\frac{1}{2\sqrt{M}}$.
  For $M=10$, the threshold is $m_{\BC} > 0.842$. 
  Thus, when the top-$10$ beam hypotheses capture at least $\sim84\%$ of the model's probability mass, beam search provides a lower-error estimator than multinomial sampling with the same sample budget.

  In practice, \textit{this condition is frequently satisfied}. 
  On the TriviaQA dataset, Figure~\ref{fig:beam_mass_threshold} shows that $22.7\%$ of examples meet the sufficient condition overall, and up to $30$-$40\%$ for very short generations ($\le3$ output tokens), where probability mass is highly concentrated on the top beams. 
  When the inside-outside gap $\delta = |\mu_{\BC}-\mu_{\overline{\BC}}| < 1$, the break-even requirement~\eqref{eq:general_condition} relaxes to $(1-m_{\BC}) \delta < \sigma / \sqrt{M}$, allowing beam search to outperform even when $m_{\BC} < 0.842$. 
  Although $\mu_{\overline{\BC}}$ is not directly computable due to the combinatorial output space, our experiments consistently show beam search outperforming multinomial sampling, suggesting that $\delta$ is modest in practice and that the effective threshold is often lower than $0.842$.

\subsection{Adapting Other UQ Methods to Beam Search}
  In a similar manner, other consistency-based methods can be adapted to utilize beam search-based samples in their formulation. 

\paragraph{Eccentricity.}
  Eccentricity is a method introduced by~\citet{lin2023generating}. Unlike dissimilarity, which uses only the similarities between the produced answer $\yv_*$ and each alternative sample, Eccentricity aggregates the \emph{joint} pairwise relationships among all samples.

  In this method, we first construct a similarity matrix of size $(M + 1) \times (M + 1)$ for the $M$ samples and the produced answer $\yv^{(M + 1)} = \yv_*$:
  \begin{equation}
    W_{ij} = s\bigl(\yv^{(i)}, \yv^{(j)}\bigr), \quad 1 \le i, j \le M + 1.
  \end{equation}
  Then we compute the degree matrix $D$:
  \begin{equation}
    D_{ij} =
    \begin{cases}
      \sum\limits_{k=1}^{M + 1} W_{ik}, & i = j, \\
      0, & i \neq j,
    \end{cases}
  \end{equation}
  and obtain the eigendecomposition of the Graph Laplacian $L = I - D^{-1/2} W D^{-1/2}$, yielding eigenpairs $\{\lambda_i, \uv_i\}_{i=1}^{M + 1}$. 
  
  Smaller eigenvalues (close to zero) capture meaningful semantic structure, whereas larger eigenvalues tend to reflect noise. We therefore retain the eigenvectors whose eigenvalues satisfy $\lambda_i < \alpha$, yielding $K$ vectors in total; $K$ is thus determined by the threshold $\alpha > 0$.

  Semantic embeddings are formed as $\vv_j = [\uv_{1j}, \uv_{2j}, \dots, \uv_{Kj}]$. For $1 \le j \le M$, $\vv_j$ represents the embedding of $\yv^{(j)}$, and $\vv_* = \vv_{M + 1}$ corresponds to $\yv_*$. The confidence score is the distance between the embedding of the produced answer and the mean embedding of the samples:
  \begin{equation}
    \widehat{U}_{Ecc}(\yv_* \mid \xv) = \biggl\|\vv_{*} - \frac{1}{M}\sum_{i=1}^M \vv_i \biggr\|_2^2,
  \label{eq:eccentricity}
  \end{equation}
  where higher values indicate higher uncertainty.

  With beam-generated candidates, we weight embeddings by the normalized masses $w_i$ from equation~\eqref{eq:restricted-mass} to better reflect the model distribution while avoiding duplicate generations:
  \begin{equation}
    \widehat{U}_{Ecc}^{b}(\yv_* \mid \xv) = \biggl\|\vv_{*}^{b} - \sum_{i=1}^M w_i \vv_{i}^{b}\biggr\|_2^2.
  \end{equation}

\paragraph{CoCoA.}
  A white-box approach CoCoA~\citep{vashurin2025uncertainty} combines a model probabilities-based uncertainty with the sample-consistency signal: 
  \begin{equation}
    \widehat{U}_{CoCoA}(\yv_* \mid \xv) = u(\yv_* \mid \xv) \cdot \widehat{U}_D^{MC} (\yv_* \mid \xv) = u(\yv_* \mid \xv) \cdot \frac{1}{M} \sum\limits_{i=1}^M \bigl(1 - s(\yv^{(i)}, \yv_*)\bigr),
  \end{equation}
  where $u(\yv \mid \xv)$ is a model-based uncertainty measure for the sequence (e.g., $-\log p(\yv \mid \xv)$). 

  For a beam-weighted estimator, we utilize~\eqref{eq:beam-estimator} as sample-consistency signal:
  \begin{equation}
    \widehat{U}_{CoCoA}^{b}(\yv_* \mid \xv) = u(\yv_* \mid \xv) \cdot \widehat{U}_D^{b} (\yv_* \mid \xv).
  \end{equation}

\paragraph{Eigenvectors Dissimilarity.}
  Both Dissimilarity and Eccentricity produce confidence scores for the generated answer $\yv_*$. Dissimilarity compares $\yv_*$ to each sample using the base similarity function $s$, while Eccentricity measures the distance from $\yv_*$ to the centroid in the Laplacian embedding space; see equation~\eqref{eq:eccentricity}. To bridge these views, we measure dissimilarity within the embedding space itself, averaging the distances from the embedding of $\yv_*$ to the embeddings of individual samples. This retains the joint-pairwise smoothing of Eccentricity and also reflects the variance among samples, rather than only the centroid. The sampling-based estimate is
  \begin{equation}
    \widehat{U}_{EigVecD}(\yv_* \mid \xv) = \frac{1}{M} \sum \limits_{i=1}^M \bigl\|\vv_{*} - \vv_i\bigr\|_2^2,
  \end{equation}
  and the beam-guided, probability-weighted version is
  \begin{equation}
    \widehat{U}_{EigVecD}^{b}(\yv_* \mid \xv) = \sum\limits_{i=1}^M w_i \bigl\|\vv_{*}^{b} - \vv_i^{b}\bigr\|_2^2,
  \end{equation}
  where the embeddings $\vv_i$ (and $\vv_i^{b}$) are obtained from the Graph Laplacian as in Eccentricity, and $w_i$ are the normalized masses from equation~\eqref{eq:restricted-mass}. This estimator increases both when $\yv_*$ moves away from the bulk and when the samples themselves are more dispersed; by contrast, Eccentricity focuses on the single distance to the weighted centroid.



\section{Experiments}
\label{sec:experiments}

\subsection{Experimental Setup}

  \begin{table*}[t]

\caption{Test dataset settings and statistics.}

\centering
\small
\begin{tabular}{lcccccc}
\toprule
 &
\multicolumn{2}{c}{Closed-Book QA} &
\multicolumn{2}{c}{Open-Book QA} &
\multicolumn{2}{c}{Multiple Choice} \\
\cmidrule(lr){2-3}\cmidrule(lr){4-5}\cmidrule(lr){6-7}
 & TriviaQA & \multirowcell{Web\\Questions} & CoQA & HotpotQA & \multirowcell{Common\\senceQA} & \multirowcell{ARC-\\Challenge} \\
\midrule
\# Questions & 2000 & 1490 & 2000 & 2000 & 1221 & 447 \\
\midrule
\# few-shot examples & 5 & 5 & all preceding & 0 & 2 & 2 \\
\midrule
\multirowcell{Max new tokens} & 20 & 20 & 20 & 20 & 10 & 20 \\
\bottomrule
\end{tabular}
\label{tab:test_datasets}
\end{table*}

\paragraph{Datasets.}
  We evaluate our approach on six QA datasets in total. Those include two closed-book datasets: \textit{TriviaQA}~\citep{joshi-etal-2017-triviaqa} and \textit{Web Questions}~\citep{berant-etal-2013-semantic}, two open-book datasets: \textit{CoQA}~\citep{reddy-etal-2019-coqa} and \textit{HotpotQA}~\citep{yang2018hotpotqa} and two multiple-choice datasets: \textit{CommonsenceQA}~\citep{talmor-etal-2019-commonsenseqa} and \textit{ARC-Challenge}~\citep{allenai:arc}. For each dataset, we randomly sampled several questions from the test set. The statistics for those datasets are available in Table~\ref{tab:test_datasets}. Prompt details and examples of questions are provided in Appendix~\ref{appendix:prompts}.

\paragraph{Models.}
  We use base and instruct versions of 3 models: \gemma~\citep{gemmateam2025gemma3technicalreport}, \llama~\citep{DBLP:journals/corr/abs-2407-21783}, and \qwen~\citep{qwen3technicalreport}.





    
    
    
    
    
    





\begin{table*}[t]

\caption{Summary of baseline UQ methods.}

\centering
\small

    \begin{tabular}{cl}
    
    \toprule
    Category & Uncertainty Quantification Method \\
    \midrule
    
    \multirow{4}{*}{\multirowcell{Information-based}} & Sequence Probability (Prob) \\
    & Mean Token Entropy (MTE) \\
    & Perplexity \\
    & CCP \citep{fadeeva-etal-2024-fact}  \\
    
    \midrule
    
    Reflexive & P(True)~\citep{kadavath2022language} \\
    
    \midrule
    
    \multirow{5}{*}{\multirowcell{Sampling-based}} & Semantic Entropy~\citep{kuhn2023semantic} \\
    & Shifting Attention to Relevance (SAR)~\citep{duan-etal-2024-shifting} \\
    & Lexical Similarity ~\citep{fomicheva-etal-2020-unsupervised} \\
    & Sum of Eigenvalues of Laplacian (EigValLaplacian)~\citep{lin2023generating} \\
    & Number of Semantic Sets (NumSemSets)~\citep{lin2023generating} \\

    \bottomrule

\end{tabular}

\label{tab:lmpoly_methods}

\end{table*}

\paragraph{Metrics.}
  Following best uncertainty benchmarking practices~\citep{vashurin-etal-2025-benchmarking}, we adopt the Prediction–Rejection Ratio (PRR) as our primary evaluation metric.
  Consider a test dataset $\DC = \{(\xv_j, \tv_j)\}$, where $\tv_j$ denotes target output. Then, we can obtain an output $\yv_{j}^*$ generated by an LLM for input $\xv_j$ and the associated uncertainty score $u_j = U(\yv_j^* \mid \xv_j)$.
  
  Based on these we can build rejection curve that captures how the average quality $Q(\yv_j^*, \tv_j)$ over all $\{(\yv_j^*, \tv_j)\colon u_j < \tau\}$ changes  with the rejection threshold $\tau$. An oracle rejection curve can be defined by substituting $u_j = - Q(\yv_j^*, \tv_j)$, giving best possible rejection order where lowest-quality outputs are rejected first. A baseline for rejection can be obtained by rejecting outputs uniformly at random. PRR is then defined as the ratio of the area between UQ rejection curve and a random rejection baseline to the area between oracle rejection and the same random baseline:
  \begin{equation}
    \label{eq:prr}
    PRR = \frac{\text{AUC}_{\text{unc}}-\text{AUC}_{\text{rnd}}}{\text{AUC}_{\text{oracle}}-\text{AUC}_{\text{rnd}}}.
  \end{equation}
  A higher PRR indicates a more effective uncertainty score. Following~\cite{vashurin-etal-2025-benchmarking}, we use AlignScore~\citep{zha-etal-2023-alignscore} as the quality metric $Q$.
  While PRR serves as our main evaluation measure, we additionally report ROC-AUC and PR-AUC in Appendix~\ref{appendix:rocauc_and_prauc}.

\paragraph{Baselines.}
  We evaluate four main methods, Dissimilarity, Eccentricity, Eigenvectors Dissimilarity, and CoCoA, under multinomial sampling and their beam-guided, probability-weighted variants. For CoCoA, we consider both \textit{CocoaMSP} based on unnormalized log-probability: 
  \begin{equation}
      u(\yv_* \mid \xv) = -\log p(\yv_* \mid \xv),
  \end{equation}
  and \textit{CocoaPPL} based on perplexity: 
  \begin{equation}
      u(\yv_* \mid \xv) = -\tfrac{1}{|\yv_*|} \log p(\yv_* \mid \xv).
  \end{equation}

  In addition, we compare against several state-of-the-art UQ baselines summarized in Table~\ref{tab:lmpoly_methods}, using implementations from LM-Polygraph~\citep{fadeeva-etal-2023-lm}. The simplest baseline, \textit{Sequence Probability}, calculates $-\log p(\yv_* \mid \xv)$. For detailed descriptions of other methods see Appendix~\ref{sec:appendix_methods}.

  All experiments use $M=10$ candidates for both multinomial sampling and beam search. We adopt the entailment probability from the DeBERTa-large model fine-tuned on the MNLI task~\citep{he2021deberta} for similarity function $s$, following~\cite{lin2023generating}.

\subsection{Results and Discussion}

  \begin{table}[t!]
    \caption{PRR ($\uparrow$ is better) averaged over 6 datasets. For each model, the top-1 method is \textbf{bold} and the second-best is \underline{underlined}. For beam-guided variants, we mark $\uparrow$ when the variant improves over its original multinomial-sampling counterpart.}
    \label{tab:qa_mean_prr}
    \centering
    \small
    \resizebox{0.85\textwidth}{!}{
      \begin{tabular}{lcccccc}
\toprule
Method & \multirowcell{Llama 3.1 8B\\base} & \multirowcell{Llama 3.1 8B\\instruct} & \multirowcell{Gemma 3 4B\\base} & \multirowcell{Gemma 3 4B\\instruct} & \multirowcell{Qwen 3 8B\\base} & \multirowcell{Qwen 3 8B\\instruct} \\
\midrule
 \rowcolor{gray!20}
 \multicolumn{7}{c}{\textit{Baseline UQ Methods}} \\
\midrule
Prob & .410 \tiny{$\pm$ .019} & .344 \tiny{$\pm$ .031} & .471 \tiny{$\pm$ .023} & .292 \tiny{$\pm$ .022} & .376 \tiny{$\pm$ .03} & .289 \tiny{$\pm$ .067} \\
MTE & .422 \tiny{$\pm$ .016} & .364 \tiny{$\pm$ .026} & .476 \tiny{$\pm$ .022} & .317 \tiny{$\pm$ .028} & .407 \tiny{$\pm$ .032} & .297 \tiny{$\pm$ .064} \\
Perplexity & .452 \tiny{$\pm$ .02} & .323 \tiny{$\pm$ .027} & .525 \tiny{$\pm$ .024} & .288 \tiny{$\pm$ .025} & .372 \tiny{$\pm$ .03} & .276 \tiny{$\pm$ .058} \\
CCP & .401 \tiny{$\pm$ .02} & .364 \tiny{$\pm$ .029} & .492 \tiny{$\pm$ .022} & .331 \tiny{$\pm$ .026} & .355 \tiny{$\pm$ .034} & .291 \tiny{$\pm$ .06} \\
SAR & .352 \tiny{$\pm$ .02} & .385 \tiny{$\pm$ .029} & .386 \tiny{$\pm$ .026} & .239 \tiny{$\pm$ .024} & .363 \tiny{$\pm$ .033} & .292 \tiny{$\pm$ .052} \\
P(True) & .015 \tiny{$\pm$ .023} & .072 \tiny{$\pm$ .03} & .093 \tiny{$\pm$ .026} & -.096 \tiny{$\pm$ .024} & .110 \tiny{$\pm$ .03} & -.114 \tiny{$\pm$ .055} \\
Semantic Entropy & .414 \tiny{$\pm$ .019} & .376 \tiny{$\pm$ .025} & .401 \tiny{$\pm$ .023} & .293 \tiny{$\pm$ .024} & .319 \tiny{$\pm$ .031} & .299 \tiny{$\pm$ .058} \\
Lexical Similarity & .411 \tiny{$\pm$ .02} & .366 \tiny{$\pm$ .029} & .426 \tiny{$\pm$ .025} & .247 \tiny{$\pm$ .023} & .425 \tiny{$\pm$ .034} & .237 \tiny{$\pm$ .055} \\
EigValLaplacian & .426 \tiny{$\pm$ .016} & .371 \tiny{$\pm$ .028} & .437 \tiny{$\pm$ .03} & .233 \tiny{$\pm$ .025} & .406 \tiny{$\pm$ .03} & .265 \tiny{$\pm$ .056} \\
NumSemSets & .396 \tiny{$\pm$ .018} & .319 \tiny{$\pm$ .031} & .418 \tiny{$\pm$ .024} & .238 \tiny{$\pm$ .023} & .365 \tiny{$\pm$ .033} & .253 \tiny{$\pm$ .052} \\
\midrule
 \rowcolor{gray!20}
 \multicolumn{7}{c}{\textit{Consistency-based UQ: multinomial vs. beamsearch versions}} \\
\midrule
Dissimilarity & .505 \tiny{$\pm$ .018} & .379 \tiny{$\pm$ .028} & .630 \tiny{$\pm$ .021} & .206 \tiny{$\pm$ .019} & \underline{.477} \tiny{$\pm$ .037} & .327 \tiny{$\pm$ .066} \\
Dissimilarity + beamsearch & \textbf{.543} $\uparrow$\tiny{$\pm$ .019} & \underline{.417} $\uparrow$\tiny{$\pm$ .026} & \textbf{.650} $\uparrow$\tiny{$\pm$ .022} & .252 $\uparrow$\tiny{$\pm$ .022} & \textbf{.478} $\uparrow$\tiny{$\pm$ .031} & \underline{.355} $\uparrow$\tiny{$\pm$ .062} \\
\midrule
Eccentricity & .453 \tiny{$\pm$ .016} & .368 \tiny{$\pm$ .029} & .563 \tiny{$\pm$ .021} & .231 \tiny{$\pm$ .025} & .396 \tiny{$\pm$ .035} & .251 \tiny{$\pm$ .058} \\
Eccentricity + beamsearch & .505 $\uparrow$\tiny{$\pm$ .017} & .397 $\uparrow$\tiny{$\pm$ .029} & .603 $\uparrow$\tiny{$\pm$ .023} & .285 $\uparrow$\tiny{$\pm$ .024} & .410 $\uparrow$\tiny{$\pm$ .03} & .345 $\uparrow$\tiny{$\pm$ .061} \\
\midrule
EigVecDissimilarity & .463 \tiny{$\pm$ .019} & .370 \tiny{$\pm$ .028} & .561 \tiny{$\pm$ .026} & .236 \tiny{$\pm$ .025} & .425 \tiny{$\pm$ .035} & .256 \tiny{$\pm$ .051} \\
EigVecDissimilarity + beamsearch & .510 $\uparrow$\tiny{$\pm$ .021} & .414 $\uparrow$\tiny{$\pm$ .028} & .598 $\uparrow$\tiny{$\pm$ .022} & .301 $\uparrow$\tiny{$\pm$ .019} & .450 $\uparrow$\tiny{$\pm$ .033} & \textbf{.376} $\uparrow$\tiny{$\pm$ .057} \\
\midrule
CocoaMSP & .505 \tiny{$\pm$ .018} & .404 \tiny{$\pm$ .025} & .587 \tiny{$\pm$ .023} & .314 \tiny{$\pm$ .024} & .461 \tiny{$\pm$ .031} & .334 \tiny{$\pm$ .054} \\
CocoaMSP + beamsearch & .521 $\uparrow$\tiny{$\pm$ .019} & \textbf{.426} $\uparrow$\tiny{$\pm$ .024} & .615 $\uparrow$\tiny{$\pm$ .021} & \textbf{.345} $\uparrow$\tiny{$\pm$ .026} & .473 $\uparrow$\tiny{$\pm$ .03} & .347 $\uparrow$\tiny{$\pm$ .061} \\
\midrule
CocoaPPL & .523 \tiny{$\pm$ .017} & .397 \tiny{$\pm$ .026} & .628 \tiny{$\pm$ .024} & .312 \tiny{$\pm$ .023} & .461 \tiny{$\pm$ .034} & .327 \tiny{$\pm$ .055} \\
CocoaPPL + beamsearch & \underline{.536} $\uparrow$\tiny{$\pm$ .02} & .412 $\uparrow$\tiny{$\pm$ .027} & \underline{.649} $\uparrow$\tiny{$\pm$ .026} & \underline{.339} $\uparrow$\tiny{$\pm$ .021} & .461 $\uparrow$\tiny{$\pm$ .035} & .337 $\uparrow$\tiny{$\pm$ .057} \\
\bottomrule
\end{tabular}
    }
  \end{table}

  Table~\ref{tab:qa_mean_prr} presents PRR results for six models, averaged over six datasets.
  Across all models, incorporating beam search consistently improves the performance of consistency-based uncertainty scores.
  Moreover, in almost every case, beam search–based methods achieve either the best or second-best PRR compared to both baselines and the original consistency-based approaches.
  In particular, Dissimilarity + Beam Search achieves the best PRR scores for all base models and the second-best scores for \llamait and \qwenit.
  Similarly, CocoaMSP + Beam Search achieves the best results for \llamait and \gemmait, while CocoaPPL + Beam Search ranks second-best for \llamab, \gemmab, and \gemmait.
  We further provide separate results for each dataset in Appendix~\ref{appendix:other_llms}.

\subsection{Ablations}
  In this section, we study sensitivity to (i) the number of candidates $M$, (ii) output length, and (iii) rejection rate in PRR curves.

\subsubsection{Effect of Sample Count}

  We vary the sample count $M \in \{1, \dots, 15\}$ for Dissimilarity, Eccentricity, and EigVecDissimilarity under multinomial sampling and beam search. Figure~\ref{fig:abl_nsamples} shows that beam search generally achieves higher PRR across all budgets $M \ge 2$. Notably, beam search reaches high PRR at small budgets (3-5 samples) and saturates quickly, while multinomial sampling improves more gradually and remains below beam search throughout.
  
  For $M=1$, beam search reduces to greedy decoding, causing Dissimilarity to be nearly zero because it compares two identical greedy outputs. In contrast, the sampling variant compares greedy decoding to a stochastic sample, yielding a more informative value.

\subsubsection{Effect of Output Length}

  \begin{figure}[t!]
    \centering
    \includegraphics[width=\linewidth, trim=0 2.0em 0 2.2em]{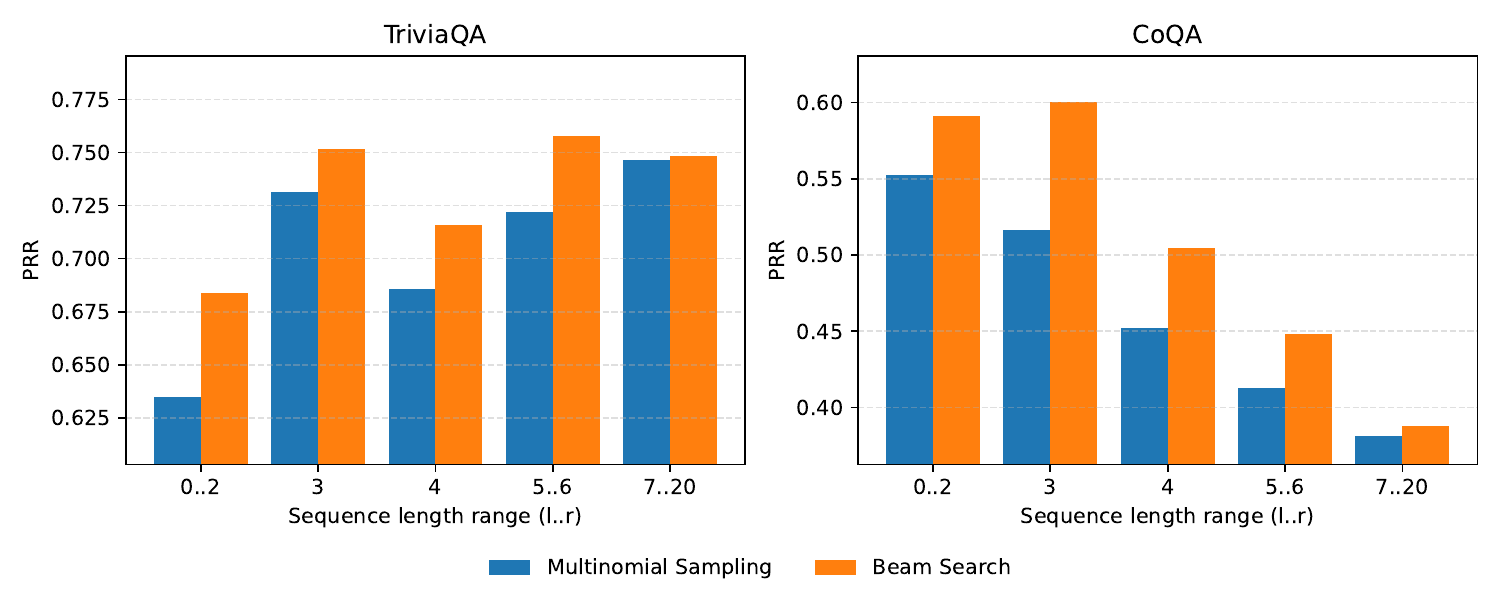}
    \caption{PRR (\(\uparrow\) is better) for Dissimilarity under beam search (with probability weights) vs. multinomial sampling, for different output lengths. Each dataset (TriviaQA, CoQA) with \gemmab is partitioned into five approximately equal-size bins token length of greedy output.}
    \label{fig:abl_length}
  \end{figure}

  Beam-guided estimators outperform sampling-based ones most clearly when generations are short. As shown earlier in Figure~\ref{fig:redundancy}, duplicate rates under multinomial sampling are high for 2-4 tokens ($\sim 30–50\%$) and drop to $\sim 17\%$ for outputs of 8+ tokens. To quantify the impact, we compute PRR for Dissimilarity using beam search (with weights from equation~\eqref{eq:restricted-mass}) and multinomial sampling (no weights) across five length bins of approximately equal size on TriviaQA and CoQA with \gemmab; see Figure~\ref{fig:abl_length}. Within each bin, beam search consistently beats multinomial sampling for short outputs; the gap narrows and becomes negligible for lengths of about 7 tokens and above, where duplication is less pronounced.

\subsubsection{Prediction-Rejection Curves}

  \begin{figure}[t!]
    \centering
    \includegraphics[width=\linewidth, trim=0 3.0em 0 2.5em]{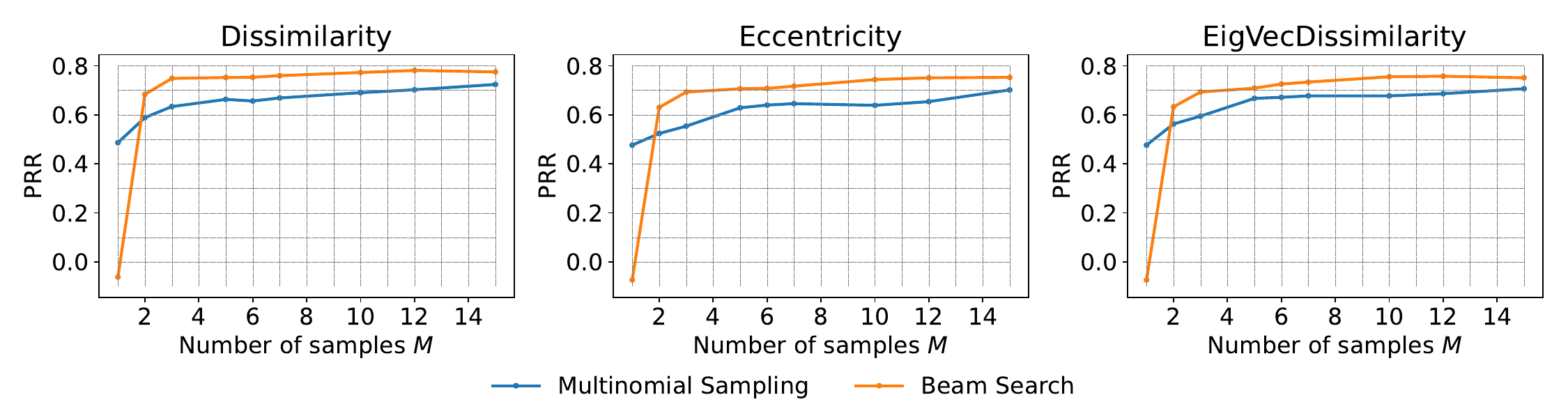}
    \caption{PRR (\(\uparrow\) is better) as a function of the number of candidates \(M\) on TriviaQA with \gemmab. Each panel reports one estimator (Dissimilarity, Eccentricity, EigVecDissimilarity). Curves compare multinomial sampling and beam search (with probability weights from equation~\eqref{eq:restricted-mass}).}
    \label{fig:abl_nsamples}
  \end{figure}

  \begin{figure}[t!]
    \centering
    \includegraphics[width=\linewidth, trim=0 2.0em 0 2.2em]{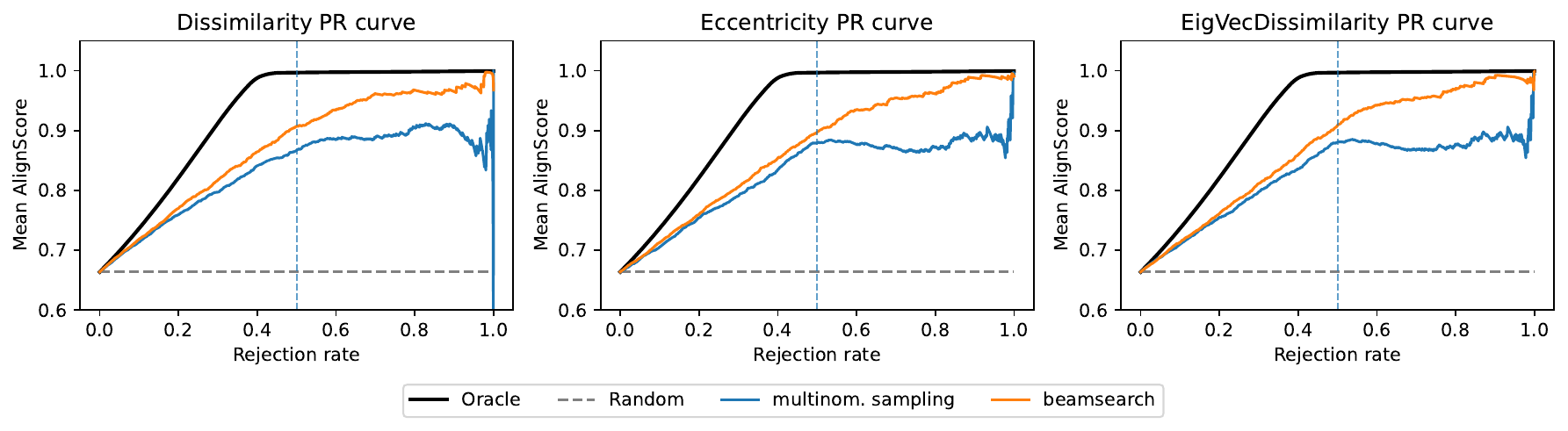}
    \caption{Prediction-Rejection curves for \textit{Dissimilarity}, \textit{Eccentricity}, and \textit{EigVecDissimilarity} on TriviaQA with \llamab, comparing multinomial sampling (blue) and beam search with weights (orange). Oracle (black) and random (gray dashed) baselines are shown. The vertical dashed line marks the maximum rejection rate used in AUC calculations.}
    \label{fig:prr_curves}
  \end{figure}

  Figure~\ref{fig:prr_curves} compares full Prediction-Rejection curves for Dissimilarity, Eccentricity, and EigVecDissimilarity on TriviaQA with \llamab. 
  Across all estimators, beam search consistently dominates multinomial sampling for nearly the entire rejection range. 
  The becomes increasingly pronounced as the rejection rate grows, where beam-guided estimates remain stable while multinomial ones flatten or even degrade. 
  This indicates that beam search is especially beneficial in the high-rejection regime, where distinguishing between stronger and weaker candidates is most critical.

\subsubsection{Additional Ablations}

  Additional ablations are deferred to the appendices: Appendix~\ref{appendix:abl_sampling} compares candidate–generation strategies including Diverse Beam Search, temperature sampling, and a hybrid multinomial–beam sampling. Appendix~\ref{appendix:abl_normalization} investigates restricted-mass normalization and shows that introducing a small probability floor $\epsilon$ can stabilize the weighting of low-mass beams. Appendix~\ref{appendix:abl_other_methods} evaluates other sampling-based objectives (Semantic Entropy, Degree Matrix) under beam generation with probability-weighted formulations. 
  Appendix~\ref{appendix:topbeam_scoring} examines top-1 beam decode as the produced answer $\yv_*$ (instead of greedy), a natural choice when beam search has already been run.


\section{Related Work}
\label{sec:rel_work}

  
\paragraph{Consistency-based uncertainty estimation.} 
  In a black-box setting, consistency-based methods are especially relevant, as they do not require access to the model internals. \citet{lin2023generating} introduce several methods that estimate confidence based on a similarity matrix, where each entry represents the similarity between a pair of sampled generations.
  \citet{fomicheva-etal-2020-unsupervised} present Lexical Similarity, a metric that evaluates the average similarity of words or phrases between each pair of responses. In a white-box setting, consistency signals can be combined with model token-probabilities-based confidence. These hybrid methods, such as Semantic Entropy~\citep{kuhn2023semantic}, CoCoA~\citep{vashurin2025uncertainty} and SAR~\citep{duan-etal-2024-shifting} explore different ways of combining these signals and achieve state-of-the-art performance. However, these works are primarily concerned with the introduction of new methods for uncertainty quantification and use multinational sampling as a way to approximate a variety of consistency-based measures.

\paragraph{Uncertainty and decoding.}
  There were some efforts focused on examining the interaction between decoding strategies and uncertainty quantification. In particular, \citet{hashimoto2025decodinguncertaintyimpactdecoding} explores the impact of decoding strategies on the performance of token probabilities-based UQ methods, namely Sequence Probability and Mean Token Entropy. The authors find that these scores produced with beam search can sometimes under perform compared to greedy or contrastive search. While this work offers interesting insights, no experiments with stochastic decoding strategies or non-likelihood based methods were conducted. Conversely, other research focused on making the decoding itself uncertainty-aware. For example, \citet{daheim2025uncertaintyaware} propose Minimum Bayes Risk (MBR) decoding, which incorporates model uncertainty into the MBR objective for improved generation quality. \citet{garces-arias-etal-2024-adaptive} and~\citet{lee-etal-2025-uncertainty} incorporate uncertainty into contrastive search decoding. Lastly, \cite{ding-etal-2025-guard} combines global entropy trends and local deviations to guide a self-adaptive decoding. These works integrate uncertainty into the decoding process to improve the quality of the generation, rather than improving the performance of the uncertainty itself. Although some uncertainty-aware decoding methods have also demonstrated improved uncertainty quantification performance, they are generally not evaluated with consistency-based metrics.


\section{Conclusion}
\label{sec:conclusion}
  We present a new family of uncertainty quantification methods for LLMs that employ a beam-weighted estimator of consistency-based uncertainty. 
  Compared to multinomial sampling, commonly used in existing approaches, our method yields lower variance in dissimilarity and greater diversity of candidate answers. 
  We also derive a theoretical lower bound on the beam set probability mass under which the error of the multinomial Monte Carlo estimator is guaranteed to be larger.
  Finally, we evaluate our approach on six QA datasets and six different models, demonstrating state-of-the-art performance.

\section*{Limitations}
  Although our method provides an improvement over existing consistency-based estimators, several important considerations remain.
  First, we evaluated our methods in white-box settings, as they require access to the model's probability distributions. Nonetheless, we argue that developing methods tailored for white-box settings continues to be of great importance given their continued relevance and usage. Moreover, the methods could be extended to the black-box settings using empirical probability estimates. 

  Second, our experiments are limited to short-form QA datasets, and the generalizability of our findings to longer-form generation remains an open question.

  Lastly, our implementation and evaluation relies on existing neural metrics: AlignScore is used to score the quality of the generation, and pre-trained NLI model is utilized as a measure of consistency. Although widely used in previous work, certain more specialized tasks might require different sample similarity measures and quality metrics.




\bibliography{iclr2026_conference}

@article{reddy-etal-2019-coqa,
    title = "{C}o{QA}: A Conversational Question Answering Challenge",
    author = "Reddy, Siva  and
      Chen, Danqi  and
      Manning, Christopher D.",
    journal = "Transactions of the Association for Computational Linguistics",
    volume = "7",
    year = "2019",
    address = "Cambridge, MA",
    publisher = "MIT Press",
    url = "https://aclanthology.org/Q19-1016",
    doi = "10.1162/tacl_a_00266",
    pages = "249--266",
}

@inproceedings{talmor-etal-2019-commonsenseqa,
    title = "{C}ommonsense{QA}: A Question Answering Challenge Targeting Commonsense Knowledge",
    author = "Talmor, Alon  and
      Herzig, Jonathan  and
      Lourie, Nicholas  and
      Berant, Jonathan",
    editor = "Burstein, Jill  and
      Doran, Christy  and
      Solorio, Thamar",
    booktitle = "Proceedings of the 2019 Conference of the North {A}merican Chapter of the Association for Computational Linguistics: Human Language Technologies, Volume 1 (Long and Short Papers)",
    month = jun,
    year = "2019",
    address = "Minneapolis, Minnesota",
    publisher = "Association for Computational Linguistics",
    url = "https://aclanthology.org/N19-1421/",
    doi = "10.18653/v1/N19-1421",
    pages = "4149--4158",
    abstract = "When answering a question, people often draw upon their rich world knowledge in addition to the particular context. Recent work has focused primarily on answering questions given some relevant document or context, and required very little general background. To investigate question answering with prior knowledge, we present CommonsenseQA: a challenging new dataset for commonsense question answering. To capture common sense beyond associations, we extract from ConceptNet (Speer et al., 2017) multiple target concepts that have the same semantic relation to a single source concept. Crowd-workers are asked to author multiple-choice questions that mention the source concept and discriminate in turn between each of the target concepts. This encourages workers to create questions with complex semantics that often require prior knowledge. We create 12,247 questions through this procedure and demonstrate the difficulty of our task with a large number of strong baselines. Our best baseline is based on BERT-large (Devlin et al., 2018) and obtains 56{\%} accuracy, well below human performance, which is 89{\%}."
}

@article{kadavath2022language,
  author       = {Saurav Kadavath and
                  Tom Conerly and
                  Amanda Askell and
                  Tom Henighan and
                  Dawn Drain and
                  Ethan Perez and
                  Nicholas Schiefer and
                  Zac Hatfield{-}Dodds and
                  Nova DasSarma and
                  Eli Tran{-}Johnson and
                  Scott Johnston and
                  Sheer El Showk and
                  Andy Jones and
                  Nelson Elhage and
                  Tristan Hume and
                  Anna Chen and
                  Yuntao Bai and
                  Sam Bowman and
                  Stanislav Fort and
                  Deep Ganguli and
                  Danny Hernandez and
                  Josh Jacobson and
                  Jackson Kernion and
                  Shauna Kravec and
                  Liane Lovitt and
                  Kamal Ndousse and
                  Catherine Olsson and
                  Sam Ringer and
                  Dario Amodei and
                  Tom Brown and
                  Jack Clark and
                  Nicholas Joseph and
                  Ben Mann and
                  Sam McCandlish and
                  Chris Olah and
                  Jared Kaplan},
  title        = {Language Models (Mostly) Know What They Know},
  journal      = {arXiv preprint arXiv:2207.05221},
  year         = {2022},
  url          = {https://doi.org/10.48550/arXiv.2207.05221},
  doi          = {10.48550/arXiv.2207.05221},
  eprinttype    = {arXiv},
  eprint       = {2207.05221},
  timestamp    = {Thu, 14 Jul 2022 15:34:28 +0200},
  biburl       = {https://dblp.org/rec/journals/corr/abs-2207-05221.bib},
  bibsource    = {dblp computer science bibliography, https://dblp.org}
}

@article{liu2020roberta,
  title={{RoBERTa}: A Robustly Optimized {BERT} Pretraining Approach},
  author={Liu, Yinhan and Ott, Myle and Goyal, Naman and Du, Jingfei and Joshi, Mandar and Chen, Danqi and Levy, Omer and Lewis, Mike and Zettlemoyer, Luke and Stoyanov, Veselin},
  journal={arXiv preprint arXiv:1907.11692},
  year={2019},
  url={https://arxiv.org/abs/1907.11692}
}

@inproceedings{he2021deberta,
  title = {{DeBERTa}: Decoding-Enhanced {BERT} with Disentangled Attention},
  author = {He, Pengcheng and Liu, Xiaodong and Gao, Jianfeng and Chen, Weizhu},
  booktitle = {Proceedings of the International Conference on Learning Representations},
  year = {2021},
  url = {https://openreview.net/forum?id=XPZIaotutsD}
}

@inproceedings{fadeeva-etal-2024-fact,
    title = "Fact-Checking the Output of Large Language Models via Token-Level Uncertainty Quantification",
    author = "Fadeeva, Ekaterina  and
      Rubashevskii, Aleksandr  and
      Shelmanov, Artem  and
      Petrakov, Sergey  and
      Li, Haonan  and
      Mubarak, Hamdy  and
      Tsymbalov, Evgenii  and
      Kuzmin, Gleb  and
      Panchenko, Alexander  and
      Baldwin, Timothy  and
      Nakov, Preslav  and
      Panov, Maxim",
    editor = "Ku, Lun-Wei  and
      Martins, Andre  and
      Srikumar, Vivek",
    booktitle = "Findings of the Association for Computational Linguistics: ACL 2024",
    month = aug,
    year = "2024",
    address = "Bangkok, Thailand",
    publisher = "Association for Computational Linguistics",
    url = "https://aclanthology.org/2024.findings-acl.558/",
    doi = "10.18653/v1/2024.findings-acl.558",
    pages = "9367--9385",
    abstract = "Large language models (LLMs) are notorious for hallucinating, i.e., producing erroneous claims in their output. Such hallucinations can be dangerous, as occasional factual inaccuracies in the generated text might be obscured by the rest of the output being generally factually correct, making it extremely hard for the users to spot them. Current services that leverage LLMs usually do not provide any means for detecting unreliable generations. Here, we aim to bridge this gap. In particular, we propose a novel fact-checking and hallucination detection pipeline based on token-level uncertainty quantification. Uncertainty scores leverage information encapsulated in the output of a neural network or its layers to detect unreliable predictions, and we show that they can be used to fact-check the atomic claims in the LLM output. Moreover, we present a novel token-level uncertainty quantification method that removes the impact of uncertainty about what claim to generate on the current step and what surface form to use. Our method Claim Conditioned Probability (CCP) measures only the uncertainty of a particular claim value expressed by the model. Experiments on the task of biography generation demonstrate strong improvements for CCP compared to the baselines for seven different LLMs and four languages. Human evaluation reveals that the fact-checking pipeline based on uncertainty quantification is competitive with a fact-checking tool that leverages external knowledge."
}

@article{lin2023generating,
    title={Generating with Confidence: Uncertainty Quantification for Black-box Large Language Models},
    author={Zhen Lin and Shubhendu Trivedi and Jimeng Sun},
    journal={Transactions on Machine Learning Research},
    issn={2835-8856},
    year={2024},
    url={https://openreview.net/forum?id=DWkJCSxKU5},
    note={}
}

@inproceedings{fadeeva-etal-2023-lm,
    title = "{LM}-Polygraph: Uncertainty Estimation for Language Models",
    author = "Fadeeva, Ekaterina  and
      Vashurin, Roman  and
      Tsvigun, Akim  and
      Vazhentsev, Artem  and
      Petrakov, Sergey  and
      Fedyanin, Kirill  and
      Vasilev, Daniil  and
      Goncharova, Elizaveta  and
      Panchenko, Alexander  and
      Panov, Maxim  and
      Baldwin, Timothy  and
      Shelmanov, Artem",
    editor = "Feng, Yansong  and
      Lefever, Els",
    booktitle = "Proceedings of the 2023 Conference on Empirical Methods in Natural Language Processing: System Demonstrations",
    month = dec,
    year = "2023",
    address = "Singapore",
    publisher = "Association for Computational Linguistics",
    url = "https://aclanthology.org/2023.emnlp-demo.41/",
    doi = "10.18653/v1/2023.emnlp-demo.41",
    pages = "446--461",
    abstract = "Recent advancements in the capabilities of large language models (LLMs) have paved the way for a myriad of groundbreaking applications in various fields. However, a significant challenge arises as these models often ``hallucinate'', i.e., fabricate facts without providing users an apparent means to discern the veracity of their statements. Uncertainty estimation (UE) methods are one path to safer, more responsible, and more effective use of LLMs. However, to date, research on UE methods for LLMs has been focused primarily on theoretical rather than engineering contributions. In this work, we tackle this issue by introducing LM-Polygraph, a framework with implementations of a battery of state-of-the-art UE methods for LLMs in text generation tasks, with unified program interfaces in Python. Additionally, it introduces an extendable benchmark for consistent evaluation of UE techniques by researchers, and a demo web application that enriches the standard chat dialog with confidence scores, empowering end-users to discern unreliable responses. LM-Polygraph is compatible with the most recent LLMs, including BLOOMz, LLaMA-2, ChatGPT, and GPT-4, and is designed to support future releases of similarly-styled LMs."
}

@inproceedings{zha-etal-2023-alignscore,
    title = "{A}lign{S}core: Evaluating Factual Consistency with A Unified Alignment Function",
    author = "Zha, Yuheng  and
      Yang, Yichi  and
      Li, Ruichen  and
      Hu, Zhiting",
    editor = "Rogers, Anna  and
      Boyd-Graber, Jordan  and
      Okazaki, Naoaki",
    booktitle = "Proceedings of the 61st Annual Meeting of the Association for Computational Linguistics (Volume 1: Long Papers)",
    month = jul,
    year = "2023",
    address = "Toronto, Canada",
    publisher = "Association for Computational Linguistics",
    url = "https://aclanthology.org/2023.acl-long.634/",
    doi = "10.18653/v1/2023.acl-long.634",
    pages = "11328--11348",
    abstract = "Many text generation applications require the generated text to be factually consistent with input information. Automatic evaluation of factual consistency is challenging. Previous work has developed various metrics that often depend on specific functions, such as natural language inference (NLI) or question answering (QA), trained on limited data. Those metrics thus can hardly assess diverse factual inconsistencies (e.g., contradictions, hallucinations) that occur in varying inputs/outputs (e.g., sentences, documents) from different tasks. In this paper, we propose AlignScore, a new holistic metric that applies to a variety of factual inconsistency scenarios as above. AlignScore is based on a general function of information alignment between two arbitrary text pieces. Crucially, we develop a unified training framework of the alignment function by integrating a large diversity of data sources, resulting in 4.7M training examples from 7 well-established tasks (NLI, QA, paraphrasing, fact verification, information retrieval, semantic similarity, and summarization). We conduct extensive experiments on large-scale benchmarks including 22 evaluation datasets, where 19 of the datasets were never seen in the alignment training. AlignScore achieves substantial improvement over a wide range of previous metrics. Moreover, AlignScore (355M parameters) matches or even outperforms metrics based on ChatGPT and GPT-4 that are orders of magnitude larger."
}

@article{vashurin-etal-2025-benchmarking,
    title = "Benchmarking Uncertainty Quantification Methods for Large Language Models with {LM}-{P}olygraph",
    author = "Vashurin, Roman  and
      Fadeeva, Ekaterina  and
      Vazhentsev, Artem  and
      Rvanova, Lyudmila  and
      Vasilev, Daniil  and
      Tsvigun, Akim  and
      Petrakov, Sergey  and
      Xing, Rui  and
      Sadallah, Abdelrahman  and
      Grishchenkov, Kirill  and
      Panchenko, Alexander  and
      Baldwin, Timothy  and
      Nakov, Preslav  and
      Panov, Maxim  and
      Shelmanov, Artem",
    journal = "Transactions of the Association for Computational Linguistics",
    volume = "13",
    year = "2025",
    address = "Cambridge, MA",
    publisher = "MIT Press",
    url = "https://aclanthology.org/2025.tacl-1.11/",
    doi = "10.1162/tacl_a_00737",
    pages = "220--248",
    abstract = "The rapid proliferation of large language models (LLMs) has stimulated researchers to seek effective and efficient approaches to deal with LLM hallucinations and low-quality outputs. Uncertainty quantification (UQ) is a key element of machine learning applications in dealing with such challenges. However, research to date on UQ for LLMs has been fragmented in terms of techniques and evaluation methodologies. In this work, we address this issue by introducing a novel benchmark that implements a collection of state-of-the-art UQ baselines and offers an environment for controllable and consistent evaluation of novel UQ techniques over various text generation tasks. Our benchmark also supports the assessment of confidence normalization methods in terms of their ability to provide interpretable scores. Using our benchmark, we conduct a large-scale empirical investigation of UQ and normalization techniques across eleven tasks, identifying the most effective approaches."
}

@misc{qwen3technicalreport,
  title={Qwen3 Technical Report}, 
  author={Qwen Team},
  year={2025},
  eprint={2505.09388},
  archivePrefix={arXiv},
  primaryClass={cs.CL},
  url={https://arxiv.org/abs/2505.09388}, 
}

@misc{gemmateam2025gemma3technicalreport,
  title={Gemma 3 Technical Report}, 
  author={Gemma Team},
  year={2025},
  eprint={2503.19786},
  archivePrefix={arXiv},
  primaryClass={cs.CL},
  url={https://arxiv.org/abs/2503.19786}, 
}

@article{DBLP:journals/corr/abs-2407-21783,
  publtype={informal},
  author={Abhimanyu Dubey and Abhinav Jauhri and Abhinav Pandey and Abhishek Kadian and Ahmad Al-Dahle and Aiesha Letman and Akhil Mathur and Alan Schelten and Amy Yang and Angela Fan},
  title={The {L}lama 3 Herd of Models},
  year={2024},
  cdate={1704067200000},
  journal={CoRR},
  volume={abs/2407.21783},
  url={https://doi.org/10.48550/arXiv.2407.21783}
}

@article{allenai:arc,
  author    = {Peter Clark  and Isaac Cowhey and Oren Etzioni and Tushar Khot and
                Ashish Sabharwal and Carissa Schoenick and Oyvind Tafjord},
  title     = {Think you have Solved Question Answering? {T}ry {ARC}, the {AI}2 Reasoning Challenge},
  journal={arXiv preprint arXiv:1803.05457},
  year      = {2018},
  url={https://arxiv.org/abs/1803.05457}
}

@inproceedings{yang2018hotpotqa,
  title={{HotpotQA}: A Dataset for Diverse, Explainable Multi-hop Question Answering},
  author={Yang, Zhilin and Qi, Peng and Zhang, Saizheng and Bengio, Yoshua and Cohen, William W. and Salakhutdinov, Ruslan and Manning, Christopher D.},
  booktitle={Conference on Empirical Methods in Natural Language Processing ({EMNLP})},
  year={2018},
  address = {Brussels, Belgium},
  url={https://aclanthology.org/D18-1259/}
}

@inproceedings{berant-etal-2013-semantic,
    title = "Semantic Parsing on {F}reebase from Question-Answer Pairs",
    author = "Berant, Jonathan  and
      Chou, Andrew  and
      Frostig, Roy  and
      Liang, Percy",
    editor = "Yarowsky, David  and
      Baldwin, Timothy  and
      Korhonen, Anna  and
      Livescu, Karen  and
      Bethard, Steven",
    booktitle = "Proceedings of the 2013 Conference on Empirical Methods in Natural Language Processing",
    month = oct,
    year = "2013",
    address = "Seattle, Washington, USA",
    publisher = "Association for Computational Linguistics",
    url = "https://aclanthology.org/D13-1160/",
    pages = "1533--1544"
}

@inproceedings{joshi-etal-2017-triviaqa,
    title = "{T}rivia{QA}: A Large Scale Distantly Supervised Challenge Dataset for Reading Comprehension",
    author = "Joshi, Mandar  and
      Choi, Eunsol  and
      Weld, Daniel  and
      Zettlemoyer, Luke",
    editor = "Barzilay, Regina  and
      Kan, Min-Yen",
    booktitle = "Proceedings of the 55th Annual Meeting of the Association for Computational Linguistics (Volume 1: Long Papers)",
    month = jul,
    year = "2017",
    address = "Vancouver, Canada",
    publisher = "Association for Computational Linguistics",
    url = "https://aclanthology.org/P17-1147/",
    doi = "10.18653/v1/P17-1147",
    pages = "1601--1611",
    abstract = "We present TriviaQA, a challenging reading comprehension dataset containing over 650K question-answer-evidence triples. TriviaQA includes 95K question-answer pairs authored by trivia enthusiasts and independently gathered evidence documents, six per question on average, that provide high quality distant supervision for answering the questions. We show that, in comparison to other recently introduced large-scale datasets, TriviaQA (1) has relatively complex, compositional questions, (2) has considerable syntactic and lexical variability between questions and corresponding answer-evidence sentences, and (3) requires more cross sentence reasoning to find answers. We also present two baseline algorithms: a feature-based classifier and a state-of-the-art neural network, that performs well on SQuAD reading comprehension. Neither approach comes close to human performance (23{\%} and 40{\%} vs. 80{\%}), suggesting that TriviaQA is a challenging testbed that is worth significant future study."
}

@inproceedings{vashurin2025uncertainty,
    title={{CoCoA}: A Minimum {B}ayes Risk Framework Bridging Confidence and Consistency for Uncertainty Quantification in LLMs},
    author={Vashurin, Roman and Goloburda, Maiya and Ilina, Albina and Rubashevskii, Aleksandr and Nakov, Preslav and Shelmanov, Artem and Panov, Maxim},
    booktitle={The Thirty-ninth Annual Conference on Neural Information Processing Systems},
    year={2025},
    url={https://openreview.net/forum?id=H1NGlLNaVC}
}

@article{fomicheva-etal-2020-unsupervised,
    title = "Unsupervised Quality Estimation for Neural Machine Translation",
    author = "Fomicheva, Marina  and
      Sun, Shuo  and
      Yankovskaya, Lisa  and
      Blain, Fr{\'e}d{\'e}ric  and
      Guzm{\'a}n, Francisco  and
      Fishel, Mark  and
      Aletras, Nikolaos  and
      Chaudhary, Vishrav  and
      Specia, Lucia",
    editor = "Johnson, Mark  and
      Roark, Brian  and
      Nenkova, Ani",
    journal = "Transactions of the Association for Computational Linguistics",
    volume = "8",
    year = "2020",
    address = "Cambridge, MA",
    publisher = "MIT Press",
    url = "https://aclanthology.org/2020.tacl-1.35/",
    doi = "10.1162/tacl_a_00330",
    pages = "539--555",
    abstract = "Quality Estimation (QE) is an important component in making Machine Translation (MT) useful in real-world applications, as it is aimed to inform the user on the quality of the MT output at test time. Existing approaches require large amounts of expert annotated data, computation, and time for training. As an alternative, we devise an unsupervised approach to QE where no training or access to additional resources besides the MT system itself is required. Different from most of the current work that treats the MT system as a black box, we explore useful information that can be extracted from the MT system as a by-product of translation. By utilizing methods for uncertainty quantification, we achieve very good correlation with human judgments of quality, rivaling state-of-the-art supervised QE models. To evaluate our approach we collect the first dataset that enables work on both black-box and glass-box approaches to QE."
}

@inproceedings{kuhn2023semantic,
  author       = {Lorenz Kuhn and
                  Yarin Gal and
                  Sebastian Farquhar},
  title        = {Semantic Uncertainty: Linguistic Invariances for Uncertainty Estimation
                  in Natural Language Generation},
  booktitle    = {The Eleventh International Conference on Learning Representations,
                  {ICLR} 2023, Kigali, Rwanda, May 1-5, 2023},
  year         = {2023},
  url          = {https://openreview.net/pdf?id=VD-AYtP0dve},
  timestamp    = {Fri, 30 Jun 2023 14:55:53 +0200},
  biburl       = {https://dblp.org/rec/conf/iclr/KuhnGF23.bib},
  bibsource    = {dblp computer science bibliography, https://dblp.org}
}

@inproceedings{duan-etal-2024-shifting,
    title = "Shifting Attention to Relevance: Towards the Predictive Uncertainty Quantification of Free-Form Large Language Models",
    author = "Duan, Jinhao  and
      Cheng, Hao  and
      Wang, Shiqi  and
      Zavalny, Alex  and
      Wang, Chenan  and
      Xu, Renjing  and
      Kailkhura, Bhavya  and
      Xu, Kaidi",
    editor = "Ku, Lun-Wei  and
      Martins, Andre  and
      Srikumar, Vivek",
    booktitle = "Proceedings of the 62nd Annual Meeting of the Association for Computational Linguistics (Volume 1: Long Papers)",
    month = aug,
    year = "2024",
    address = "Bangkok, Thailand",
    publisher = "Association for Computational Linguistics",
    url = "https://aclanthology.org/2024.acl-long.276/",
    doi = "10.18653/v1/2024.acl-long.276",
    pages = "5050--5063",
    abstract = "Large Language Models (LLMs) show promising results in language generation and instruction following but frequently ``hallucinate'', making their outputs less reliable. Despite Uncertainty Quantification{'}s (UQ) potential solutions, implementing it accurately within LLMs is challenging. Our research introduces a simple heuristic: not all tokens in auto-regressive LLM text equally represent the underlying meaning, as ``linguistic redundancy'' often allows a few keywords to convey the essence of long sentences. However, current methods underestimate this inequality when assessing uncertainty, causing tokens with limited semantics to be equally or excessively weighted in UQ. To correct this, we propose Shifting Attention to more Relevant (SAR) components at both token- and sentence-levels for better UQ. We conduct extensive experiments involving a range of popular ``off-the-shelf'' LLMs, such as Vicuna, WizardLM, and LLaMA-2-chat, with model sizes extending up to 33B parameters. We evaluate various free-form question-answering tasks, encompassing domains such as reading comprehension, science Q{\&}A, and medical Q{\&}A. Our experimental results, coupled with a comprehensive demographic analysis, demonstrate the superior performance of SAR. The code is available at https://github.com/jinhaoduan/SAR."
}

@inproceedings{
    law,
    author = {Shu, Dong and Zhao, Haoran and Liu, Xukun and Demeter, David and Du, Mengnan and Zhang, Yongfeng},
    title = {{LawLLM}: Law Large Language Model for the {US} Legal System},
    year = {2024},
    publisher = {Association for Computing Machinery},
    url = {https://doi.org/10.1145/3627673.3680020},
    doi = {10.1145/3627673.3680020},
    abstract = {In the rapidly evolving field of legal analytics, finding relevant cases and accurately predicting judicial outcomes are challenging because of the complexity of legal language, which often includes specialized terminology, complex syntax, and historical context. Moreover, the subtle distinctions between similar and precedent cases require a deep understanding of legal knowledge. Researchers often conflate these concepts, making it difficult to develop specialized techniques to effectively address these nuanced tasks. In this paper, we introduce the Law Large Language Model (LawLLM), a multi-task model specifically designed for the US legal domain to address these challenges. LawLLM excels at Similar Case Retrieval (SCR), Precedent Case Recommendation (PCR), and Legal Judgment Prediction (LJP). By clearly distinguishing between precedent and similar cases, we provide essential clarity, guiding future research in developing specialized strategies for these tasks. We propose customized data preprocessing techniques for each task that transform raw legal data into a trainable format. Furthermore, we also use techniques such as in-context learning (ICL) and advanced information retrieval methods in LawLLM. The evaluation results demonstrate that LawLLM consistently outperforms existing baselines in both zero-shot and few-shot scenarios, offering unparalleled multi-task capabilities and filling critical gaps in the legal domain. Code and data are available at https://github.com/Tizzzzy/Law_LLM.}, booktitle = {Proceedings of the 33rd ACM International Conference on Information and Knowledge Management},
    pages = {4882–4889},
    numpages = {8},
    address = {Boise, Idaho, USA}
}

@article{medicine,
  title = {Current applications and challenges in large language models for patient care: {A} systematic review},
  volume = {5},
  ISSN = {2730-664X},
  url = {http://dx.doi.org/10.1038/s43856-024-00717-2},
  DOI = {10.1038/s43856-024-00717-2},
  number = {1},
  journal = {Communications Medicine},
  publisher = {Springer Science and Business Media LLC},
  author = {Busch,  Felix and Hoffmann,  Lena and Rueger,  Christopher and van Dijk,  Elon HC and Kader,  Rawen and Ortiz-Prado,  Esteban and Makowski,  Marcus R. and Saba,  Luca and Hadamitzky,  Martin and Kather,  Jakob Nikolas and Truhn,  Daniel and Cuocolo,  Renato and Adams,  Lisa C. and Bressem,  Keno K.},
  year = {2025}
}

@article{education,
  title = {The Use of Large Language Models in Education},
  volume = {35},
  ISSN = {1560-4306},
  url = {http://dx.doi.org/10.1007/s40593-025-00457-x},
  DOI = {10.1007/s40593-025-00457-x},
  number = {2},
  journal = {International Journal of Artificial Intelligence in Education},
  publisher = {Springer Science and Business Media LLC},
  author = {Xing,  Wanli and Nixon,  Nia and Crossley,  Scott and Denny,  Paul and Lan,  Andrew and Stamper,  John and Yu,  Zhou},
  year = {2025},
  month = feb,
  pages = {439–443}
}

@inproceedings{yoo-etal-2022-detection,
    title = "Detection of Adversarial Examples in Text Classification: Benchmark and Baseline via Robust Density Estimation",
    author = "Yoo, KiYoon  and
      Kim, Jangho  and
      Jang, Jiho  and
      Kwak, Nojun",
    editor = "Muresan, Smaranda  and
      Nakov, Preslav  and
      Villavicencio, Aline",
    booktitle = "Findings of the Association for Computational Linguistics: ACL 2022",
    month = may,
    year = "2022",
    address = "Dublin, Ireland",
    publisher = "Association for Computational Linguistics",
    url = "https://aclanthology.org/2022.findings-acl.289/",
    doi = "10.18653/v1/2022.findings-acl.289",
    pages = "3656--3672",
    abstract = "Word-level adversarial attacks have shown success in NLP models, drastically decreasing the performance of transformer-based models in recent years. As a countermeasure, adversarial defense has been explored, but relatively few efforts have been made to detect adversarial examples. However, detecting adversarial examples may be crucial for automated tasks (e.g. review sentiment analysis) that wish to amass information about a certain population and additionally be a step towards a robust defense system. To this end, we release a dataset for four popular attack methods on four datasets and four models to encourage further research in this field. Along with it, we propose a competitive baseline based on density estimation that has the highest auc on 29 out of 30 dataset-attack-model combinations. The source code is released (\url{https://github.com/bangawayoo/adversarial-examples-in-text-classification})."
}

@inproceedings{tian-etal-2023-just,
    title = "Just Ask for Calibration: Strategies for Eliciting Calibrated Confidence Scores from Language Models Fine-Tuned with Human Feedback",
    author = "Tian, Katherine  and
      Mitchell, Eric  and
      Zhou, Allan  and
      Sharma, Archit  and
      Rafailov, Rafael  and
      Yao, Huaxiu  and
      Finn, Chelsea  and
      Manning, Christopher",
    editor = "Bouamor, Houda  and
      Pino, Juan  and
      Bali, Kalika",
    booktitle = "Proceedings of the 2023 Conference on Empirical Methods in Natural Language Processing",
    month = dec,
    year = "2023",
    address = "Singapore",
    publisher = "Association for Computational Linguistics",
    url = "https://aclanthology.org/2023.emnlp-main.330/",
    doi = "10.18653/v1/2023.emnlp-main.330",
    pages = "5433--5442",
    abstract = "A trustworthy real-world prediction system should produce well-calibrated confidence scores; that is, its confidence in an answer should be indicative of the likelihood that the answer is correct, enabling deferral to an expert in cases of low-confidence predictions. Recent studies have shown that unsupervised pre-training produces large language models (LMs) whose conditional probabilities are remarkably well-calibrated. However, the most widely-used LMs are fine-tuned with reinforcement learning from human feedback (RLHF-LMs), and some studies have suggested that RLHF-LMs produce conditional probabilities that are very poorly calibrated. In light of this perceived weakness, we conduct a broad evaluation of methods for extracting confidence scores from RLHF-LMs. For RLHF-LMs such as ChatGPT, GPT-4, and Claude, we find that verbalized confidences emitted as output tokens are typically better-calibrated than the model{'}s conditional probabilities on the TriviaQA, SciQ, and TruthfulQA benchmarks, often reducing the expected calibration error by a relative 50{\%}."
}

@inproceedings{xia-etal-2025-survey,
    title = "A Survey of Uncertainty Estimation Methods on Large Language Models",
    author = "Xia, Zhiqiu  and
      Xu, Jinxuan  and
      Zhang, Yuqian  and
      Liu, Hang",
    editor = "Che, Wanxiang  and
      Nabende, Joyce  and
      Shutova, Ekaterina  and
      Pilehvar, Mohammad Taher",
    booktitle = "Findings of the Association for Computational Linguistics: ACL 2025",
    month = jul,
    year = "2025",
    address = "Vienna, Austria",
    publisher = "Association for Computational Linguistics",
    url = "https://aclanthology.org/2025.findings-acl.1101/",
    doi = "10.18653/v1/2025.findings-acl.1101",
    pages = "21381--21396",
    ISBN = "979-8-89176-256-5",
    abstract = "Large language models (LLMs) have demonstrated remarkable capabilities across various tasks. However, these models could offer biased, hallucinated, or non-factual responses camouflaged by their fluency and realistic appearance. Uncertainty estimation is the key method to address this challenge. While research efforts in uncertainty estimation are ramping up, there is a lack of comprehensive and dedicated surveys on LLM uncertainty estimation. This survey presents four major avenues of LLM uncertainty estimation. Furthermore, we perform extensive experimental evaluations across multiple methods and datasets. At last, we provide critical and promising future directions for LLM uncertainty estimation."
}

@article{Baan2023UncertaintyIN,
  title={Uncertainty in Natural Language Generation: From Theory to Applications},
  author={Joris Baan and Nico Daheim and Evgenia Ilia and Dennis Ulmer and Haau-Sing Li and R. Fern{\'a}ndez and Barbara Plank and Rico Sennrich and Chrysoula Zerva and Wilker Aziz},
  journal={ArXiv},
  year={2023},
  volume={abs/2307.15703},
  url={https://api.semanticscholar.org/CorpusID:260316110}
}

@article{Xiao2019,
  title = {Quantifying Uncertainties in Natural Language Processing Tasks},
  volume = {33},
  ISSN = {2159-5399},
  url = {http://dx.doi.org/10.1609/aaai.v33i01.33017322},
  DOI = {10.1609/aaai.v33i01.33017322},
  number = {01},
  journal = {Proceedings of the AAAI Conference on Artificial Intelligence},
  publisher = {Association for the Advancement of Artificial Intelligence (AAAI)},
  author = {Xiao,  Yijun and Wang,  William Yang},
  year = {2019},
  pages = {7322–7329},
  address = {Honolulu, Hawaii, USA}
}

@article{Vijayakumar2018, title={Diverse Beam Search for Improved Description of Complex Scenes}, volume={32}, url={https://ojs.aaai.org/index.php/AAAI/article/view/12340}, DOI={10.1609/aaai.v32i1.12340}, abstractNote={ &lt;p&gt; A single image captures the appearance and position of multiple entities in a scene as well as their complex interactions. As a consequence, natural language grounded in visual contexts tends to be diverse---with utterances differing as focus shifts to specific objects, interactions, or levels of detail. Recently, neural sequence models such as RNNs and LSTMs have been employed to produce visually-grounded language. Beam Search, the standard work-horse for decoding sequences from these models, is an approximate inference algorithm that decodes the top-B sequences in a greedy left-to-right fashion. In practice, the resulting sequences are often minor rewordings of a common utterance, failing to capture the multimodal nature of source images. To address this shortcoming, we propose Diverse Beam Search (DBS), a diversity promoting alternative to BS for approximate inference. DBS produces sequences that are significantly different from each other by incorporating diversity constraints within groups of candidate sequences during decoding; moreover, it achieves this with minimal computational or memory overhead. We demonstrate that our method improves both diversity and quality of decoded sequences over existing techniques on two visually-grounded language generation tasks---image captioning and visual question generation---particularly on complex scenes containing diverse visual content. We also show similar improvements at language-only machine translation tasks, highlighting the generality of our approach. &lt;/p&gt; }, number={1}, journal={Proceedings of the AAAI Conference on Artificial Intelligence}, author={Vijayakumar, Ashwin and Cogswell, Michael and Selvaraju, Ramprasaath and Sun, Qing and Lee, Stefan and Crandall, David and Batra, Dhruv}, year={2018}, month={Apr.} }

@inproceedings{Holtzman2020The,
    title={The Curious Case of Neural Text Degeneration},
    author={Ari Holtzman and Jan Buys and Li Du and Maxwell Forbes and Yejin Choi},
    booktitle={International Conference on Learning Representations},
    year={2020},
    url={https://openreview.net/forum?id=rygGQyrFvH}
}

@inproceedings{hashimoto2025decodinguncertaintyimpactdecoding,
    title = "Decoding Uncertainty: The Impact of Decoding Strategies for Uncertainty Estimation in Large Language Models",
    author = "Hashimoto, Wataru  and
      Kamigaito, Hidetaka  and
      Watanabe, Taro",
    editor = "Christodoulopoulos, Christos  and
      Chakraborty, Tanmoy  and
      Rose, Carolyn  and
      Peng, Violet",
    booktitle = "Findings of the Association for Computational Linguistics: EMNLP 2025",
    month = nov,
    year = "2025",
    address = "Suzhou, China",
    publisher = "Association for Computational Linguistics",
    url = "https://aclanthology.org/2025.findings-emnlp.788/",
    doi = "10.18653/v1/2025.findings-emnlp.788",
    pages = "14601--14613",
    ISBN = "979-8-89176-335-7",
    abstract = "Decoding strategies manipulate the probability distribution underlying the output of a language model and can therefore affect both generation quality and its uncertainty. In this study, we investigate the impact of decoding strategies on uncertainty estimation in Large Language Models (LLMs). Our experiments show that Contrastive Search, which mitigates repetition, yields better uncertainty estimates on average across a range of preference-aligned LLMs. In contrast, the benefits of these strategies sometimes diverge when the model is only post-trained with supervised fine-tuning, i.e. without explicit alignment."
}

@inproceedings{daheim2025uncertaintyaware,
  title = {Uncertainty-Aware Decoding with Minimum {B}ayes Risk},
  author = {Daheim, Nico and Meister, Clara and M{\"o}llenhoff, Thomas and Gurevych, Iryna},
  booktitle = {The Thirteenth International Conference on Learning Representations},
  year = {2025},
  address = {Singapore},
  url = {https://openreview.net/forum?id=hPpyUv1XyQ}
}

@inproceedings{garces-arias-etal-2024-adaptive,
    title = "Adaptive Contrastive Search: Uncertainty-Guided Decoding for Open-Ended Text Generation",
    author = "Garces Arias, Esteban  and
      Rodemann, Julian  and
      Li, Meimingwei  and
      Heumann, Christian  and
      A{\ss}enmacher, Matthias",
    editor = "Al-Onaizan, Yaser  and
      Bansal, Mohit  and
      Chen, Yun-Nung",
    booktitle = "Findings of the Association for Computational Linguistics: EMNLP 2024",
    month = nov,
    year = "2024",
    address = "Miami, Florida, USA",
    publisher = "Association for Computational Linguistics",
    url = "https://aclanthology.org/2024.findings-emnlp.885/",
    doi = "10.18653/v1/2024.findings-emnlp.885",
    pages = "15060--15080",
    abstract = "Despite the remarkable capabilities of large language models, generating high-quality text remains a challenging task. Numerous decoding strategies{---}such as beam search, sampling with temperature, top{-}$k$ sampling, nucleus (top{-}$p$) sampling, typical decoding, contrastive decoding, and contrastive search{---}have been proposed to address these challenges by improving coherence, diversity, and resemblance to human-generated text. In this study, we introduce Adaptive Contrastive Search (ACS), a novel decoding strategy that extends contrastive search (CS) by incorporating an adaptive degeneration penalty informed by the model{'}s estimated uncertainty at each generation step. ACS aims to enhance creativity and diversity while maintaining coherence to produce high-quality outputs. Extensive experiments across various model architectures, languages, and datasets demonstrate that our approach improves both creativity and coherence, underscoring its effectiveness in text-generation tasks. We release our code, datasets, and models to facilitate further research."
}

@inproceedings{lee-etal-2025-uncertainty,
    title = "Uncertainty-Aware Contrastive Decoding",
    author = "Lee, Hakyung  and
      Park, Subeen  and
      Kim, Joowang  and
      Lim, Sungjun  and
      Song, Kyungwoo",
    editor = "Che, Wanxiang  and
      Nabende, Joyce  and
      Shutova, Ekaterina  and
      Pilehvar, Mohammad Taher",
    booktitle = "Findings of the Association for Computational Linguistics: ACL 2025",
    month = jul,
    year = "2025",
    address = "Vienna, Austria",
    publisher = "Association for Computational Linguistics",
    url = "https://aclanthology.org/2025.findings-acl.1352/",
    doi = "10.18653/v1/2025.findings-acl.1352",
    pages = "26376--26391",
    ISBN = "979-8-89176-256-5",
    abstract = "Large language models excel in a wide range of natural language processing tasks, but generating factually accurate and consistent outputs remains a challenge. To improve text reliability, Contrastive Decoding (CD) refines token selection by leveraging differences between an expert and base model, penalizing low-quality token choices. However, CD employs static weighting between models, making it sensitive to variations in model architecture and input characteristics, often resulting in suboptimal token selection and error propagation throughout generation. We propose Uncertainty-Aware Contrastive Decoding (UCD), a method that dynamically adjusts model contributions at each decoding step based on uncertainty. We introduce a cumulative energy function, where uncertainty is quantified as the negative log-sum-exp over logits, and decomposed into entropy and expected logit components. This energy serves as a dynamic confidence signal, guiding adaptive model weighting during generation. We demonstrate through extensive experiments that UCD significantly improves factual accuracy and reliability over existing decoding methods. Finally, we provide a theoretical analysis showing that our energy function serves as a well-defined uncertainty metric capturing model confidence. Our code is available at: https://github.com/MLAI-Yonsei/UCD."
}

@inproceedings{ding-etal-2025-guard,
    title = "{GUARD}: Glocal Uncertainty-Aware Robust Decoding for Effective and Efficient Open-Ended Text Generation",
    author = "Ding, Yuanhao  and
      Garces Arias, Esteban  and
      Li, Meimingwei  and
      Rodemann, Julian  and
      A{\ss}enmacher, Matthias  and
      Chen, Danlu  and
      Fan, Gaojuan  and
      Heumann, Christian  and
      Zhang, Chongsheng",
    editor = "Christodoulopoulos, Christos  and
      Chakraborty, Tanmoy  and
      Rose, Carolyn  and
      Peng, Violet",
    booktitle = "Findings of the Association for Computational Linguistics: EMNLP 2025",
    month = nov,
    year = "2025",
    address = "Suzhou, China",
    publisher = "Association for Computational Linguistics",
    url = "https://aclanthology.org/2025.findings-emnlp.380/",
    doi = "10.18653/v1/2025.findings-emnlp.380",
    pages = "7202--7226",
    ISBN = "979-8-89176-335-7",
    abstract = "Open-ended text generation faces a critical challenge: balancing coherence with diversity in LLM outputs. While contrastive search-based decoding strategies have emerged to address this trade-off, their practical utility is often limited by hyperparameter dependence and high computational costs. We introduce GUARD, a self-adaptive decoding method that effectively balances these competing objectives through a novel ``Glocal'' uncertainty-driven framework. GUARD combines global entropy estimates with local entropy deviations to integrate both long-term and short-term uncertainty signals. We demonstrate that our proposed global entropy formulation effectively mitigates abrupt variations in uncertainty, such as sudden overconfidence or high entropy spikes, and provides theoretical guarantees of unbiasedness and consistency. To reduce computational overhead, we incorporate a simple yet effective token-count-based penalty into GUARD. Experimental results demonstrate that GUARD achieves a good balance between text diversity and coherence, while exhibiting substantial improvements in generation speed. In a more nuanced comparison study across different dimensions of text quality, both human and LLM evaluators validated its remarkable performance. Our code is available at https://github.com/YecanLee/GUARD."
}
\bibliographystyle{iclr2026_conference}

\clearpage
\newpage

\appendix


\section{Ablation Studies}

\subsection{Different Sampling Strategies}
  \label{appendix:abl_sampling}

  This section studies how the proposed estimators behave under different sample generation strategies. In addition to multinomial sampling and beam search settings, we evaluate three additional families.

\paragraph{Diverse beam search.}
  We generate $M=10$ candidates using a diverse beam search~\citep{Vijayakumar2018} with group penalties $\lambda \in \{0.5, 1.0, 1.5, 2.0\}$ and group counts that split the ten candidates into $G \in \{2,5\}$ groups. As in the main beam setup, we apply the same self-normalized probability weights $w_i$ from equation~\eqref{eq:restricted-mass}.

\paragraph{Temperature sampling with importance weights.}
  For different temperatures $T$, we draw $M=10$ samples with temperature sampling $\{\yv_T^{(i)}\}_{i=1}^M$ and re-weight them via self-normalized importance weights
  \begin{equation}
    w^T_i = \frac{p \bigl(\yv_T^{(i)} \mid \xv \bigr)^{1 - 1/T}}{\sum_{j=1}^M p \bigl(\yv_T^{(j)} \mid \xv \bigr)^{1 - 1/T}}.
  \label{eq:importance_weights}
  \end{equation}

\paragraph{Hybrid multinomial–beam search.}
  We also consider a joint strategy: first draw $B$ beam candidates, then draw the remaining $M - B$ candidates via multinomial sampling while excluding the beam results. Beam candidates use autoregressive probability weights, and the residual probability mass is distributed uniformly over the multinomial samples. Let $\{\bv^{(i)}\}_{i=1}^{B}$ be the beam outputs and $\{\yv^{(j)}\}_{j=B + 1}^{M}$ the multinomial samples (with beam sequences masked out). We assign weights
  \begin{equation}
    w^H_i = p \bigl(\bv^{(i)} \mid \xv\bigr), \quad i=1, \dots, B,
    \qquad
    w^H_j = \frac{1 - \sum_{i=1}^{B} p \left(\bv^{(i)} \mid \xv\right)}{M-B},\quad j=B + 1,\dots,M,
  \label{eq:hybrid_weights}
  \end{equation}
  so that $\sum_{i=1}^{M} w^H_i = 1$. We test $B \in \{1,\dots,9\}$.

  \begin{table}[!b]

\caption{PRR (\(\uparrow\) is better) under different sampling strategies. Columns list methods (Dissimilarity, Eccentricity, EigVecDissimilarity) and four different model-dataset pairs; rows list strategies with their hyperparameters. Per column, top-1 is \textbf{bold}, second-best is \underline{underlined}.}

\centering
\small
\resizebox{\textwidth}{!}{%
\begin{tabular}{cc|ccc|ccc|ccc|ccc}
\toprule

\multicolumn{2}{c|}{} & \multicolumn{6}{c|}{\gemmab} & \multicolumn{6}{c}{\llamab} \\

\multicolumn{2}{c|}{} & \multicolumn{3}{c|}{TriviaQA} & \multicolumn{3}{c|}{CoQA} & \multicolumn{3}{c|}{TriviaQA} & \multicolumn{3}{c}{CoQA} \\

\multicolumn{2}{c|}{} & Dissim & Ecc & \multirowcell{EigVec\\Dissim} & Dissim & Ecc & \multirowcell{EigVec\\Dissim} & Dissim & Ecc & \multirowcell{EigVec\\Dissim} & Dissim & Ecc & \multirowcell{EigVec\\Dissim} \\

\midrule
\multicolumn{2}{c|}{Beam search} & .771 & .732 & .751 & \textbf{.561} & .483 & .488 & .623 & .581 & .598 & \underline{.502} & \textbf{.438} & \textbf{.454} \\
\midrule
\multirow{6}{*}{\multirowcell{Multinomial \\Sampling}}
& $T=0.7$ & .703 & .619 & .687 & .455 & .368 & .397 & .561 & .521 & .538 & .451 & .371 & .382 \\
& $T=0.9$ & .734 & .664 & .710 & .468 & .417 & .425 & .588 & .521 & .533 & .418 & .407 & .410 \\
& $T=1.0$ & .742 & .689 & .715 & .465 & .424 & .432 & .599 & .516 & .537 & .431 & .416 & .424 \\
& $T=1.2$ & .733 & .679 & .717 & .435 & .424 & .437 & .610 & .517 & .535 & .391 & .427 & .433 \\
& $T=1.5$ & .718 & .685 & .717 & .406 & .426 & .436 & .555 & .515 & .532 & .397 & .431 & .440 \\
& $T=1.7$ & .680 & .690 & .717 & .372 & .426 & .433 & .569 & .514 & .530 & .304 & .432 & \underline{.441} \\
\midrule
\multirow{8}{*}{\multirowcell{Diverse \\ Beam \\ Search}}
 & $G=2, \lambda=0.5$ & .753 & .528 & .693 & .498 & .405 & .452 & .623 & -.123 & .546 & .458 & .310 & .363 \\
 & $G=2, \lambda=1.0$ & .753 & .566 & .705 & .518 & .384 & .432 & .594 & -.138 & .539 & .466 & .285 & .355 \\
 & $G=2, \lambda=1.5$ & .763 & .522 & .714 & .537 & .420 & .441 & .618 & -.101 & .542 & .462 & .245 & .317 \\
 & $G=2, \lambda=2.0$ & .759 & .515 & .702 & .547 & .377 & .391 & .630 & -.130 & .546 & .452 & .215 & .287 \\
 & $G=5, \lambda=0.5$ & .758 & .546 & .736 & .493 & .401 & .453 & .591 & -.026 & .569 & .395 & .262 & .353 \\
 & $G=5, \lambda=1.0$ & .768 & .523 & .746 & .515 & .369 & .423 & .615 & -.086 & .563 & .447 & .274 & .376 \\
 & $G=5, \lambda=1.5$ & .761 & .453 & .723 & .513 & .391 & .427 & .623 & -.153 & .533 & .461 & .199 & .324 \\
 & $G=5, \lambda=2.0$ & .770 & .476 & .690 & .513 & .355 & .415 & .631 & -.093 & .548 & .453 & .132 & .254 \\
\midrule
\multirow{9}{*}{\multirowcell{Hybrid \\ Multinomial- \\ Beam}}
 & $B=1$ & .759 & .715 & .746 & .512 & .451 & .433 & .597 & .519 & .549 & .386 & .314 & .325 \\
 & $B=2$ & .765 & .731 & .745 & .519 & .470 & .435 & .617 & .564 & .586 & .445 & .338 & .345 \\
 & $B=3$ & \underline{.781} & .736 & .754 & .503 & .461 & .439 & .620 & .553 & .598 & \textbf{.519} & \underline{.436} & .424 \\
 & $B=4$ & \textbf{.784} & \textbf{.750} & \textbf{.769} & .516 & .467 & .428 & .622 & .572 & \underline{.617} & .436 & .388 & .382 \\
 & $B=5$ & .757 & .733 & .749 & .538 & .500 & .466 & \textbf{.655} & .578 & .609 & .470 & .412 & .419 \\
 & $B=6$ & .773 & .733 & .756 & .528 & \textbf{.512} & .488 & .635 & .586 & .613 & .486 & .421 & .415 \\
 & $B=7$ & .771 & .737 & .754 & .543 & .468 & .471 & .640 & .596 & .617 & .483 & .399 & .427 \\
 & $B=8$ & .764 & .733 & .755 & .548 & .504 & \textbf{.507} & \underline{.648} & \underline{.597} & .610 & .491 & .434 & .425 \\
 & $B=9$ & .772 & \underline{.747} & \underline{.765} & \underline{.551} & \underline{.509} & \underline{.497} & .646 & \textbf{.597} & \textbf{.618} & .501 & .427 & .439 \\
\bottomrule
\end{tabular}
}
\label{tab:abl_sampling}
\end{table}

  Evaluations use a subset of 500 examples from TriviaQA and CoQA with two base models, \gemmab and \llamab. Results are summarized in Table~\ref{tab:abl_sampling}.

  No single strategy dominates across datasets, models, or estimators. Temperature sampling (with importance weights) and diverse beam search systematically yield to beam search and hybrid multinomial-beam. Hybrid multinomial–beam strategy can reach top-1 for specific hyperparameter \(B\), but gains are not systematic and are sensitive to tuning. Given this variability and tuning cost, plain beam search with probability weighting is a reasonable default.

\subsection{Restricted-Mass Normalization}
\label{appendix:abl_normalization}
  Equation~\eqref{eq:restricted-mass} normalizes autoregressive sequence probabilities over the $M$ beam candidates. This choice can be sensitive to tail candidates whose probabilities are tiny and length-dependent. To test robustness, we introduce a floor $\epsilon$ on the per-candidate mass:
  \begin{equation}
    w_i^{\epsilon} = \frac{\max\!\left(\epsilon,\, p(\bv^{(i)} \mid \xv)\right)}{\sum_{j=1}^{M} \max\!\left(\epsilon,\, p(\bv^{(j)} \mid \xv)\right)}.
  \label{eq:epsilon_floor}
  \end{equation}
  The setting $\epsilon = 0$ recovers equation~\eqref{eq:restricted-mass}; $\epsilon = 1$ yields uniform weights $w_i^{1} = 1/M$. Intermediate $\epsilon$ values trade off fidelity to the model distribution against robustness to noisy, length-biased tails.

  We evaluate beam-guided probability-weighted methods for different $\epsilon$ on a subset of 500 examples from TriviaQA and CoQA with two base models, \gemmab and \llamab. Results are summarized in Table~\ref{tab:abl_normalization}.

  The results do not indicate a clear best choice of method and corresponding $\epsilon$ parameter.
  Determining the optimal $\epsilon$ is a case-dependent task.
  \begin{table}[h]

\caption{PRR (\(\uparrow\) is better) under restricted-mass normalization ablation. Columns group dataset–model pairs with methods (Dissim, Ecc, EigVecDissim). Rows vary the mass floor \(\epsilon\) in equation~\eqref{eq:epsilon_floor}: \(\epsilon=0\) recovers equation~\eqref{eq:restricted-mass}; \(\epsilon=1\) yields uniform weights \(w_i=1/M\). For each dataset–method, the top-1 score is \textbf{bold} and the second-best is \underline{underlined}.}

\centering
\small
\resizebox{\textwidth}{!}{%
\begin{tabular}{l|ccc|ccc|ccc|ccc}
\toprule
 & \multicolumn{6}{c|}{\gemmab} & \multicolumn{6}{c}{\llamab} \\
 & \multicolumn{3}{c|}{TriviaQA} & \multicolumn{3}{c|}{CoQA} & \multicolumn{3}{c|}{TriviaQA} & \multicolumn{3}{c}{CoQA} \\
 & Dissim & Ecc & \multirowcell{EigVec\\Dissim} & Dissim & Ecc & \multirowcell{EigVec\\Dissim} & Dissim & Ecc & \multirowcell{EigVec\\Dissim} & Dissim & Ecc & \multirowcell{EigVec\\Dissim} \\
\midrule
$\epsilon=1.0$ & .765 & \textbf{.741} & .744 & .536 & .487 & .461 & \textbf{.668} & .596 & .607 & .470 & .428 & .410 \\
$\epsilon=0.1$ & .765 & .727 & .745 & .556 & \textbf{.497} & .483 & \underline{.667} & \textbf{.612} & \textbf{.627} & \underline{.502} & \textbf{.447} & .446 \\
$\epsilon=0.05$ & .764 & .720 & .744 & .561 & \underline{.490} & .487 & .657 & \underline{.606} & \underline{.626} & \textbf{.509} & .435 & .451 \\
$\epsilon=0.01$ & .766 & .718 & .749 & .559 & .478 & \textbf{.489} & .630 & .584 & .602 & .496 & .437 & .452 \\
$\epsilon=0.001$ & \underline{.771} & .731 & \textbf{.751} & \textbf{.562} & .484 & \underline{.488} & .624 & .581 & .598 & .501 & \underline{.438} & .453 \\
$\epsilon=0.00001$ & \textbf{.771} & \underline{.732} & \underline{.751} & \underline{.561} & .483 & .488 & .623 & .581 & .598 & .502 & .438 & \textbf{.454} \\
$\epsilon=0$ & \textbf{.771} & \underline{.732} & \underline{.751} & \underline{.561} & .483 & .488 & .623 & .581 & .598 & .502 & .438 & \underline{.454} \\
\bottomrule
\end{tabular}
}
\label{tab:abl_normalization}
\end{table}

\subsection{Other Sampling-Based Methods Under Beam Search}
\label{appendix:abl_other_methods}
  Beyond Dissimilarity, Eccentricity, and EigVecDissimilarity, this ablation evaluates two other sampling-based methods under beam-generated candidates: \emph{Degree Matrix}~\citep{lin2023generating} and \emph{Semantic Entropy}~\citep{kuhn2023semantic}. We also provide probability-weighted beam formulations using the weights $w_i$ from equation~\eqref{eq:restricted-mass}.

\paragraph{Degree Matrix.}
  Given $M$ multinomial samples $\{\yv^{(i)}\}_{i=1}^{M}$, Degree Matrix estimates the average pairwise dissimilarity:
  \begin{equation}
    \widehat{U}_{DegMat}(\xv) = \frac{1}{M^2} \sum_{i=1}^{M} \sum_{j=1}^{M} \bigl(1 - s(\yv^{(i)}, \yv^{(j)})\bigr).
  \end{equation}
  For beam candidates \(\{\bv^{(i)}\}_{i=1}^{M}\), our mass-aware variant averages with weights:
  \begin{equation}
    \widehat{U}_{DegMat}^{b}(\xv) = \sum_{i=1}^{M} w_i \sum_{j=1}^{M} w_j \bigl(1 - s(\bv^{(i)}, \bv^{(j)})\bigr).
  \end{equation}

\paragraph{Semantic Entropy.}
  Multinomial samples are clustered into semantic equivalence classes \(C\). For each class, we calculate its probability
  \begin{equation}
    \hat{p}(c) = \frac{1}{M} \sum_{i=1}^{M} \mathbf{1}\{\yv^{(i)} \in c\} \quad \text{for } c \in C.
  \end{equation}
  Then Semantic Entropy calculates
  \begin{equation}
    \widehat{U}_{SemEnt}(\xv) = - \frac{1}{|C|} \sum_{c \in C} \log \hat{p}(c).
  \end{equation}
  For beam candidates, use cluster masses aggregated by \(w_i\):
  \begin{equation}
    \hat{p}^{b}(c) = \sum_{i=1}^{M} w_i \,\mathbf{1}\{\bv^{(i)} \in c\}, \qquad
    \widehat{U}_{SemEnt}^{b}(\xv) = - \sum_{c \in C} \hat{p}^{b}(c) \log \hat{p}^{b}(c).
  \end{equation}
  Note that these objectives score LLM uncertainty about the \emph{input} \(\xv\) as they are independent of a particular $\yv_*$.

  The results are summarized in Table~\ref{tab:abl_other_methods}. Beam search yields significant gains for Semantic Entropy and little to no improvement for Degree Matrix. Even with the beam-adapted formulations above, both objectives show worse results in terms of absolute PR-AUC compared to other methods. The primary reason is the target mismatch: as noted, these scores quantify uncertainty of the input $\xv$ and are independent of the produced answer $\yv_*$, whereas our main methods, Dissimilarity, Eccentricity and EigVecDissimilarity, focuses on ranking the correctness of $\yv_*$ itself.

  \begin{table}[t!]
    \caption{PR-AUC (\(\uparrow\) is better) on 6 datasets with \gemmab. Each method is shown as a pair: its multinomial-sampling variant and its beam-search variant; \(\uparrow\) denotes an improvement of the beam variant over its multinomial counterpart. Along main methods, the table includes input-uncertainty methods (Semantic Entropy, Lexical Similarity). For each dataset, the top-1 score is \textbf{bold} and the second-best is \underline{underlined}. The rightmost column reports the mean PR-AUC across datasets.}
    \centering
    \small
    \resizebox{\textwidth}{!}{
      \begin{tabular}{lccccccc}
\toprule
UQ Method & TriviaQA & \multirowcell{Web\\Questions} & CoQA & HotpotQA & \multirowcell{Common\\senceQA} & \multirowcell{ARC-\\Challenge} & Mean \\
\midrule
Semantic Entropy & .622 & .505 & .301 & .140 & .407 & .431 & .401 \\
Semantic Entropy + beamsearch & .685$\uparrow$ & .614$\uparrow$ & .365$\uparrow$ & .278$\uparrow$ & .436$\uparrow$ & .454$\uparrow$ & .472$\uparrow$ \\
\midrule
Degree Matrix & .682 & .605 & .385 & .311 & .409 & .419 & .469 \\
Degree Matrix + beamsearch & .673 & .642$\uparrow$ & .328 & .244 & .444$\uparrow$ & .473$\uparrow$ & .467 \\
\midrule
SAR & .656 & .571 & .347 & .296 & .183 & .264 & .386 \\
SAR + beamsearch & .671$\uparrow$ & .589$\uparrow$ & .329 & .266 & .209$\uparrow$ & .269$\uparrow$ & .372 \\
\midrule
Dissimilarity & \underline{.755} & \underline{.715} & \underline{.578} & \underline{.626} & .561 & .545 & .630 \\
Dissimilarity + beamsearch & \textbf{.766}$\uparrow$ & \textbf{.722}$\uparrow$ & \textbf{.600}$\uparrow$ & .611 & \textbf{.595}$\uparrow$ & .604$\uparrow$ & \textbf{.650}$\uparrow$ \\
\midrule
Eccentricity & .714 & .653 & .459 & .453 & .549 & .549 & .563 \\
Eccentricity + beamsearch & .739$\uparrow$ & .633 & .505$\uparrow$ & .514$\uparrow$ & \underline{.590}$\uparrow$ & \textbf{.636}$\uparrow$ & .603$\uparrow$ \\
\midrule
EigVecDissimilarity & .738 & .661 & .443 & .448 & .512 & .562 & .561 \\
EigVecDissimilarity + beamsearch & .753$\uparrow$ & .668$\uparrow$ & .497$\uparrow$ & .487$\uparrow$ & .562$\uparrow$ & \underline{.621}$\uparrow$ & .598$\uparrow$ \\
\midrule
CocoaMSP & .738 & .666 & .509 & .430 & .583 & .595 & .587 \\
CocoaMSP + beamsearch & .747$\uparrow$ & .679$\uparrow$ & .548$\uparrow$ & .523$\uparrow$ & .586$\uparrow$ & .606$\uparrow$ & .615$\uparrow$ \\
\midrule
CocoaPPL & .739 & .678 & .548 & .625 & .580 & .595 & .628 \\
CocoaPPL + beamsearch & .748$\uparrow$ & .694$\uparrow$ & .577$\uparrow$ & \textbf{.681}$\uparrow$ & .582$\uparrow$ & .610$\uparrow$ & \underline{.649}$\uparrow$ \\
\bottomrule
\end{tabular}
    }
  \label{tab:abl_other_methods}
  \end{table}

  \begin{figure}[h]
    \centering
    \includegraphics[width=.5\linewidth, trim=0 2em 0 2.5em]{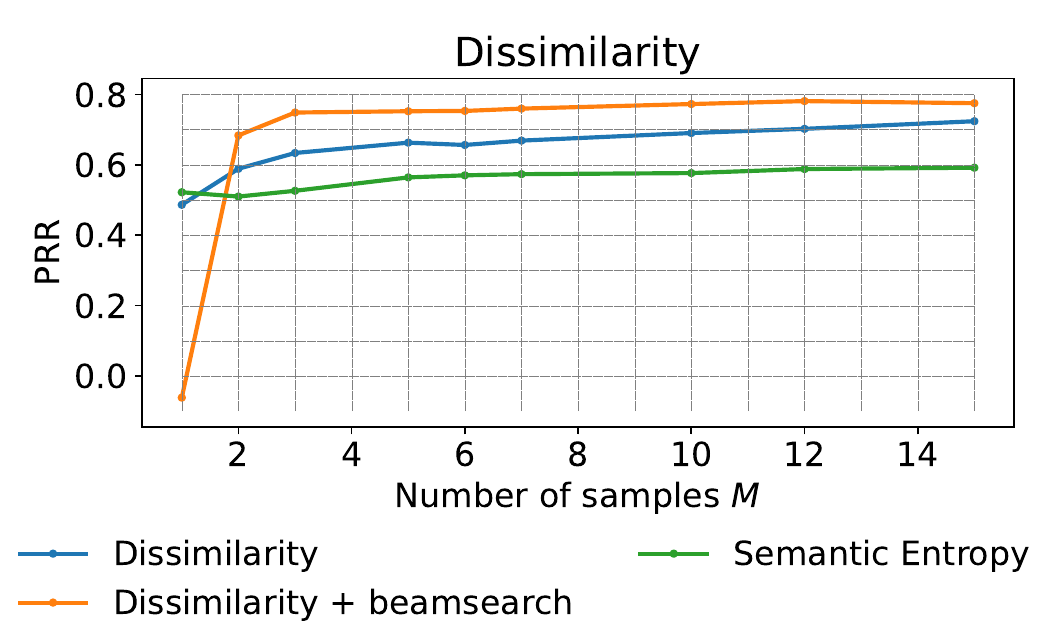}
    \caption{PRR ($\uparrow$ is better) as a function of the number of candidates $M$ on TriviaQA with \gemmab for 3 UQ methods: Semantic Entropy, and sampling and beam search versions of Dissimilarity.}
  \label{fig:sement_different_nsamples}
  \end{figure}

  To further assess performance under different numbers of samples $M$ used for UQ, we plot PRR as a function of $M$ for one selected baseline, Semantic Entropy, as well as for Dissimilarity (both sampling and beam-search variants) for reference. Figure~\ref{fig:sement_different_nsamples} presents the results, showing that for all $M>1$, Semantic Entropy underperforms both Dissimilarity variants. This occurs because Dissimilarity measures the targeted uncertainty of the specific generation $\yv_*$ rather than the overall uncertainty associated with $\xv$, measured by Semantic Entropy.

\subsection{Graph Laplacian Embedding Parameters}
\label{appendix:abl_ecc_params}
  Both multinomial and beam-guided versions of Eccentricity and EigVecDissimilarity depend on the threshold parameter $\alpha$, which selects eigenvectors of the Graph Laplacian $L = I - D^{-1/2} W D^{-1/2}$ used to form semantic embeddings. Specifically, after computing the eigenpairs $\{\lambda_i, \uv_i\}_{i=1}^{M+1}$, we retain those with $\lambda_i < \alpha$, yielding $K$ eigenvectors in total and embeddings $\vv_j = [\uv_{1j}, \uv_{2j}, \dots, \uv_{Kj}]$ of dimension $K$. All eigenvalues lie in $[0, 1]$; smaller values capture stronger semantic structure, whereas values closer to $1$ tend to reflect noise~\citep{lin2023generating}. In the main experiments we follow the original Eccentricity setting and use $\alpha=0.9$.

  Here we vary $\alpha$ and also test a fixed-$K$ strategy (i.e., keeping exactly $K$ leading low-spectrum eigenvectors irrespective of the threshold).

  \begin{table}[t]

\caption{PRR (\(\uparrow\) is better) for Eccentricity and EigVecDissimilarity under different Graph Laplacian embedding choices on four dataset–model pairs. Top block varies the eigenvalue threshold \(\alpha\) (retaining all \(\lambda_i < \alpha\)); bottom block fixes the embedding dimension \(K\). For each pair, the best score is \textbf{bold} and the second-best is \underline{underlined}.}

\centering

\small
\resizebox{0.7\textwidth}{!}{%
\begin{tabular}{l|cc|cc|cc|cc}
\toprule
 & \multicolumn{4}{c|}{\gemmab} & \multicolumn{4}{c}{\llamab} \\
 & \multicolumn{2}{c|}{TriviaQA} & \multicolumn{2}{c|}{CoQA} & \multicolumn{2}{c|}{TriviaQA} & \multicolumn{2}{c}{CoQA} \\
 & Ecc & \multirowcell{EigVec\\Dissim} & Ecc & \multirowcell{EigVec\\Dissim} & Ecc & \multirowcell{EigVec\\Dissim} & Ecc & \multirowcell{EigVec\\Dissim} \\
\midrule
$\alpha=0.3$ & .717 & .710 & .434 & .355 & .601 & .599 & .408 & .364 \\
$\alpha=0.5$ & \textbf{.752} & .740 & .497 & .460 & \textbf{.627} & .626 & .431 & .420 \\
$\alpha=0.7$ & .750 & .749 & \textbf{.539} & \textbf{.498} & \underline{.622} & \textbf{.643} & .438 & .441 \\
$\alpha=0.8$ & \underline{.751} & \textbf{.757} & \underline{.508} & .475 & .616 & \underline{.630} & \textbf{.444} & \underline{.450} \\
$\alpha=0.9$ & .732 & .751 & .483 & \underline{.488} & .581 & .598 & \underline{.438} & \textbf{.454} \\
$\alpha=0.99$ & .725 & \underline{.755} & .432 & .454 & .535 & .561 & .397 & .409 \\
\midrule
$K=1$ & .454 & .358 & .346 & .332 & .444 & .357 & .304 & .280 \\
$K=2$ & .510 & .532 & .383 & .361 & .474 & .469 & .318 & .338 \\
$K=3$ & .619 & .639 & .434 & .429 & .538 & .534 & .351 & .348 \\
$K=4$ & .645 & .643 & .418 & .412 & .519 & .514 & .359 & .352 \\
$K=5$ & .638 & .655 & .366 & .352 & .487 & .492 & .314 & .316 \\
$K=6$ & .529 & .545 & .244 & .236 & .363 & .368 & .210 & .211 \\
$K=7$ & .210 & .242 & -.101 & -.076 & .050 & .062 & -.208 & -.211 \\
$K=8$ & -.265 & -.209 & -.210 & -.171 & -.358 & -.349 & -.327 & -.308 \\
$K=9$ & -.566 & -.414 & -.268 & -.186 & -.462 & -.410 & -.339 & -.317 \\
$K=10$ & -.659 & -.484 & -.283 & -.189 & -.467 & -.397 & -.330 & -.249 \\
\bottomrule
\end{tabular}
}
\label{tab:abl_ecc_params}
\end{table}

  Table~\ref{tab:abl_ecc_params} reports the performance for Eccentricity and EigVecDissimilarity across $\alpha$ and $K$ on four dataset–model pairs. A fixed embedding size performs poorly: the optimal number of informative directions varies between candidate sets, so fixing $K$ either underfits or includes noisy directions. Thresholding is more robust: $\alpha \in [0.7, 0.9]$ consistently yields strong results across methods and pairs, supporting our default choice $\alpha=0.9$.

\subsection{Cross-Encoder Similarity}
\label{appendix:abl_crossencoder}
  In the main text, we instantiate the similarity function $s$ using an NLI score: the entailment probability from a DeBERTa model. CoCoA, however, originally used a RoBERTa-large \emph{cross-encoder} fine-tuned on the Semantic Textual Similarity benchmark~\citep{liu2020roberta}. Table~\ref{tab:abl_crossencoder} reports PRR for \gemmab when replacing the NLI-based $s$ with this cross-encoder; all other settings are unchanged.

  \begin{table}[h]
    \caption{PRR (\(\uparrow\) is better) on 6 datasets with \gemmab using a RoBERTa-large cross-encoder (STS) as the similarity function \(s\) in place of NLI. For each dataset, the top-1 is \textbf{bold} and the second-best is \underline{underlined}; \(\uparrow\) marks an improvement of a beam variant over its multinomial counterpart.}
    \centering
    \small
    \resizebox{\textwidth}{!}{
      \begin{tabular}{lccccccc}
\toprule
UQ Method & TriviaQA & \multirowcell{Web\\Questions} & CoQA & HotpotQA & \multirowcell{Common\\senceQA} & \multirowcell{ARC-\\Challenge} & Mean \\
\midrule
Dissimilarity & .725 & \underline{.683} & .497 & .597 & .481 & .421 & .567 \\
Dissimilarity + beamsearch & \textbf{.746}$\uparrow$ & \textbf{.693}$\uparrow$ & \textbf{.513}$\uparrow$ & \textbf{.654}$\uparrow$ & .505$\uparrow$ & .479$\uparrow$ & \underline{.598}$\uparrow$ \\
\midrule
Eccentricity & .722 & .647 & .489 & .544 & .455 & .500 & .560 \\
Eccentricity + beamsearch & .734$\uparrow$ & .647$\uparrow$ & .483 & .604$\uparrow$ & .362 & .421 & .542 \\
\midrule
EigVecDissimilarity & .737 & .649 & .453 & .523 & .489 & .529 & .563 \\
EigVecDissimilarity + beamsearch & \underline{.744}$\uparrow$ & .675$\uparrow$ & .484$\uparrow$ & .582$\uparrow$ & .439 & .496 & .570$\uparrow$ \\
\midrule
CocoaMSP & .731 & .642 & .438 & .397 & \underline{.553} & .577 & .556 \\
CocoaMSP + beamsearch & .740$\uparrow$ & .648$\uparrow$ & .462$\uparrow$ & .479$\uparrow$ & \textbf{.558}$\uparrow$ & \textbf{.593}$\uparrow$ & .580$\uparrow$ \\
\midrule
CocoaPPL & .728 & .653 & .488 & .607 & .546 & .567 & .598 \\
CocoaPPL + beamsearch & .737$\uparrow$ & .658$\uparrow$ & \underline{.498}$\uparrow$ & \underline{.650}$\uparrow$ & .548$\uparrow$ & \underline{.586}$\uparrow$ & \textbf{.613}$\uparrow$ \\
\bottomrule
\end{tabular}
    }
  \label{tab:abl_crossencoder}
  \end{table}

\subsection{Number of Samples Across Different Tasks}

  \begin{figure}[h]
    \centering
    \includegraphics[width=\linewidth, trim=0 2em 0 2.5em]{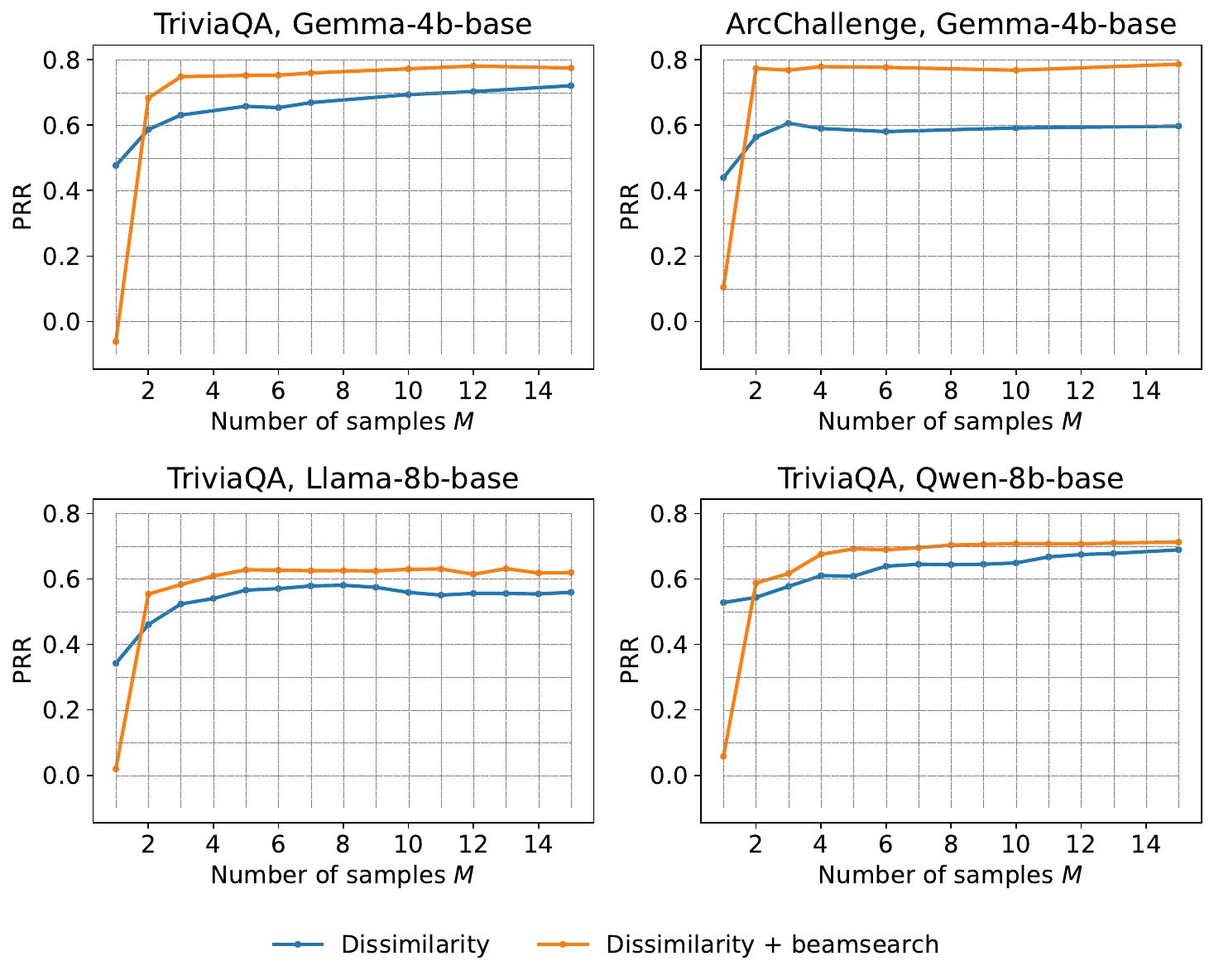}
    \caption{PRR ($\uparrow$ is better) as a function of the number of candidates $M$ across different datasets and models.}
    \label{fig:rebuttal_nsamples_across_data}
  \end{figure}

  To evaluate the performance of the proposed beam-search variations under different numbers of samples $M$ across models, we computed PRR for both the sampling and beam-search versions of Dissimilarity on 200 random subsamples of TriviaQA for three LLMs: \gemmab, \llamab, and \qwenb. To further assess performance across datasets, we additionally evaluated PRR for \gemmab on the 200 subsamples of ARC-Challenge dataset. All four resulting plots are shown in Figure~\ref{fig:rebuttal_nsamples_across_data}.

  The results show that for all budgets $M > 1$, beam search consistently outperforms sampling, yielding higher PRR. On the open-ended TriviaQA dataset, PRR increases steadily with $M$, with beam search reaching a plateau around $M = 5$ for all 3 models tested. On the multiple-choice ARC-Challenge dataset, PRR plateaus at a considerably smaller budget ($M = 2$), likely due to the small output space (i.e., a limited set of answer choices).

  Overall, these results indicate that the beam-search variant of Dissimilarity remains effective even at relatively small sample budgets: $M \approx 5$ for open-ended short-form generation tasks, and $M = 2$ for multiple-choice settings, where the constrained output space enables faster saturation.

\clearpage

\section{Analysis and Examples}

\subsection{Probability Mass Coverage}
\label{appendix:abl_prob_mass}

  \begin{figure}[h]
    \centering
    \includegraphics[width=\linewidth, trim=0 2em 0 2.5em]{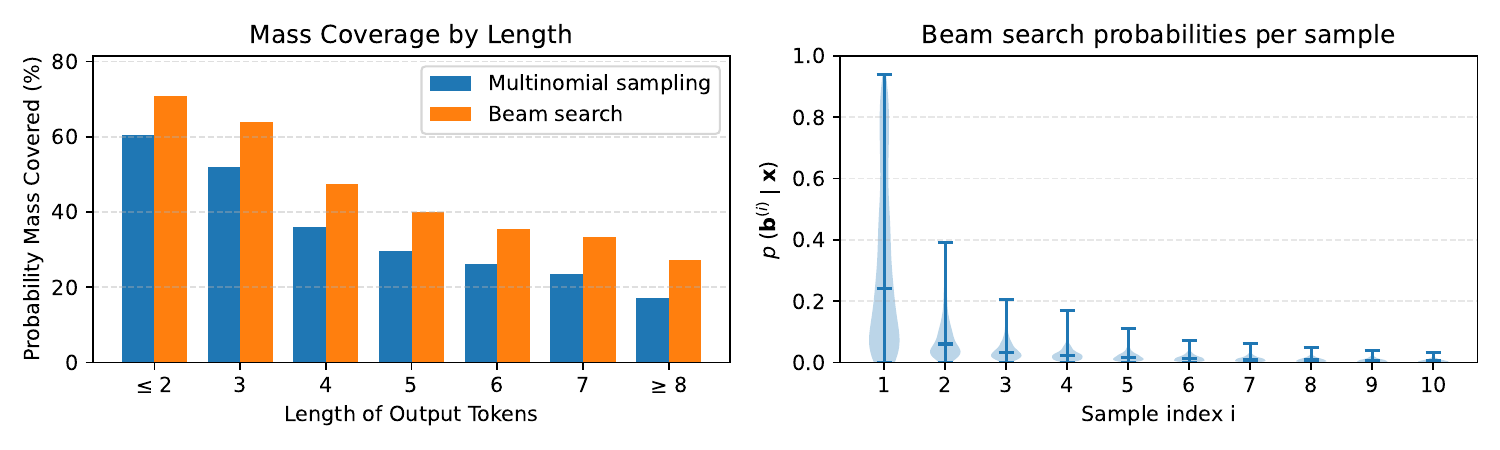}
    \caption{\textit{Left}: average probability mass covered by the candidate set ($M{=}10$) across output-length bins (averaged over examples in the bin) on TriviaQA with \gemmab. \textit{Right}: for beam search, distribution of sequence probabilities $p(\bv^{(i)} \mid \xv)$ by beam rank $i$ (1 = highest-probability text).}
  \label{fig:abl_prob_mass}
  \end{figure}

  Figure~\ref{fig:abl_prob_mass} summarizes two observations. First (left), beam search covers a larger share of the model's probability mass than multinomial sampling across length bins. Second (right), beam probabilities decay sharply with rank: the first few beams capture most of the mass, while lower-ranked beams contribute little. This motivates mass-aware weighting $w_i$ (see equation~\eqref{eq:restricted-mass}) and helps explain why probability-weighted beam variants are effective, especially at small candidate budgets.

\subsection{Examples}
  We include qualitative examples for \gemmab: two from TriviaQA, two from WebQuestions, and one from CoQA. Each panel shows the question, the greedy answer, ten multinomial samples, and ten beam-search samples with autoregressive probabilities, together with the corresponding uncertainty scores (e.g., Dissimilarity and its beam-guided variant). The cases illustrate how beam search reduces duplication and enhances uncertainty.

  \begin{figure}[h]
\centering
\begin{minipage}[t]{0.48\linewidth}
\begin{tcolorbox}[colback=black!1, colframe=black!12, arc=2mm, boxrule=0.4pt, left=6pt, right=6pt, top=6pt, bottom=6pt]
\small
\setlength{\tabcolsep}{6pt}
\textbf{Question:} What claimed the life of singer Kathleen Ferrier?\\
\textbf{Greedy:} breasts cancer\\[4pt]
\begin{tabular}{@{}p{0.35\linewidth}|p{0.6\linewidth}@{}}
\midrule
\multicolumn{1}{c|}{\multirowcell{Multinomial\\samples}} &
\multicolumn{1}{c}{\multirowcell{Beam-search\\samples}} \\
\midrule
cancer & cancer\hfill\textcolor{gray}{\footnotesize p=0.228}\\
breast cancer & tuberculosis\hfill\textcolor{gray}{\footnotesize p=0.154}\\
pulmonary & breast cancer\hfill\textcolor{gray}{\footnotesize p=0.089}\\
breast cancer & lung cancer\hfill\textcolor{gray}{\footnotesize p=0.041}\\
cancer & pneumonia\hfill\textcolor{gray}{\footnotesize p=0.039}\\
breast cancer & leukaemia\hfill\textcolor{gray}{\footnotesize p=0.034}\\
myx & myel\hfill\textcolor{gray}{\footnotesize p=0.023}\\
cancer & leuk\hfill\textcolor{gray}{\footnotesize p=0.011}\\
cancer & pulmonary\hfill\textcolor{gray}{\footnotesize p=0.011}\\
pneumonia & lymphoma\hfill\textcolor{gray}{\footnotesize p=0.010}\\
\midrule
\end{tabular}\\[4pt]
\textbf{Dissimilarity:} 0.330 \\
\textbf{Dissimilarity + beamsearch:} 0.533
\end{tcolorbox}
\end{minipage}
\hfill
\begin{minipage}[t]{0.48\linewidth}
\begin{tcolorbox}[colback=black!1, colframe=black!12, arc=2mm, boxrule=0.4pt, left=6pt, right=6pt, top=6pt, bottom=6pt]
\small
\setlength{\tabcolsep}{6pt}
\textbf{Question:} Which number Beethoven symphony is known as ‘The Pastoral’?\\
\textbf{Greedy:} 6\\[4pt]
\begin{tabular}{@{}p{0.35\linewidth}|p{0.6\linewidth}@{}}
\midrule
\multicolumn{1}{c|}{\multirowcell{Multinomial\\samples}} &
\multicolumn{1}{c}{\multirowcell{Beam-search\\samples}} \\
\midrule
six & sixth\hfill\textcolor{gray}{\footnotesize p=0.314}\\
seventh & 6\hfill\textcolor{gray}{\footnotesize p=0.169}\\
sixth & 6th\hfill\textcolor{gray}{\footnotesize p=0.104}\\
sixth & ninth\hfill\textcolor{gray}{\footnotesize p=0.061}\\
sixth & seventh\hfill\textcolor{gray}{\footnotesize p=0.037}\\
6 & 9\hfill\textcolor{gray}{\footnotesize p=0.027}\\
seventh & six\hfill\textcolor{gray}{\footnotesize p=0.023}\\
no & 9th\hfill\textcolor{gray}{\footnotesize p=0.021}\\
sixteenth & no.\hfill\textcolor{gray}{\footnotesize p=0.013}\\
n6 & 7\hfill\textcolor{gray}{\footnotesize p=0.008}\\
\midrule
\end{tabular}\\[4pt]
\textbf{Dissimilarity:} 0.634 \\
\textbf{Dissimilarity + beamsearch:} 0.561
\end{tcolorbox}
\end{minipage}
\caption{Two examples from \gemmab on TriviaQA. Each panel shows the question, greedy answer, multinomial and beam-search samples with autoregressive probabilities, plus dissimilarity and beamsearch-guided dissimilarity.}
\label{fig:triviaqa-gemma4b}
\end{figure}
  \begin{figure}[h]
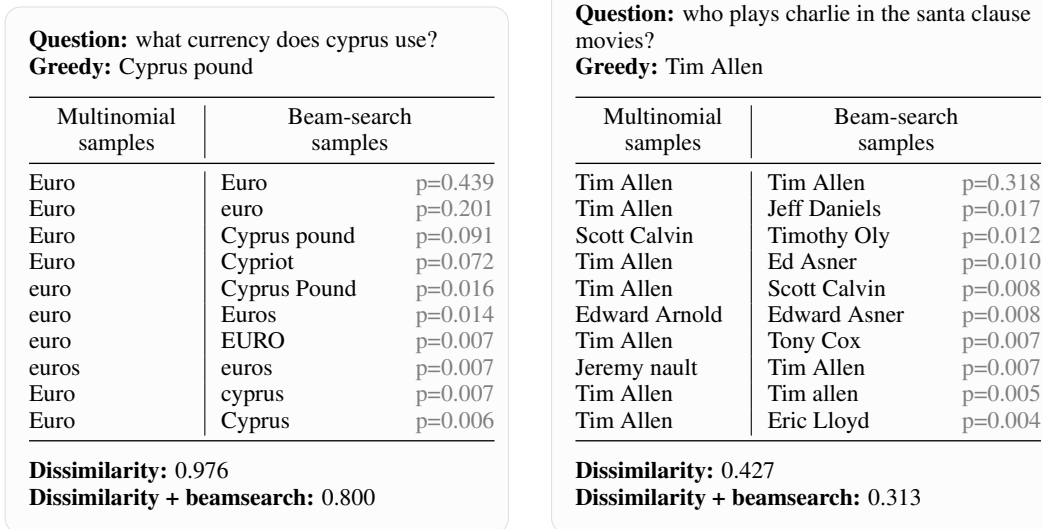

\centering
\begin{minipage}[t]{0.48\linewidth}
\begin{tcolorbox}[colback=black!1, colframe=black!12, arc=2mm, boxrule=0.4pt, left=6pt, right=6pt, top=6pt, bottom=6pt]
\small
\setlength{\tabcolsep}{6pt}
\textbf{Question:} what currency does cyprus use?\\
\textbf{Greedy:} Cyprus pound\\[4pt]
\begin{tabular}{@{}p{0.35\linewidth}|p{0.6\linewidth}@{}}
\midrule
\multicolumn{1}{c|}{\multirowcell{Multinomial\\samples}} &
\multicolumn{1}{c}{\multirowcell{Beam-search\\samples}} \\
\midrule
Euro & Euro\hfill\textcolor{gray}{\footnotesize p=0.439}\\
Euro & euro\hfill\textcolor{gray}{\footnotesize p=0.201}\\
Euro & Cyprus pound\hfill\textcolor{gray}{\footnotesize p=0.091}\\
Euro & Cypriot\hfill\textcolor{gray}{\footnotesize p=0.072}\\
euro & Cyprus Pound\hfill\textcolor{gray}{\footnotesize p=0.016}\\
euro & Euros\hfill\textcolor{gray}{\footnotesize p=0.014}\\
euro & EURO\hfill\textcolor{gray}{\footnotesize p=0.007}\\
euros & euros\hfill\textcolor{gray}{\footnotesize p=0.007}\\
Euro & cyprus\hfill\textcolor{gray}{\footnotesize p=0.007}\\
Euro & Cyprus\hfill\textcolor{gray}{\footnotesize p=0.006}\\
\midrule
\end{tabular}\\[4pt]
\textbf{Dissimilarity:} 0.976 \\
\textbf{Dissimilarity + beamsearch:} 0.800
\end{tcolorbox}
\end{minipage}
\hfill
\begin{minipage}[t]{0.48\linewidth}
\begin{tcolorbox}[colback=black!1, colframe=black!12, arc=2mm, boxrule=0.4pt, left=6pt, right=6pt, top=6pt, bottom=6pt]
\small
\setlength{\tabcolsep}{6pt}
\textbf{Question:} who plays charlie in the santa clause movies?\\
\textbf{Greedy:} Tim Allen\\[4pt]
\begin{tabular}{@{}p{0.35\linewidth}|p{0.6\linewidth}@{}}
\midrule
\multicolumn{1}{c|}{\multirowcell{Multinomial\\samples}} &
\multicolumn{1}{c}{\multirowcell{Beam-search\\samples}} \\
\midrule
Tim Allen & Tim Allen\hfill\textcolor{gray}{\footnotesize p=0.318}\\
Tim Allen & Jeff Daniels\hfill\textcolor{gray}{\footnotesize p=0.017}\\
Scott Calvin & Timothy Oly\hfill\textcolor{gray}{\footnotesize p=0.012}\\
Tim Allen & Ed Asner\hfill\textcolor{gray}{\footnotesize p=0.010}\\
Tim Allen & Scott Calvin\hfill\textcolor{gray}{\footnotesize p=0.008}\\
Edward Arnold & Edward Asner\hfill\textcolor{gray}{\footnotesize p=0.008}\\
Tim Allen & Tony Cox\hfill\textcolor{gray}{\footnotesize p=0.007}\\
Jeremy nault & Tim Allen\hfill\textcolor{gray}{\footnotesize p=0.007}\\
Tim Allen & Tim allen\hfill\textcolor{gray}{\footnotesize p=0.005}\\
Tim Allen & Eric Lloyd\hfill\textcolor{gray}{\footnotesize p=0.004}\\
\midrule
\end{tabular}\\[4pt]
\textbf{Dissimilarity:} 0.427 \\
\textbf{Dissimilarity + beamsearch:} 0.313
\end{tcolorbox}
\end{minipage}
\caption{Two examples from \gemmab on WebQ. Each panel shows the question, greedy answer, multinomial and beam-search samples with autoregressive probabilities, plus dissimilarity and beamsearch-guided dissimilarity.}
\label{fig:webq-gemma4b}
\end{figure}
  \begin{figure}[h]
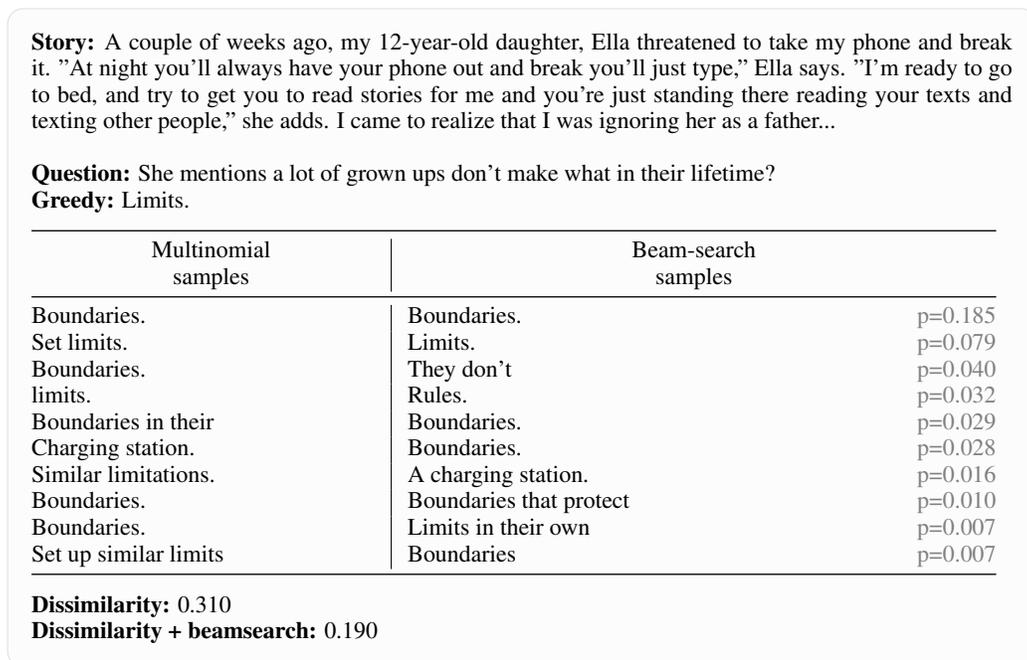

\centering
\begin{minipage}[t]{0.98\linewidth}
\begin{tcolorbox}[colback=black!1, colframe=black!12, arc=2mm, boxrule=0.4pt, left=6pt, right=6pt, top=6pt, bottom=6pt]
\small
\setlength{\tabcolsep}{6pt}
\textbf{Story:} A couple of weeks ago, my 12-year-old daughter, Ella threatened to take my phone and break it. "At night you'll always have your phone out and break you'll just type," Ella says. "I'm ready to go to bed, and try to get you to read stories for me and you're just standing there reading your texts and texting other people," she adds. I came to realize that I was ignoring her as a father...\\\\
\textbf{Question:} She mentions a lot of grown ups don't make what in their lifetime?\\
\textbf{Greedy:} Limits.\\[4pt]
\begin{tabular}{@{}p{0.35\linewidth}|p{0.6\linewidth}@{}}
\midrule
\multicolumn{1}{c|}{\multirowcell{Multinomial\\samples}} &
\multicolumn{1}{c}{\multirowcell{Beam-search\\samples}} \\
\midrule
Boundaries. & Boundaries.\hfill\textcolor{gray}{\footnotesize p=0.185}\\
Set limits. & Limits.\hfill\textcolor{gray}{\footnotesize p=0.079}\\
Boundaries. & They don't\hfill\textcolor{gray}{\footnotesize p=0.040}\\
limits. & Rules.\hfill\textcolor{gray}{\footnotesize p=0.032}\\
Boundaries in their & Boundaries.\hfill\textcolor{gray}{\footnotesize p=0.029}\\
Charging station. & Boundaries.\hfill\textcolor{gray}{\footnotesize p=0.028}\\
Similar limitations. & A charging station.\hfill\textcolor{gray}{\footnotesize p=0.016}\\
Boundaries. & Boundaries that protect\hfill\textcolor{gray}{\footnotesize p=0.010}\\
Boundaries. & Limits in their own\hfill\textcolor{gray}{\footnotesize p=0.007}\\
Set up similar limits & Boundaries\hfill\textcolor{gray}{\footnotesize p=0.007}\\
\midrule
\end{tabular}\\[4pt]
\textbf{Dissimilarity:} 0.310 \\
\textbf{Dissimilarity + beamsearch:} 0.190
\end{tcolorbox}
\end{minipage}
\caption{One example from \gemmab on CoQA. Shown are the question, greedy answer, multinomial and beam-search samples with autoregressive probabilities, plus dissimilarity and beamsearch-guided dissimilarity.}
\label{fig:coqa-gemma4b-one}
\end{figure}

\clearpage

\section{Datasets}
\label{appendix:prompts}
  Table~\ref{tab:prompts} lists the prompts used to form inputs for each dataset (separately for base and instruct models). Table~\ref{tab:base_accs} reports mean accuracy for each model–dataset pair. We measure accuracy as the fraction of predictions whose AlignScore with the gold answer exceeds $0.5$.
  
  \newcolumntype{S}{>{\hsize=.65\hsize\raggedright\arraybackslash}X} 
\newcolumntype{B}{>{\hsize=1.35\hsize\raggedright\arraybackslash}X}

\begin{table*}[h]

\caption{Prompt templates used for each dataset and model type. Few-shot exemplars are shown as placeholders (e.g., \texttt{<5 few-shot QA pairs>}); run-time inputs are denoted by \texttt{<question>}, \texttt{<context>}, \texttt{<title 1>}, etc.}

\centering
\small
\begin{tabularx}{\textwidth}{@{}l | S | B@{}}
\toprule
\textbf{Dataset} & \textbf{Base Prompt} & \textbf{Instruct Prompt} \\
\midrule

TriviaQA &
\begin{minipage}[t]{.65\linewidth}\ttfamily\scriptsize
<5 few-shot QA pairs>\\
Question: <question>\\
Answer:
\end{minipage}
&
\begin{minipage}[t]{1.35\linewidth}\ttfamily\scriptsize
Answer the following question as briefly as possible.\\
<5 few-shot QA pairs>\\
Now answer the following question:\\
Question: <question>\\
Answer:
\end{minipage}
\\
\midrule

\begin{minipage}[t]{0.03\linewidth}
Web\\
Questions
\end{minipage} &
\begin{minipage}[t]{.65\linewidth}\ttfamily\scriptsize
<5 few-shot QA pairs>\\
Question: <question>\\
Answer:
\end{minipage}
&
\begin{minipage}[t]{1.35\linewidth}\ttfamily\scriptsize
Below are questions with short factual answers.\\
Return only the short answer (a name, phrase, number, or year).\\
<5 few-shot QA pairs>\\
Now answer this.\\
Q: <question>\\
A:
\end{minipage}
\\
\midrule

CoQA &
\begin{minipage}[t]{.65\linewidth}\ttfamily\scriptsize
Story: <context>\\
<all preceding QA pairs>\\
Question: <question>\\
Answer:
\end{minipage}
&
\begin{minipage}[t]{1.35\linewidth}\ttfamily\scriptsize
Story: <context>\\
<all preceding QA pairs>\\
Answer the following question as briefly as possible.\\
Question: <question>\\
Answer:
\end{minipage}
\\
\midrule

HotpotQA &
\begin{minipage}[t]{.65\linewidth}\ttfamily\scriptsize
Title: <title 1>\\
<paragraph 1>\\[0.25em]
Title: <title 2>\\
<paragraph 2>\\[0.25em]
Question: <question>\\
Short answer:
\end{minipage}
&
\begin{minipage}[t]{1.35\linewidth}\ttfamily\scriptsize
Instruction: Read the context and answer with a short factual span (a few words) copied from the context. Reply with the short answer only.\\
Title: <title 1>\\
<paragraph 1>\\
Title: <title 2>\\
<paragraph 2>\\
Question: <question>\\
Short answer:
\end{minipage}
\\
\midrule

\begin{minipage}[t]{0.03\linewidth}
Common\\
senceQA
\end{minipage} &
\begin{minipage}[t]{.65\linewidth}\ttfamily\scriptsize
<2 few-shot QA pairs>\\
Question: <question>\\
Options:
<(A) - (D) options>\\
Answer:
\end{minipage}
&
\begin{minipage}[t]{1.35\linewidth}\ttfamily\scriptsize
Instruction: Choose the single best answer from the options. Answer with the option text only (not the letter).\\
<2 few-shot QA pairs>\\
Now answer this.\\
Question: <question>\\
Options:\\
<(A) - (D) options>\\
Answer:
\end{minipage}
\\
\midrule

\begin{minipage}[t]{0.05\linewidth}
ARC-Challenge
\end{minipage} &
\begin{minipage}[t]{.65\linewidth}\ttfamily\scriptsize
<2 few-shot QA pairs>\\
Question: <question>\\
Options:\\
<(A) - (D) options>\\
Answer:
\end{minipage}
&
\begin{minipage}[t]{1.35\linewidth}\ttfamily\scriptsize
Instruction: Choose the single best answer from the options. Answer with the option text only (not the letter).\\
<2 few-shot QA pairs>\\
Now answer this.\\
Question: <question>\\
Options:\\
<(A) - (D) options>\\
Answer:
\end{minipage}
\\
\bottomrule
\end{tabularx}
\label{tab:prompts}
\end{table*}

  \begin{table*}[h]

\caption{Mean accuracy (\%): proportion of predictions with AlignScore to the gold answer $> 0.5$.}

\centering
\small
\begin{tabular}{lcccccc}
\toprule
 &
\multicolumn{2}{c}{Closed-Book QA} &
\multicolumn{2}{c}{Open-Book QA} &
\multicolumn{2}{c}{Multiple Choice} \\
\cmidrule(lr){2-3}\cmidrule(lr){4-5}\cmidrule(lr){6-7}
 & TriviaQA & \multirowcell{Web\\Questions} & CoQA & HotpotQA & \multirowcell{Common\\senceQA} & \multirowcell{ARC-\\Challenge} \\
\midrule

\llamab & 63\% & 47\% & 74\% & 53\% & 74\% & 72\% \\
\llamait & 69\% & 40\% & 80\% & 72\% & 77\% & 76\% \\
\gemmab & 47\% & 33\% & 69\% & 41\% & 65\% & 70\% \\
\gemmait & 51\% & 35\% & 76\% & 66\% & 76\% & 77\% \\
\qwenb & 52\% & 48\% & 81\% & 47\% & 89\% & 91\% \\
\qwenit & 54\% & 42\% & 76\% & 76\% & 84\% & 88\% \\

\bottomrule
\end{tabular}
\label{tab:base_accs}
\end{table*}

\clearpage
\section{Additional Results}

\subsection{Scoring Top-Beam Output}
\label{appendix:topbeam_scoring}
  In the main text we score the greedy decode as the produced answer $\yv_*$. Table~\ref{tab:topbeam_scoring} complements these results by scoring the \emph{top-1 beam} as $\yv_*$, a natural choice when beam search is already used to obtain a higher-quality decode.
  The beam-weighted family of approaches achieves higher PRR than the original methods and baselines in the majority of cases.

  \begin{table}[h]
    \caption{PRR ($\uparrow$ is better) averaged over 6 datasets, when scoring the top-1 beam produced answer (instead of greedy). For each dataset, the top-1 score is \textbf{bold} and the second-best is \underline{underlined}. For beam-guided variants, we mark $\uparrow$ when the variant improves over its original multinomial-sampling counterpart.}
    \centering
    \small
    \resizebox{0.85\textwidth}{!}{
      \begin{tabular}{l|cc|cc|cc}
\toprule
\multirow{2}{*}{UQ Method} & \multicolumn{2}{c|}{\llama} & \multicolumn{2}{c|}{\gemma} & \multicolumn{2}{c}{\qwen} \\
 & base & instruct & base & instruct & base & instruct \\
\midrule
 \rowcolor{gray!20}
 \multicolumn{7}{c}{\textit{Baseline UQ methods}} \\
\midrule
Prob & .399 & .174 & .400 & .213 & .390 & .090 \\
MTE & .320 & .164 & .317 & .228 & .334 & .255 \\
Perplexity & .376 & .121 & .359 & .185 & .318 & .009 \\
CCP & .395 & .155 & .369 & .243 & .352 & .226 \\
SAR & .333 & .221 & .336 & .348 & .342 & .246 \\
P(True) & .019 & -.075 & .031 & .090 & .012 & -.080 \\
SemanticEntropy & .345 & .286 & .397 & .320 & .299 & .250 \\
LexicalSimilarity & .377 & .221 & .384 & .291 & .404 & .210 \\
EigValLaplacian & .366 & .209 & .402 & .307 & .384 & .223 \\
NumSemSets & .349 & .215 & .365 & .262 & .344 & .208 \\
\midrule
 \rowcolor{gray!20}
 \multicolumn{7}{c}{\textit{Consistency-based UQ: multinomial vs. beamsearch versions}} \\
\midrule
Dissimilarity & .437 & .229 & .424 & .333 & .446 & .272 \\
Dissimilarity + beamsearch & \underline{.455}$\uparrow$ & .266$\uparrow$ & \underline{.466}$\uparrow$ & .390$\uparrow$ & .440 & \textbf{.346}$\uparrow$ \\
\midrule
Eccentricity & .405 & .238 & .395 & .310 & .375 & .208 \\
Eccentricity + beamsearch & .444$\uparrow$ & \underline{.301}$\uparrow$ & .450$\uparrow$ & .348$\uparrow$ & .380$\uparrow$ & .308$\uparrow$ \\
\midrule
EigVecDissimilarity & .402 & .243 & .412 & .316 & .403 & .213 \\
EigVecDissimilarity + beamsearch & .446$\uparrow$ & \textbf{.316}$\uparrow$ & .457$\uparrow$ & .366$\uparrow$ & .415$\uparrow$ & .334$\uparrow$ \\
\midrule
CocoaMSP & .447 & .284 & .450 & .347 & \underline{.454} & .272 \\
CocoaMSP + beamsearch & \textbf{.471}$\uparrow$ & .290$\uparrow$ & \textbf{.478}$\uparrow$ & \textbf{.407}$\uparrow$ & \textbf{.459}$\uparrow$ & \underline{.345}$\uparrow$ \\
\midrule
CocoaPPL & .440 & .251 & .433 & .340 & .422 & .261 \\
CocoaPPL + beamsearch & .450$\uparrow$ & .273$\uparrow$ & .444$\uparrow$ & \underline{.395}$\uparrow$ & .410 & .318$\uparrow$ \\
\bottomrule
\end{tabular}
    }

  \label{tab:topbeam_scoring}
  \end{table}

\subsection{ROC-AUC and PR-AUC}
\label{appendix:rocauc_and_prauc}
  In the main text we report PRR. Tables~\ref{tab:abl_rocauc} and~\ref{tab:abl_prauc} complements these results with ROC-AUC and PR-AUC on \gemmab. We binarize by marking an answer as correct if its AlignScore to the gold answer exceeds \(0.5\), and incorrect otherwise (the positive class for PR-AUC is the incorrect label). The pattern mirrors PRR: beam-guided variants generally match or outperform multinomial sampling.

  \begin{table}[h]
    \caption{ROC-AUC$\uparrow$ for 6 datasets with \gemmab. For each dataset, the top-1 method is \textbf{bold} and the second-best is \underline{underlined}. Beam-guided and probability-weighted variants are marked with \(\uparrow\) when they improve over their multinomial-sampling baseline. The two rightmost columns report the mean ROC-AUC across datasets.}
    \centering
    \small
    \resizebox{\textwidth}{!}{
      \begin{tabular}{lccccccc}
\toprule
UQ Method & TriviaQA & \multirowcell{Web\\Questions} & CoQA & HotpotQA & \multirowcell{Common\\senceQA} & \multirowcell{ARC-\\Challenge} & Mean \\
\midrule
 \rowcolor{gray!20}
 \multicolumn{8}{c}{\textit{Baseline UQ methods}} \\
\midrule
Prob & .863 & .768 & .698 & .632 & .796 & .821 & .763 \\
MTE & .867 & .793 & .710 & .721 & .737 & .753 & .763 \\
Perplexity & .863 & .785 & .729 & .735 & .796 & .820 & .788 \\
CCP & .881 & .781 & .698 & .660 & .775 & .793 & .764 \\
SAR & .867 & .776 & .701 & .713 & .653 & .696 & .748 \\
P(True) & .642 & .473 & .524 & .513 & .571 & .545 & .545 \\
SemanticEntropy & .849 & .758 & .690 & .591 & .755 & .774 & .736 \\
LexicalSimilarity & .842 & .766 & .713 & .656 & .739 & .756 & .745 \\
EigValLaplacian & .867 & .766 & .701 & .633 & .739 & .775 & .747 \\
NumSemSets & .856 & .754 & .653 & .639 & .702 & .757 & .727 \\
\midrule
 \rowcolor{gray!20}
 \multicolumn{8}{c}{\textit{Consistency-based UQ: multinomial vs. beamsearch versions}} \\
\midrule
Dissimilarity & .916 & .836 & .822 & .809 & .817 & .818 & .836 \\
Dissimilarity + beamsearch & \textbf{.923}$\uparrow$ & \textbf{.852}$\uparrow$ & \textbf{.826}$\uparrow$ & \underline{.814}$\uparrow$ & \underline{.831}$\uparrow$ & .841$\uparrow$ & \textbf{.848}$\uparrow$ \\
\midrule
Eccentricity & .897 & .808 & .768 & .737 & .809 & .821 & .806 \\
Eccentricity + beamsearch & .911$\uparrow$ & .816$\uparrow$ & .790$\uparrow$ & .771$\uparrow$ & \textbf{.833}$\uparrow$ & \textbf{.859}$\uparrow$ & .830$\uparrow$ \\
\midrule
EigVecDissimilarity & .902 & .813 & .761 & .728 & .798 & .825 & .805 \\
EigVecDissimilarity + beamsearch & \underline{.920}$\uparrow$ & .827$\uparrow$ & .787$\uparrow$ & .763$\uparrow$ & .820$\uparrow$ & \underline{.856}$\uparrow$ & .829$\uparrow$ \\
\midrule
CocoaMSP & .904 & .823 & .791 & .726 & .826 & .839 & .818 \\
CocoaMSP + beamsearch & .910$\uparrow$ & .836$\uparrow$ & .811$\uparrow$ & .779$\uparrow$ & .827$\uparrow$ & .847$\uparrow$ & .835$\uparrow$ \\
\midrule
CocoaPPL & .907 & .832 & .810 & .799 & .825 & .837 & .835 \\
CocoaPPL + beamsearch & .912$\uparrow$ & \underline{.845}$\uparrow$ & \underline{.823}$\uparrow$ & \textbf{.828}$\uparrow$ & .825$\uparrow$ & .844$\uparrow$ & \underline{.846}$\uparrow$ \\
\bottomrule
\end{tabular}
    }
  \label{tab:abl_rocauc}
  \end{table}

  \begin{table}[h]
    \caption{PR-AUC$\uparrow$ for 6 datasets with \gemmab. For each dataset, the top-1 method is \textbf{bold} and the second-best is \underline{underlined}. Beam-guided and probability-weighted variants are marked with \(\uparrow\) when they improve over their multinomial-sampling baseline. The two rightmost columns report the mean PR-AUC across datasets.}
    \centering
    \small
    \resizebox{\textwidth}{!}{
      \begin{tabular}{lccccccc}
\toprule
UQ Method & TriviaQA & \multirowcell{Web\\Questions} & CoQA & HotpotQA & \multirowcell{Common\\senceQA} & \multirowcell{ARC-\\Challenge} & Mean \\
\midrule
 \rowcolor{gray!20}
 \multicolumn{8}{c}{\textit{Baseline UQ methods}} \\
\midrule
Prob & .855 & .838 & .477 & .678 & .623 & .628 & .683 \\
MTE & .875 & .874 & .545 & .799 & .558 & .540 & .699 \\
Perplexity & .860 & .861 & .539 & .814 & .629 & .632 & .722 \\
CCP & .866 & .853 & .475 & .715 & .676 & .641 & .704 \\
SAR & .865 & .861 & .484 & .753 & .437 & .422 & .646 \\
P(True) & .657 & .662 & .326 & .634 & .410 & .355 & .507 \\
SemanticEntropy & .838 & .823 & .456 & .649 & .572 & .511 & .642 \\
LexicalSimilarity & .833 & .848 & .514 & .711 & .545 & .509 & .660 \\
EigValLaplacian & .865 & .855 & .481 & .682 & .565 & .532 & .663 \\
NumSemSets & .841 & .825 & .427 & .685 & .508 & .497 & .631 \\
\midrule
 \rowcolor{gray!20}
 \multicolumn{8}{c}{\textit{Consistency-based UQ: multinomial vs. beamsearch versions}} \\
\midrule
Dissimilarity & .911 & .904 & \textbf{.715} & .838 & .722 & .648 & .789 \\
Dissimilarity + beamsearch & \textbf{.919}$\uparrow$ & \textbf{.915}$\uparrow$ & .660 & .822 & \textbf{.754}$\uparrow$ & .693$\uparrow$ & \underline{.794}$\uparrow$ \\
\midrule
Eccentricity & .888 & .887 & .561 & .758 & .685 & .625 & .734 \\
Eccentricity + beamsearch & .906$\uparrow$ & .884 & .576$\uparrow$ & .789$\uparrow$ & \underline{.744}$\uparrow$ & \textbf{.717}$\uparrow$ & .769$\uparrow$ \\
\midrule
EigVecDissimilarity & .902 & .889 & .573 & .766 & .677 & .651 & .743 \\
EigVecDissimilarity + beamsearch & \underline{.916}$\uparrow$ & .900$\uparrow$ & .588$\uparrow$ & .784$\uparrow$ & .717$\uparrow$ & .689$\uparrow$ & .766$\uparrow$ \\
\midrule
CocoaMSP & .897 & .894 & .605 & .761 & .711 & .680 & .758 \\
CocoaMSP + beamsearch & .907$\uparrow$ & .904$\uparrow$ & .632$\uparrow$ & .801$\uparrow$ & .715$\uparrow$ & .691$\uparrow$ & .775$\uparrow$ \\
\midrule
CocoaPPL & .902 & .902 & .672 & \underline{.861} & .712 & .686 & .789 \\
CocoaPPL + beamsearch & .909$\uparrow$ & \underline{.910}$\uparrow$ & \underline{.690}$\uparrow$ & \textbf{.881}$\uparrow$ & .718$\uparrow$ & \underline{.695}$\uparrow$ & \textbf{.801}$\uparrow$ \\
\bottomrule
\end{tabular}
    }
  \label{tab:abl_prauc}
  \end{table}

\subsection{Detailed Results for Each Dataset}
\label{appendix:other_llms}
  Complementing the main-table results in Table~\ref{tab:qa_mean_prr}, Tables~\ref{tab:qa_gemma4b}--\ref{tab:qa_qwen8bit} report PRR for six datasets separately for \gemmab, \gemmait, \llamab, \llamait, \qwenb, and \qwenit.

  \begin{table}[h]
    \caption{PRR ($\uparrow$ is better) for 6 datasets with \gemmab. For each dataset, the top-1 method is \textbf{bold} and the second-best is \underline{underlined}. Beam-guided and probability-weighted variants are marked with $\uparrow$ when they improve over their multinomial-sampling baseline.}
    \centering
    \small
    \resizebox{\textwidth}{!}{
      \begin{tabular}{lcccccc}
\toprule
Method & TriviaQA & \multirowcell{Web\\Questions} & CoQA & HotpotQA & \multirowcell{Common\\senceQA} & \multirowcell{ARC-\\Challenge} \\
\midrule
 \rowcolor{gray!20}
 \multicolumn{7}{c}{\textit{Baseline UQ methods}} \\
\midrule
Prob & .659 \tiny{$\pm$ 0.018} & .521 \tiny{$\pm$ 0.031} & .312 \tiny{$\pm$ 0.024} & .274 \tiny{$\pm$ 0.014} & .511 \tiny{$\pm$ 0.025} & .548 \tiny{$\pm$ 0.077} \\
MTE & .670 \tiny{$\pm$ 0.013} & .583 \tiny{$\pm$ 0.029} & .363 \tiny{$\pm$ 0.02} & .494 \tiny{$\pm$ 0.034} & .364 \tiny{$\pm$ 0.031} & .381 \tiny{$\pm$ 0.052} \\
Perplexity & .647 \tiny{$\pm$ 0.024} & .553 \tiny{$\pm$ 0.022} & .369 \tiny{$\pm$ 0.02} & .527 \tiny{$\pm$ 0.023} & .503 \tiny{$\pm$ 0.022} & .547 \tiny{$\pm$ 0.062} \\
CCP & .686 \tiny{$\pm$ 0.021} & .569 \tiny{$\pm$ 0.031} & .326 \tiny{$\pm$ 0.022} & .337 \tiny{$\pm$ 0.025} & .506 \tiny{$\pm$ 0.034} & .527 \tiny{$\pm$ 0.062} \\
SAR & .656 \tiny{$\pm$ 0.02} & .571 \tiny{$\pm$ 0.028} & .347 \tiny{$\pm$ 0.023} & .296 \tiny{$\pm$ 0.018} & .183 \tiny{$\pm$ 0.037} & .264 \tiny{$\pm$ 0.055} \\
P(True) & .272 \tiny{$\pm$ 0.026} & -.004 \tiny{$\pm$ 0.034} & .031 \tiny{$\pm$ 0.026} & .075 \tiny{$\pm$ 0.025} & .090 \tiny{$\pm$ 0.028} & .090 \tiny{$\pm$ 0.048} \\
SemanticEntropy & .622 \tiny{$\pm$ 0.021} & .505 \tiny{$\pm$ 0.022} & .301 \tiny{$\pm$ 0.019} & .140 \tiny{$\pm$ 0.022} & .407 \tiny{$\pm$ 0.028} & .431 \tiny{$\pm$ 0.051} \\
Lexical Similarity & .602 \tiny{$\pm$ 0.017} & .540 \tiny{$\pm$ 0.032} & .349 \tiny{$\pm$ 0.025} & .286 \tiny{$\pm$ 0.016} & .386 \tiny{$\pm$ 0.032} & .392 \tiny{$\pm$ 0.054} \\
EigValLaplacian & .666 \tiny{$\pm$ 0.014} & .555 \tiny{$\pm$ 0.028} & .320 \tiny{$\pm$ 0.036} & .246 \tiny{$\pm$ 0.024} & .386 \tiny{$\pm$ 0.027} & .452 \tiny{$\pm$ 0.046} \\
NumSemSets & .656 \tiny{$\pm$ 0.017} & .538 \tiny{$\pm$ 0.028} & .257 \tiny{$\pm$ 0.027} & .268 \tiny{$\pm$ 0.019} & .338 \tiny{$\pm$ 0.03} & .454 \tiny{$\pm$ 0.042} \\
\midrule
 \rowcolor{gray!20}
 \multicolumn{7}{c}{\textit{Consistency-based UQ: multinomial vs. beamsearch versions}} \\
\midrule
Dissimilarity & \underline{.755} \tiny{$\pm$ 0.019} & \underline{.715} \tiny{$\pm$ 0.03} & \underline{.578} \tiny{$\pm$ 0.022} & \underline{.626} \tiny{$\pm$ 0.016} & .561 \tiny{$\pm$ 0.04} & .545 \tiny{$\pm$ 0.062} \\
Dissimilarity + beamsearch & \textbf{.766} $\uparrow$\tiny{$\pm$ 0.023} & \textbf{.722} $\uparrow$\tiny{$\pm$ 0.028} & \textbf{.600} $\uparrow$\tiny{$\pm$ 0.016} & .611 \tiny{$\pm$ 0.021} & \textbf{.595} $\uparrow$\tiny{$\pm$ 0.028} & .604 $\uparrow$\tiny{$\pm$ 0.052} \\
\midrule
Eccentricity & .714 \tiny{$\pm$ 0.012} & .653 \tiny{$\pm$ 0.029} & .459 \tiny{$\pm$ 0.02} & .453 \tiny{$\pm$ 0.026} & .549 \tiny{$\pm$ 0.034} & .549 \tiny{$\pm$ 0.054} \\
Eccentricity + beamsearch & .739 $\uparrow$\tiny{$\pm$ 0.019} & .633 \tiny{$\pm$ 0.035} & .505 $\uparrow$\tiny{$\pm$ 0.025} & .514 $\uparrow$\tiny{$\pm$ 0.027} & \underline{.590} $\uparrow$\tiny{$\pm$ 0.024} & \textbf{.636} $\uparrow$\tiny{$\pm$ 0.066} \\
\midrule
EigVecDissimilarity & .738 \tiny{$\pm$ 0.021} & .661 \tiny{$\pm$ 0.027} & .443 \tiny{$\pm$ 0.031} & .448 \tiny{$\pm$ 0.02} & .512 \tiny{$\pm$ 0.032} & .562 \tiny{$\pm$ 0.035} \\
EigVecDissimilarity + beamsearch & .753 $\uparrow$\tiny{$\pm$ 0.028} & .668 $\uparrow$\tiny{$\pm$ 0.032} & .497 $\uparrow$\tiny{$\pm$ 0.021} & .487 $\uparrow$\tiny{$\pm$ 0.016} & .562 $\uparrow$\tiny{$\pm$ 0.028} & \underline{.621} $\uparrow$\tiny{$\pm$ 0.06} \\
\midrule
CocoaMSP & .738 \tiny{$\pm$ 0.023} & .666 \tiny{$\pm$ 0.028} & .509 \tiny{$\pm$ 0.021} & .430 \tiny{$\pm$ 0.028} & .583 \tiny{$\pm$ 0.03} & .595 \tiny{$\pm$ 0.052} \\
CocoaMSP + beamsearch & .747 $\uparrow$\tiny{$\pm$ 0.02} & .679 $\uparrow$\tiny{$\pm$ 0.02} & .548 $\uparrow$\tiny{$\pm$ 0.02} & .523 $\uparrow$\tiny{$\pm$ 0.027} & .586 $\uparrow$\tiny{$\pm$ 0.029} & .606 $\uparrow$\tiny{$\pm$ 0.072} \\
\midrule
CocoaPPL & .739 \tiny{$\pm$ 0.015} & .678 \tiny{$\pm$ 0.025} & .548 \tiny{$\pm$ 0.019} & .625 \tiny{$\pm$ 0.023} & .580 \tiny{$\pm$ 0.031} & .595 \tiny{$\pm$ 0.039} \\
CocoaPPL + beamsearch & .748 $\uparrow$\tiny{$\pm$ 0.024} & .694 $\uparrow$\tiny{$\pm$ 0.024} & .577 $\uparrow$\tiny{$\pm$ 0.024} & \textbf{.681} $\uparrow$\tiny{$\pm$ 0.019} & .582 $\uparrow$\tiny{$\pm$ 0.035} & .610 $\uparrow$\tiny{$\pm$ 0.048} \\
\bottomrule
\end{tabular}
    }
    \label{tab:qa_gemma4b}
  \end{table}

  \begin{table}[h]
    \caption{PRR ($\uparrow$ is better) for 6 datasets with \gemmait. For each dataset, the top-1 method is \textbf{bold} and the second-best is \underline{underlined}. Beam-guided and probability-weighted variants are marked with $\uparrow$ when they improve over their multinomial-sampling baseline.}
    \centering
    \small
    \resizebox{\textwidth}{!}{
      \begin{tabular}{lcccccc}
\toprule
UQ Method & TriviaQA & \multirowcell{Web\\Questions} & CoQA & HotpotQA & \multirowcell{Common\\senceQA} & \multirowcell{ARC-\\Challenge} \\
\midrule
 \rowcolor{gray!20}
 \multicolumn{7}{c}{\textit{Baseline UQ methods}} \\
\midrule
Prob & .442 \tiny{$\pm$ .018} & .425 \tiny{$\pm$ .031} & .162 \tiny{$\pm$ .024} & .220 \tiny{$\pm$ .014} & .254 \tiny{$\pm$ .025} & .252 \tiny{$\pm$ .077} \\
MTE & .534 \tiny{$\pm$ .013} & .465 \tiny{$\pm$ .029} & .161 \tiny{$\pm$ .02} & .232 \tiny{$\pm$ .034} & .253 \tiny{$\pm$ .031} & .256 \tiny{$\pm$ .052} \\
Perplexity & .422 \tiny{$\pm$ .024} & .419 \tiny{$\pm$ .022} & .157 \tiny{$\pm$ .02} & .223 \tiny{$\pm$ .023} & .252 \tiny{$\pm$ .022} & .256 \tiny{$\pm$ .062} \\
CCP & .533 \tiny{$\pm$ .021} & \textbf{.478} \tiny{$\pm$ .031} & .117 \tiny{$\pm$ .022} & .303 \tiny{$\pm$ .025} & .264 \tiny{$\pm$ .034} & \textbf{.290} \tiny{$\pm$ .062} \\
SAR & .533 \tiny{$\pm$ .02} & .426 \tiny{$\pm$ .028} & .176 \tiny{$\pm$ .023} & .214 \tiny{$\pm$ .018} & .033 \tiny{$\pm$ .037} & .050 \tiny{$\pm$ .055} \\
P(True) & -.076 \tiny{$\pm$ .026} & -.155 \tiny{$\pm$ .034} & -.161 \tiny{$\pm$ .026} & -.090 \tiny{$\pm$ .025} & -.046 \tiny{$\pm$ .028} & -.047 \tiny{$\pm$ .048} \\
SemanticEntropy & .449 \tiny{$\pm$ .021} & .415 \tiny{$\pm$ .022} & .166 \tiny{$\pm$ .019} & .223 \tiny{$\pm$ .022} & .254 \tiny{$\pm$ .028} & .252 \tiny{$\pm$ .051} \\
Lexical Similarity & .527 \tiny{$\pm$ .017} & .427 \tiny{$\pm$ .032} & .176 \tiny{$\pm$ .025} & .127 \tiny{$\pm$ .016} & .052 \tiny{$\pm$ .032} & .172 \tiny{$\pm$ .054} \\
EigValLaplacian & \textbf{.578} \tiny{$\pm$ .014} & .472 \tiny{$\pm$ .028} & .190 \tiny{$\pm$ .036} & .134 \tiny{$\pm$ .024} & .014 \tiny{$\pm$ .027} & .010 \tiny{$\pm$ .046} \\
NumSemSets & .556 \tiny{$\pm$ .017} & .442 \tiny{$\pm$ .028} & .123 \tiny{$\pm$ .027} & .106 \tiny{$\pm$ .019} & .046 \tiny{$\pm$ .03} & .153 \tiny{$\pm$ .042} \\
\midrule
 \rowcolor{gray!20}
 \multicolumn{7}{c}{\textit{Consistency-based UQ: multinomial vs. beamsearch versions}} \\
\midrule
Dissimilarity & .549 \tiny{$\pm$ .019} & .415 \tiny{$\pm$ .03} & .111 \tiny{$\pm$ .022} & .068 \tiny{$\pm$ .016} & .024 \tiny{$\pm$ .04} & .070 \tiny{$\pm$ .062} \\
Dissimilarity + beamsearch & .413 \tiny{$\pm$ .023} & .321 \tiny{$\pm$ .028} & .204 $\uparrow$\tiny{$\pm$ .016} & .273 $\uparrow$\tiny{$\pm$ .021} & .218 $\uparrow$\tiny{$\pm$ .028} & .085 $\uparrow$\tiny{$\pm$ .052} \\
\midrule
Eccentricity & .540 \tiny{$\pm$ .012} & .429 \tiny{$\pm$ .029} & .167 \tiny{$\pm$ .02} & .175 \tiny{$\pm$ .026} & -.020 \tiny{$\pm$ .034} & .094 \tiny{$\pm$ .054} \\
Eccentricity + beamsearch & .441 \tiny{$\pm$ .019} & .367 \tiny{$\pm$ .035} & .235 $\uparrow$\tiny{$\pm$ .025} & \textbf{.314} $\uparrow$\tiny{$\pm$ .027} & .246 $\uparrow$\tiny{$\pm$ .024} & .108 $\uparrow$\tiny{$\pm$ .066} \\
\midrule
EigVecDissimilarity & \underline{.561} \tiny{$\pm$ .021} & .437 \tiny{$\pm$ .027} & .169 \tiny{$\pm$ .031} & .173 \tiny{$\pm$ .02} & -.017 \tiny{$\pm$ .032} & .095 \tiny{$\pm$ .035} \\
EigVecDissimilarity + beamsearch & .478 \tiny{$\pm$ .028} & .416 \tiny{$\pm$ .032} & \textbf{.240} $\uparrow$\tiny{$\pm$ .021} & \underline{.308} $\uparrow$\tiny{$\pm$ .016} & .253 $\uparrow$\tiny{$\pm$ .028} & .113 $\uparrow$\tiny{$\pm$ .06} \\
\midrule
CocoaMSP & .531 \tiny{$\pm$ .023} & .456 \tiny{$\pm$ .028} & .183 \tiny{$\pm$ .021} & .198 \tiny{$\pm$ .028} & .252 \tiny{$\pm$ .03} & .266 \tiny{$\pm$ .052} \\
CocoaMSP + beamsearch & .535 $\uparrow$\tiny{$\pm$ .02} & \underline{.473} $\uparrow$\tiny{$\pm$ .02} & \underline{.237} $\uparrow$\tiny{$\pm$ .02} & .287 $\uparrow$\tiny{$\pm$ .027} & \textbf{.282} $\uparrow$\tiny{$\pm$ .029} & .258 \tiny{$\pm$ .072} \\
\midrule
CocoaPPL & .523 \tiny{$\pm$ .015} & .454 \tiny{$\pm$ .025} & .174 \tiny{$\pm$ .019} & .201 \tiny{$\pm$ .023} & .247 \tiny{$\pm$ .031} & \underline{.271} \tiny{$\pm$ .039} \\
CocoaPPL + beamsearch & .522 \tiny{$\pm$ .024} & .467 $\uparrow$\tiny{$\pm$ .024} & .222 $\uparrow$\tiny{$\pm$ .024} & .285 $\uparrow$\tiny{$\pm$ .019} & \underline{.277} $\uparrow$\tiny{$\pm$ .035} & .264 \tiny{$\pm$ .048} \\
\bottomrule
\end{tabular}
    }
    \label{tab:qa_gemma4bit}
  \end{table}

  \begin{table}[h]
    \caption{PRR ($\uparrow$ is better) for 6 datasets with \llamab. For each dataset, the top-1 method is \textbf{bold} and the second-best is \underline{underlined}. Beam-guided and probability-weighted variants are marked with $\uparrow$ when they improve over their multinomial-sampling baseline.}
    \centering
    \small
    \resizebox{\textwidth}{!}{
      \begin{tabular}{lcccccc}
\toprule
UQ Method & TriviaQA & \multirowcell{Web\\Questions} & CoQA & HotpotQA & \multirowcell{Common\\senceQA} & \multirowcell{ARC-\\Challenge} \\
\midrule
 \rowcolor{gray!20}
 \multicolumn{7}{c}{\textit{Baseline UQ methods}} \\
\midrule
Prob & .517 \tiny{$\pm$ .019} & .414 \tiny{$\pm$ .029} & .310 \tiny{$\pm$ .022} & .213 \tiny{$\pm$ .024} & .504 \tiny{$\pm$ .029} & .505 \tiny{$\pm$ .043} \\
MTE & .544 \tiny{$\pm$ .018} & .420 \tiny{$\pm$ .015} & .286 \tiny{$\pm$ .022} & .327 \tiny{$\pm$ .02} & .448 \tiny{$\pm$ .029} & .511 \tiny{$\pm$ .055} \\
Perplexity & .507 \tiny{$\pm$ .015} & .441 \tiny{$\pm$ .027} & .316 \tiny{$\pm$ .031} & .375 \tiny{$\pm$ .018} & .501 \tiny{$\pm$ .027} & .570 \tiny{$\pm$ .047} \\
CCP & .575 \tiny{$\pm$ .016} & .420 \tiny{$\pm$ .026} & .276 \tiny{$\pm$ .024} & .247 \tiny{$\pm$ .029} & .442 \tiny{$\pm$ .023} & .446 \tiny{$\pm$ .031} \\
SAR & .548 \tiny{$\pm$ .017} & .452 \tiny{$\pm$ .028} & .331 \tiny{$\pm$ .03} & .263 \tiny{$\pm$ .031} & .189 \tiny{$\pm$ .021} & .330 \tiny{$\pm$ .044} \\
P(True) & -.055 \tiny{$\pm$ .021} & .059 \tiny{$\pm$ .023} & -.020 \tiny{$\pm$ .018} & -.223 \tiny{$\pm$ .026} & .034 \tiny{$\pm$ .024} & .292 \tiny{$\pm$ .044} \\
SemanticEntropy & .538 \tiny{$\pm$ .019} & .409 \tiny{$\pm$ .023} & .330 \tiny{$\pm$ .021} & .199 \tiny{$\pm$ .024} & .492 \tiny{$\pm$ .023} & .514 \tiny{$\pm$ .05} \\
Lexical Similarity & .467 \tiny{$\pm$ .018} & .396 \tiny{$\pm$ .03} & .366 \tiny{$\pm$ .024} & .289 \tiny{$\pm$ .026} & .437 \tiny{$\pm$ .028} & .511 \tiny{$\pm$ .041} \\
EigValLaplacian & .569 \tiny{$\pm$ .019} & .418 \tiny{$\pm$ .022} & .377 \tiny{$\pm$ .023} & .247 \tiny{$\pm$ .025} & .449 \tiny{$\pm$ .035} & .499 \tiny{$\pm$ .047} \\
NumSemSets & .550 \tiny{$\pm$ .014} & .409 \tiny{$\pm$ .033} & .319 \tiny{$\pm$ .019} & .241 \tiny{$\pm$ .028} & .378 \tiny{$\pm$ .025} & .477 \tiny{$\pm$ .044} \\
\midrule
 \rowcolor{gray!20}
 \multicolumn{7}{c}{\textit{Consistency-based UQ: multinomial vs. beamsearch versions}} \\
\midrule
Dissimilarity & .576 \tiny{$\pm$ .02} & .445 \tiny{$\pm$ .024} & \underline{.473} \tiny{$\pm$ .023} & .446 \tiny{$\pm$ .02} & .449 \tiny{$\pm$ .028} & .640 \tiny{$\pm$ .056} \\
Dissimilarity + beamsearch & \textbf{.654} $\uparrow$\tiny{$\pm$ .017} & \textbf{.504} $\uparrow$\tiny{$\pm$ .023} & \textbf{.485} $\uparrow$\tiny{$\pm$ .019} & .424 \tiny{$\pm$ .024} & .510 $\uparrow$\tiny{$\pm$ .023} & \textbf{.683} $\uparrow$\tiny{$\pm$ .044} \\
\midrule
Eccentricity & .555 \tiny{$\pm$ .016} & .404 \tiny{$\pm$ .025} & .405 \tiny{$\pm$ .023} & .297 \tiny{$\pm$ .021} & .464 \tiny{$\pm$ .028} & .591 \tiny{$\pm$ .038} \\
Eccentricity + beamsearch & .613 $\uparrow$\tiny{$\pm$ .021} & .458 $\uparrow$\tiny{$\pm$ .019} & .429 $\uparrow$\tiny{$\pm$ .017} & .361 $\uparrow$\tiny{$\pm$ .023} & .512 $\uparrow$\tiny{$\pm$ .025} & .657 $\uparrow$\tiny{$\pm$ .031} \\
\midrule
EigVecDissimilarity & .570 \tiny{$\pm$ .015} & .452 \tiny{$\pm$ .022} & .409 \tiny{$\pm$ .019} & .289 \tiny{$\pm$ .02} & .469 \tiny{$\pm$ .04} & .587 \tiny{$\pm$ .038} \\
EigVecDissimilarity + beamsearch & .630 $\uparrow$\tiny{$\pm$ .019} & .492 $\uparrow$\tiny{$\pm$ .022} & .427 $\uparrow$\tiny{$\pm$ .019} & .357 $\uparrow$\tiny{$\pm$ .02} & .506 $\uparrow$\tiny{$\pm$ .035} & .650 $\uparrow$\tiny{$\pm$ .032} \\
\midrule
CocoaMSP & .595 \tiny{$\pm$ .013} & .458 \tiny{$\pm$ .021} & .463 \tiny{$\pm$ .023} & .366 \tiny{$\pm$ .021} & .510 \tiny{$\pm$ .028} & .641 \tiny{$\pm$ .038} \\
CocoaMSP + beamsearch & \underline{.631} $\uparrow$\tiny{$\pm$ .019} & .487 $\uparrow$\tiny{$\pm$ .023} & .465 $\uparrow$\tiny{$\pm$ .027} & .372 $\uparrow$\tiny{$\pm$ .027} & \textbf{.532} $\uparrow$\tiny{$\pm$ .022} & .639 \tiny{$\pm$ .041} \\
\midrule
CocoaPPL & .587 \tiny{$\pm$ .017} & .464 \tiny{$\pm$ .024} & .464 \tiny{$\pm$ .023} & \textbf{.465} \tiny{$\pm$ .02} & .501 \tiny{$\pm$ .031} & .660 \tiny{$\pm$ .034} \\
CocoaPPL + beamsearch & .616 $\uparrow$\tiny{$\pm$ .016} & \underline{.498} $\uparrow$\tiny{$\pm$ .029} & .459 \tiny{$\pm$ .024} & \underline{.456} \tiny{$\pm$ .018} & \underline{.525} $\uparrow$\tiny{$\pm$ .028} & \underline{.661} $\uparrow$\tiny{$\pm$ .046} \\
\bottomrule
\end{tabular}
    }
    \label{tab:qa_llama8b}
  \end{table}

  \begin{table}[h]
    \caption{PRR ($\uparrow$ is better) for 6 datasets with \llamait. For each dataset, the top-1 method is \textbf{bold} and the second-best is \underline{underlined}. Beam-guided and probability-weighted variants are marked with $\uparrow$ when they improve over their multinomial-sampling baseline.}
    \centering
    \small
    \resizebox{\textwidth}{!}{
      \begin{tabular}{lcccccc}
\toprule
UQ Method & TriviaQA & \multirowcell{Web\\Questions} & CoQA & HotpotQA & \multirowcell{Common\\senceQA} & \multirowcell{ARC-\\Challenge} \\
\midrule
 \rowcolor{gray!20}
 \multicolumn{7}{c}{\textit{Baseline UQ methods}} \\
\midrule
Prob & .524 \tiny{$\pm$ .023} & .357 \tiny{$\pm$ .036} & .327 \tiny{$\pm$ .021} & .213 \tiny{$\pm$ .022} & .283 \tiny{$\pm$ .026} & .363 \tiny{$\pm$ .044} \\
MTE & .604 \tiny{$\pm$ .015} & .424 \tiny{$\pm$ .028} & .307 \tiny{$\pm$ .02} & .253 \tiny{$\pm$ .031} & .260 \tiny{$\pm$ .027} & .339 \tiny{$\pm$ .055} \\
Perplexity & .498 \tiny{$\pm$ .018} & .367 \tiny{$\pm$ .025} & .262 \tiny{$\pm$ .025} & .221 \tiny{$\pm$ .03} & .255 \tiny{$\pm$ .028} & .332 \tiny{$\pm$ .053} \\
CCP & .576 \tiny{$\pm$ .023} & .406 \tiny{$\pm$ .028} & .291 \tiny{$\pm$ .018} & .265 \tiny{$\pm$ .022} & .248 \tiny{$\pm$ .034} & .402 \tiny{$\pm$ .048} \\
SAR & .599 \tiny{$\pm$ .021} & .420 \tiny{$\pm$ .029} & .338 \tiny{$\pm$ .024} & .236 \tiny{$\pm$ .02} & .301 \tiny{$\pm$ .025} & .418 \tiny{$\pm$ .04} \\
P(True) & .236 \tiny{$\pm$ .023} & .012 \tiny{$\pm$ .031} & .018 \tiny{$\pm$ .035} & .045 \tiny{$\pm$ .024} & -.011 \tiny{$\pm$ .024} & .135 \tiny{$\pm$ .051} \\
SemanticEntropy & .591 \tiny{$\pm$ .016} & .381 \tiny{$\pm$ .027} & .335 \tiny{$\pm$ .032} & .231 \tiny{$\pm$ .029} & .301 \tiny{$\pm$ .038} & .418 \tiny{$\pm$ .061} \\
Lexical Similarity & .566 \tiny{$\pm$ .023} & .395 \tiny{$\pm$ .029} & .347 \tiny{$\pm$ .024} & .232 \tiny{$\pm$ .03} & .275 \tiny{$\pm$ .032} & .380 \tiny{$\pm$ .045} \\
EigValLaplacian & .615 \tiny{$\pm$ .021} & .389 \tiny{$\pm$ .026} & .355 \tiny{$\pm$ .029} & .238 \tiny{$\pm$ .023} & .252 \tiny{$\pm$ .029} & .377 \tiny{$\pm$ .051} \\
NumSemSets & .569 \tiny{$\pm$ .021} & .363 \tiny{$\pm$ .031} & .228 \tiny{$\pm$ .03} & .180 \tiny{$\pm$ .023} & .208 \tiny{$\pm$ .035} & .368 \tiny{$\pm$ .051} \\
\midrule
 \rowcolor{gray!20}
 \multicolumn{7}{c}{\textit{Consistency-based UQ: multinomial vs. beamsearch versions}} \\
\midrule
Dissimilarity & .616 \tiny{$\pm$ .016} & .382 \tiny{$\pm$ .031} & .349 \tiny{$\pm$ .018} & .270 \tiny{$\pm$ .021} & .277 \tiny{$\pm$ .037} & .378 \tiny{$\pm$ .061} \\
Dissimilarity + beamsearch & \underline{.662} $\uparrow$\tiny{$\pm$ .015} & .411 $\uparrow$\tiny{$\pm$ .029} & .358 $\uparrow$\tiny{$\pm$ .029} & \textbf{.349} $\uparrow$\tiny{$\pm$ .019} & .288 $\uparrow$\tiny{$\pm$ .032} & .434 $\uparrow$\tiny{$\pm$ .054} \\
\midrule
Eccentricity & .598 \tiny{$\pm$ .021} & .379 \tiny{$\pm$ .032} & .319 \tiny{$\pm$ .016} & .248 \tiny{$\pm$ .031} & .273 \tiny{$\pm$ .035} & .389 \tiny{$\pm$ .058} \\
Eccentricity + beamsearch & .620 $\uparrow$\tiny{$\pm$ .016} & .396 $\uparrow$\tiny{$\pm$ .027} & .330 $\uparrow$\tiny{$\pm$ .021} & .281 $\uparrow$\tiny{$\pm$ .021} & .306 $\uparrow$\tiny{$\pm$ .031} & \underline{.451} $\uparrow$\tiny{$\pm$ .047} \\
\midrule
EigVecDissimilarity & .611 \tiny{$\pm$ .019} & .378 \tiny{$\pm$ .033} & .325 \tiny{$\pm$ .025} & .249 \tiny{$\pm$ .029} & .264 \tiny{$\pm$ .037} & .390 \tiny{$\pm$ .061} \\
EigVecDissimilarity + beamsearch & .640 $\uparrow$\tiny{$\pm$ .017} & .425 $\uparrow$\tiny{$\pm$ .028} & .347 $\uparrow$\tiny{$\pm$ .027} & .291 $\uparrow$\tiny{$\pm$ .022} & \textbf{.318} $\uparrow$\tiny{$\pm$ .034} & \textbf{.461} $\uparrow$\tiny{$\pm$ .046} \\
\midrule
CocoaMSP & .629 \tiny{$\pm$ .018} & .409 \tiny{$\pm$ .023} & \underline{.366} \tiny{$\pm$ .03} & .278 \tiny{$\pm$ .02} & \underline{.314} \tiny{$\pm$ .029} & .426 \tiny{$\pm$ .051} \\
CocoaMSP + beamsearch & \textbf{.665} $\uparrow$\tiny{$\pm$ .016} & \textbf{.428} $\uparrow$\tiny{$\pm$ .029} & \textbf{.378} $\uparrow$\tiny{$\pm$ .017} & \underline{.344} $\uparrow$\tiny{$\pm$ .019} & .302 \tiny{$\pm$ .036} & .439 $\uparrow$\tiny{$\pm$ .041} \\
\midrule
CocoaPPL & .626 \tiny{$\pm$ .022} & .410 \tiny{$\pm$ .03} & .354 \tiny{$\pm$ .024} & .278 \tiny{$\pm$ .024} & .299 \tiny{$\pm$ .038} & .413 \tiny{$\pm$ .056} \\
CocoaPPL + beamsearch & .653 $\uparrow$\tiny{$\pm$ .018} & \underline{.427} $\uparrow$\tiny{$\pm$ .032} & .356 $\uparrow$\tiny{$\pm$ .021} & .334 $\uparrow$\tiny{$\pm$ .018} & .285 \tiny{$\pm$ .04} & .419 $\uparrow$\tiny{$\pm$ .056} \\
\bottomrule
\end{tabular}
    }
    \label{tab:qa_llama8bit}
  \end{table}

  \begin{table}[h]
    \caption{PRR ($\uparrow$ is better) for 6 datasets with \qwenb. For each dataset, the top-1 method is \textbf{bold} and the second-best is \underline{underlined}. Beam-guided and probability-weighted variants are marked with $\uparrow$ when they improve over their multinomial-sampling baseline.}
    \centering
    \small
    \resizebox{\textwidth}{!}{
      \begin{tabular}{lcccccc}
\toprule
UQ Method & TriviaQA & \multirowcell{Web\\Questions} & CoQA & HotpotQA & \multirowcell{Common\\senceQA} & \multirowcell{ARC-\\Challenge} \\
\midrule
 \rowcolor{gray!20}
 \multicolumn{7}{c}{\textit{Baseline UQ methods}} \\
\midrule
Prob & .617 \tiny{$\pm$ .017} & .449 \tiny{$\pm$ .025} & .267 \tiny{$\pm$ .025} & .111 \tiny{$\pm$ .033} & .337 \tiny{$\pm$ .039} & .475 \tiny{$\pm$ .085} \\
MTE & .602 \tiny{$\pm$ .022} & .409 \tiny{$\pm$ .027} & .267 \tiny{$\pm$ .023} & .279 \tiny{$\pm$ .023} & \textbf{.443} \tiny{$\pm$ .044} & .444 \tiny{$\pm$ .077} \\
Perplexity & .597 \tiny{$\pm$ .018} & .426 \tiny{$\pm$ .028} & .278 \tiny{$\pm$ .023} & .256 \tiny{$\pm$ .026} & .294 \tiny{$\pm$ .045} & .381 \tiny{$\pm$ .065} \\
CCP & .640 \tiny{$\pm$ .018} & .406 \tiny{$\pm$ .028} & .213 \tiny{$\pm$ .028} & .153 \tiny{$\pm$ .025} & .296 \tiny{$\pm$ .048} & .421 \tiny{$\pm$ .09} \\
SAR & .617 \tiny{$\pm$ .023} & .457 \tiny{$\pm$ .023} & .323 \tiny{$\pm$ .03} & .243 \tiny{$\pm$ .028} & .220 \tiny{$\pm$ .042} & .317 \tiny{$\pm$ .066} \\
P(True) & .322 \tiny{$\pm$ .021} & .282 \tiny{$\pm$ .025} & .005 \tiny{$\pm$ .031} & .168 \tiny{$\pm$ .024} & -.043 \tiny{$\pm$ .045} & -.074 \tiny{$\pm$ .069} \\
Semantic Entropy & .549 \tiny{$\pm$ .018} & .411 \tiny{$\pm$ .02} & .247 \tiny{$\pm$ .025} & .173 \tiny{$\pm$ .023} & .230 \tiny{$\pm$ .026} & .305 \tiny{$\pm$ .058} \\
Lexical Similarity & .595 \tiny{$\pm$ .023} & .430 \tiny{$\pm$ .024} & .338 \tiny{$\pm$ .019} & .310 \tiny{$\pm$ .025} & .367 \tiny{$\pm$ .042} & .508 \tiny{$\pm$ .076} \\
EigValLaplacian & .602 \tiny{$\pm$ .015} & .423 \tiny{$\pm$ .027} & .301 \tiny{$\pm$ .027} & .284 \tiny{$\pm$ .028} & .349 \tiny{$\pm$ .032} & .475 \tiny{$\pm$ .081} \\
NumSemSets & .593 \tiny{$\pm$ .016} & .403 \tiny{$\pm$ .029} & .268 \tiny{$\pm$ .024} & .250 \tiny{$\pm$ .023} & .311 \tiny{$\pm$ .039} & .367 \tiny{$\pm$ .069} \\
\midrule
 \rowcolor{gray!20}
 \multicolumn{7}{c}{\textit{Consistency-based UQ: multinomial vs. beamsearch versions}} \\
\midrule
Dissimilarity & .668 \tiny{$\pm$ .014} & .462 \tiny{$\pm$ .024} & \underline{.406} \tiny{$\pm$ .023} & \textbf{.531} \tiny{$\pm$ .017} & .315 \tiny{$\pm$ .038} & .476 \tiny{$\pm$ .086} \\
Dissimilarity + beamsearch & \textbf{.680} $\uparrow$\tiny{$\pm$ .019} & .484 $\uparrow$\tiny{$\pm$ .024} & \textbf{.409} $\uparrow$\tiny{$\pm$ .03} & \underline{.504} \tiny{$\pm$ .019} & .335 $\uparrow$\tiny{$\pm$ .044} & .457 \tiny{$\pm$ .088} \\
\midrule
Eccentricity & .615 \tiny{$\pm$ .016} & .416 \tiny{$\pm$ .023} & .320 \tiny{$\pm$ .022} & .319 \tiny{$\pm$ .024} & .266 \tiny{$\pm$ .053} & .440 \tiny{$\pm$ .068} \\
Eccentricity + beamsearch & .640 $\uparrow$\tiny{$\pm$ .013} & .437 $\uparrow$\tiny{$\pm$ .025} & .368 $\uparrow$\tiny{$\pm$ .02} & .407 $\uparrow$\tiny{$\pm$ .026} & .243 \tiny{$\pm$ .04} & .366 \tiny{$\pm$ .072} \\
\midrule
EigVecDissimilarity & .628 \tiny{$\pm$ .014} & .454 \tiny{$\pm$ .028} & .325 \tiny{$\pm$ .026} & .314 \tiny{$\pm$ .027} & .373 \tiny{$\pm$ .035} & .456 \tiny{$\pm$ .071} \\
EigVecDissimilarity + beamsearch & .660 $\uparrow$\tiny{$\pm$ .016} & .460 $\uparrow$\tiny{$\pm$ .024} & .380 $\uparrow$\tiny{$\pm$ .025} & .394 $\uparrow$\tiny{$\pm$ .025} & .353 \tiny{$\pm$ .045} & .453 \tiny{$\pm$ .086} \\
\midrule
CocoaMSP & .667 \tiny{$\pm$ .019} & \underline{.492} \tiny{$\pm$ .019} & .385 \tiny{$\pm$ .025} & .320 \tiny{$\pm$ .025} & .378 \tiny{$\pm$ .028} & \textbf{.523} \tiny{$\pm$ .071} \\
CocoaMSP + beamsearch & \underline{.678} $\uparrow$\tiny{$\pm$ .018} & \textbf{.498} $\uparrow$\tiny{$\pm$ .028} & .391 $\uparrow$\tiny{$\pm$ .02} & .378 $\uparrow$\tiny{$\pm$ .029} & \underline{.385} $\uparrow$\tiny{$\pm$ .037} & \underline{.510} \tiny{$\pm$ .079} \\
\midrule
CocoaPPL & .665 \tiny{$\pm$ .015} & .478 \tiny{$\pm$ .019} & .388 \tiny{$\pm$ .035} & .397 \tiny{$\pm$ .024} & .353 \tiny{$\pm$ .038} & .484 \tiny{$\pm$ .081} \\
CocoaPPL + beamsearch & .667 $\uparrow$\tiny{$\pm$ .016} & .486 $\uparrow$\tiny{$\pm$ .021} & .387 \tiny{$\pm$ .036} & .437 $\uparrow$\tiny{$\pm$ .026} & .339 \tiny{$\pm$ .044} & .450 \tiny{$\pm$ .06} \\
\bottomrule
\end{tabular}
    }
    \label{tab:qa_qwen8b}
  \end{table}

  \begin{table}[h]
    \caption{PRR ($\uparrow$ is better) for 6 datasets with \qwenit. For each dataset, the top-1 method is \textbf{bold} and the second-best is \underline{underlined}. Beam-guided and probability-weighted variants are marked with $\uparrow$ when they improve over their multinomial-sampling baseline.}
    \centering
    \small
    \resizebox{\textwidth}{!}{
      \begin{tabular}{lcccccc}
\toprule
UQ Method & TriviaQA & \multirowcell{Web\\Questions} & CoQA & HotpotQA & \multirowcell{Common\\senceQA} & \multirowcell{ARC-\\Challenge} \\
\midrule
 \rowcolor{gray!20}
 \multicolumn{7}{c}{\textit{Baseline UQ methods}} \\
\midrule
Prob & .564 \tiny{$\pm$ .017} & .353 \tiny{$\pm$ .032} & .215 \tiny{$\pm$ .02} & .250 \tiny{$\pm$ .026} & .174 \tiny{$\pm$ .034} & .181 \tiny{$\pm$ .078} \\
MTE & .564 \tiny{$\pm$ .018} & .345 \tiny{$\pm$ .028} & .164 \tiny{$\pm$ .025} & .251 \tiny{$\pm$ .028} & .183 \tiny{$\pm$ .03} & .272 \tiny{$\pm$ .095} \\
Perplexity & .491 \tiny{$\pm$ .023} & .341 \tiny{$\pm$ .036} & .169 \tiny{$\pm$ .026} & .250 \tiny{$\pm$ .028} & .175 \tiny{$\pm$ .037} & .229 \tiny{$\pm$ .058} \\
CCP & .563 \tiny{$\pm$ .02} & .383 \tiny{$\pm$ .029} & .169 \tiny{$\pm$ .018} & .258 \tiny{$\pm$ .029} & .173 \tiny{$\pm$ .034} & .202 \tiny{$\pm$ .068} \\
SAR & .590 \tiny{$\pm$ .016} & .425 \tiny{$\pm$ .036} & .146 \tiny{$\pm$ .026} & .159 \tiny{$\pm$ .029} & .201 \tiny{$\pm$ .033} & .233 \tiny{$\pm$ .051} \\
P(True) & -.105 \tiny{$\pm$ .023} & -.222 \tiny{$\pm$ .035} & -.126 \tiny{$\pm$ .017} & .018 \tiny{$\pm$ .021} & -.083 \tiny{$\pm$ .03} & -.164 \tiny{$\pm$ .071} \\
Semantic Entropy & .597 \tiny{$\pm$ .016} & .404 \tiny{$\pm$ .034} & .214 \tiny{$\pm$ .022} & .231 \tiny{$\pm$ .026} & .174 \tiny{$\pm$ .041} & .176 \tiny{$\pm$ .08} \\
Lexical Similarity & .530 \tiny{$\pm$ .023} & .425 \tiny{$\pm$ .029} & .193 \tiny{$\pm$ .031} & .101 \tiny{$\pm$ .026} & .121 \tiny{$\pm$ .039} & .053 \tiny{$\pm$ .06} \\
EigValLaplacian & .626 \tiny{$\pm$ .015} & .417 \tiny{$\pm$ .04} & .196 \tiny{$\pm$ .026} & .083 \tiny{$\pm$ .024} & .134 \tiny{$\pm$ .031} & .134 \tiny{$\pm$ .066} \\
NumSemSets & .608 \tiny{$\pm$ .021} & \underline{.437} \tiny{$\pm$ .036} & .110 \tiny{$\pm$ .019} & .096 \tiny{$\pm$ .024} & .113 \tiny{$\pm$ .041} & .154 \tiny{$\pm$ .065} \\
\midrule
 \rowcolor{gray!20}
 \multicolumn{7}{c}{\textit{Consistency-based UQ: multinomial vs. beamsearch versions}} \\
\midrule
Dissimilarity & .588 \tiny{$\pm$ .017} & .382 \tiny{$\pm$ .03} & .165 \tiny{$\pm$ .02} & .187 \tiny{$\pm$ .025} & \textbf{.246} \tiny{$\pm$ .038} & \textbf{.394} \tiny{$\pm$ .072} \\
Dissimilarity + beamsearch & \underline{.637} $\uparrow$\tiny{$\pm$ .018} & .386 $\uparrow$\tiny{$\pm$ .026} & .269 $\uparrow$\tiny{$\pm$ .019} & .264 $\uparrow$\tiny{$\pm$ .026} & .213 \tiny{$\pm$ .031} & \underline{.362} \tiny{$\pm$ .083} \\
\midrule
Eccentricity & .565 \tiny{$\pm$ .019} & .367 \tiny{$\pm$ .034} & .167 \tiny{$\pm$ .025} & .125 \tiny{$\pm$ .023} & .150 \tiny{$\pm$ .026} & .132 \tiny{$\pm$ .078} \\
Eccentricity + beamsearch & .600 $\uparrow$\tiny{$\pm$ .016} & .392 $\uparrow$\tiny{$\pm$ .034} & \underline{.288} $\uparrow$\tiny{$\pm$ .029} & \underline{.291} $\uparrow$\tiny{$\pm$ .022} & .211 $\uparrow$\tiny{$\pm$ .035} & .285 $\uparrow$\tiny{$\pm$ .084} \\
\midrule
EigVecDissimilarity & .590 \tiny{$\pm$ .024} & .385 \tiny{$\pm$ .031} & .169 \tiny{$\pm$ .026} & .121 \tiny{$\pm$ .032} & .143 \tiny{$\pm$ .033} & .131 \tiny{$\pm$ .066} \\
EigVecDissimilarity + beamsearch & \textbf{.645} $\uparrow$\tiny{$\pm$ .016} & \textbf{.439} $\uparrow$\tiny{$\pm$ .032} & \textbf{.328} $\uparrow$\tiny{$\pm$ .019} & \textbf{.297} $\uparrow$\tiny{$\pm$ .017} & \underline{.242} $\uparrow$\tiny{$\pm$ .029} & .306 $\uparrow$\tiny{$\pm$ .058} \\
\midrule
CocoaMSP & .607 \tiny{$\pm$ .015} & .394 \tiny{$\pm$ .03} & .204 \tiny{$\pm$ .016} & .272 \tiny{$\pm$ .023} & .230 \tiny{$\pm$ .042} & .298 \tiny{$\pm$ .061} \\
CocoaMSP + beamsearch & .635 $\uparrow$\tiny{$\pm$ .02} & .404 $\uparrow$\tiny{$\pm$ .024} & .263 $\uparrow$\tiny{$\pm$ .023} & .282 $\uparrow$\tiny{$\pm$ .025} & .206 \tiny{$\pm$ .029} & .290 \tiny{$\pm$ .061} \\
\midrule
CocoaPPL & .581 \tiny{$\pm$ .02} & .389 \tiny{$\pm$ .032} & .179 \tiny{$\pm$ .024} & .272 \tiny{$\pm$ .022} & .232 \tiny{$\pm$ .032} & .309 \tiny{$\pm$ .082} \\
CocoaPPL + beamsearch & .609 $\uparrow$\tiny{$\pm$ .02} & .395 $\uparrow$\tiny{$\pm$ .031} & .233 $\uparrow$\tiny{$\pm$ .025} & .282 $\uparrow$\tiny{$\pm$ .026} & .207 \tiny{$\pm$ .03} & .299 \tiny{$\pm$ .084} \\
\bottomrule
\end{tabular}
    }
    \label{tab:qa_qwen8bit}
  \end{table}

\clearpage 

\section{Detailed Description of Uncertainty Quantification Methods}
\label{sec:appendix_methods}
  In this section, we describe the uncertainty quantification methods used in our experiments.

\textbf{Sequence Probability (Prob)} is the most straightforward approach to uncertainty quantification. We define it formally as the negative log-probability of the generating sequence:
  \begin{equation}
    U_{\text{SP}}(\yv \mid \xv) = - \log P(\yv \mid \xv).
    \label{eq:msp}
  \end{equation}

\textbf{Mean Token Entropy (MTE)} measures an average entropy of tokens in a sequence: 
  \begin{equation}
    U_{\text{MTE}}(\yv \mid \xv) = \frac{1}{L} \sum_{l = 1}^L \HC (y_l \mid \yv_{<l}, \xv),
    \label{eq:entropy}
  \end{equation}

  where $\HC(y_l \mid \yv_{<l}, \xv) = -\sum_{v} P(y_l = v \mid \yv_{<l}, \xv) \log P(y_l = v \mid \yv_{<l}, \xv)$.

\textbf{Perplexity} computes negative average log-likelihood of tokens in a sequence: 
  \begin{equation}
    U_{\text{PPL}}(\yv \mid \xv) = -\frac{1}{L} \log P(\yv \mid \xv),
    \label{eq:ppl}
  \end{equation}
  
\textbf{Claim Conditioned Probability (CCP)}, introduced in~\citep{fadeeva-etal-2024-fact}, measures uncertainty on a claim level by perturbing claim's tokens with alternative generations: 
  \begin{equation}
    U_{\text{CCP}}(C \mid \xv) = 1 - \prod_{j \in C} \text{CCP}(y_j \mid y_{<j}, \xv).
  \end{equation}

  Where $\text{CCP}(y_j \mid \yv_{<j}, \xv) =
  \frac{
  \sum_{k : \text{NLI}(y_j^k, y_j) = 'e'} P(y_j^k \mid \yv_{<j}, \xv)
  }{
  \sum_{k : \text{NLI}(y_j^k, y_j) \in \{'e', 'c'\}} P(y_j^k \mid \yv_{<j}, \xv)
  }$

\textbf{Shifting Attention to Relevance (SAR)} is a method combining TokenSAR and SentenceSAR, as introduced by~\citet{duan-etal-2024-shifting}. SentenceSAR is defined as follows:
  \begin{equation}
    U_\mathrm{SentSAR}(\xv) = -\frac{1}{M} \sum_{i = 1}^M \log \Bigl(p(\yv^{(i)} \mid \xv) + \frac{1}{t} \mathrm{R}_S (\yv^{(i)}, \xv)\Bigr),
  \end{equation}

  Here, $\mathrm{R}_S (\yv^{(j)}, \xv) \! = \sum_{k \neq j} s\bigl(\yv^{(j)}, \yv^{(k)}\bigr) p\bigl(\yv^{(k)} \mid \xv \bigr)$. To obtain SAR score, the generative probability $p(\yv \mid \xv)$ is replaced with relevance-reweighted probability on a sequence level.
  \textit{TokenSAR} is defined as:
  \begin{equation}    
    U_{\text{TokenSAR}}(\xv)  =
    - \sum_{l=1}^{L} \tilde{R}_T(y_l, \yv, \xv) \log P(y_l \mid \yv_{<l}, \xv),
  \label{eq:tokensar}
  \end{equation}
  where $R_T(\cdot)$ denotes some token relevance function and relevance weight for token $y_l$ is given by $\tilde{R}_T(y_k, \yv, \xv) = \frac{R_T(y_k, \yv, \xv)}{\sum_{l=1}^L R_T(y_l, \yv, \xv)}$ .

\textbf{P(True)}, introduced in~\citep{kadavath2022language}, evaluates the confidence in a generation by asking the model the original question and answer, then asking if it is true or false. We then use the negative log-probability of the token ``True'' as an uncertainty score.

  
\textbf{Lexical Similarity}, introduced in~\citep{fomicheva-etal-2020-unsupervised}, measures average pairwise similarity between $M$ sampled generations using some similarity function $s(\yv, \yv')$:  
  \begin{equation}
    U_{\text{LSRL}}(\xv) = 1 - \frac{2}{M(M-1)} \sum_{i < j} \text{s}\bigl(\yv^{(i)}, \yv^{(j)}\bigr).
  \label{eq:lsrl}
  \end{equation} 

\textbf{Number of Semantic Sets}, introduced in~\citep{lin2023generating}, estimates how many distinct meanings the model produces by clustering its outputs with an NLI model. Two answers are placed in the same cluster if they mutually entail each other more than they contradict and the final number of distinct clusters serves as an uncertainty score $U_{\text{NumSemSets}}$.

\textbf{Sum of Eigenvalues of Laplacian}, introduced in~\citep{lin2023generating}, constructs a similarity matrix among the sampled outputs and computes a uncertainty score from the eigenvalues of the Laplacian of that similarity matrix:
  \begin{equation}
    U_{\text{EigV}}(\xv) = \sum_{i = 1}^M \max\bigl(0, 1 - \lambda_i(\xv)\bigr).
  \end{equation}

\section{Computational Budget}
  All experiments were run on 2$\times$NVIDIA A100 (80 GB). Evaluating a single model across all six datasets took approximately 2 wall-clock days on this setup (4 GPU-days); with six models, this amounts to 12 wall-clock days (24 GPU-days). Additional ablations (sampling strategies, top-1 beam scoring, and other objectives) required a further 5 wall-clock days on the same hardware (10 GPU-days). In total, the study used about 34 GPU-days.

\section{The Usage of LLMs}
  In this study, large language models are examined primarily as the focus of analysis. For practical tasks such as programming and writing, we also make limited use of LLM-based assistants (e.g., ChatGPT) to support grammar correction and code debugging, with all usage carefully monitored by humans.

\end{document}